\documentclass[accepted]{uai2026}

\usepackage[american]{babel}

\usepackage{natbib}
\usepackage{mathtools}
\usepackage{booktabs}
\usepackage{tikz}
\usepackage{tikz-cd}
\usepackage[dvipsnames]{xcolor}
\usepackage{tikz-3dplot}
\usetikzlibrary{calc,arrows.meta}
\usepackage{shortcuts}
\usepackage{booktabs}
\usepackage{amsfonts}
\usepackage{nicefrac}
\usepackage{microtype}
\usepackage{multirow}
\usepackage{booktabs}
\usepackage{wrapfig}
\usepackage{algorithm}
\usepackage{algpseudocode}
\usepackage{makecell}
\usepackage{comment}
\usepackage{graphicx}
\usepackage{amsthm}
\usepackage{caption}

\definecolor{mydarkblue}{rgb}{0,0.08,0.45}
\definecolor{theoblue}{rgb}{0.19,0.15,0.65}
\hypersetup{colorlinks=true,
    linkcolor=theoblue,
    citecolor=theoblue,
    filecolor=theoblue,
    urlcolor=theoblue}

\newcolumntype{L}[1]{>{\raggedright\let\newline\\\arraybackslash\hspace{0pt}}m{#1}}
\newcolumntype{C}[1]{>{\centering\let\newline\\\arraybackslash\hspace{0pt}}m{#1}}
\newcolumntype{R}[1]{>{\raggedleft\let\newline\\\arraybackslash\hspace{0pt}}m{#1}}

\newtheorem{innercustomthm}{Theorem}
\newenvironment{customthm}[1]
  {\renewcommand\theinnercustomthm{#1}\innercustomthm}
  {\endinnercustomthm}

\newtheorem{innercustomlem}{Lemma}
\newenvironment{customlem}[1]
  {\renewcommand\theinnercustomlem{#1}\innercustomlem}
  {\endinnercustomlem}

\newtheorem{innercustomdef}{Definition}
\newenvironment{customdef}[1]
  {\renewcommand\theinnercustomdef{#1}\innercustomdef}
  {\endinnercustomdef}

\theoremstyle{plain}
\newtheorem{example}{Example}

\DeclareRobustCommand{\parhead}[1]{\textbf{#1}~}

\title{Unsupervised Causal Abstractions Discovery}

\author[1,2]{\href{mailto:<theo.saulus@mila.quebec>?Subject=About Unsupervised Causal Abstractions Discovery}{Théo~Saulus}{}}
\author[1,2,3]{Simon~Lacoste-Julien}
\author[1,2,3]{Dhanya~Sridhar}
\affil[1]{
    Mila - Quebec AI Institute
}
\affil[2]{
    Université de Montréal
}
\affil[3]{
    Canada CIFAR AI Chair
}

\begin{document}
\maketitle

\begin{abstract}
Causal abstractions formalize when a high-level structural causal model (SCM) captures the interventional behavior of a lower-level SCM.
Existing applications of this notion largely follow a hypothesis-testing paradigm: an expert proposes a candidate high-level model and then evaluates if the low-level system implements it. We study the complementary problem of learning a high-level model directly from low-level measurements. Our contributions leverage hypotheses from low-rank causal discovery, and can be summarized as follows: (1) we show that observations generated by a low-rank graph induce latents that form a causal abstraction, (2) we provide identifiability results about these latents, and (3) we propose a practical objective to learn this high-level SCM.
\end{abstract}

\section{Introduction}\label{sec:intro}

Complex systems can be described at different levels of abstraction: from low-level measurements about the system, one can form high-level summaries of its behavior. For example, we can record the neural activity of an individual performing a given task, and group the neurons by the stimuli they respond to, yielding a coarse description in terms of functional groups rather than individual neurons.

However, many scientific queries are about predicting the effects of interventions, such as "what happens if we activate this part of the brain?". Addressing such questions requires \emph{causal} abstractions \citep{beckersAbstractingCausalModels2019,rubenstein2017causal}:
put informally for now, a causal abstraction encodes at the high-level the effects of some interventions of interest on low-level variables, satisfying a form of consistency.

In mechanistic interpretability, this viewpoint motivates hypothesis testing methods where one proposes a candidate high-level model and checks whether it is implemented by a deep neural network (DNN) \citep{geigerCausalAbstractionsNeural2021, geigerFindingAlignmentsInterpretable2024, wu2023interpretability}. The underlying hope is that once we know the high-level computations the DNN implements, we can translate interventions on high-level variables into corresponding interventions on neurons to steer the model behavior.
However, enumerating over candidate abstractions to test is costly, and worse, not possible when we study systems that we do not fully understand (for example, a DNN trained to solve protein folding).
Motivated by these applications in interpretability and steering, this paper is about the unsupervised discovery of such causal abstractions.

Unsupervised objectives are typically under-specified and admit many equivalent solutions, thus requiring further constraints or assumptions.
In this work, we discover causal abstractions for a family of low-level systems whose graphs exhibit a low-rank structure.
Specifically, we consider factor directed acyclic graphs (factor DAGs, or f-DAGs), a family of models over low-level variables that are parameterized by a small number of latent factors, as introduced by \citet{lopezLargeScaleDifferentiableCausal2022}.
This low-rank assumption is well-suited to modeling cell or neural biology where many low-level variables like genes or neurons carry out similar functions. Likewise in DNNs, because they perform computations in a distributed way, neurons are also likely to cluster into groups that capture the same abstract ``concept'' in a higher-level computation graph.

The goal of \citet{lopezLargeScaleDifferentiableCausal2022} was to learn a causal graph over low-level variables.
They show that when observations come from a model that satisfies the f-DAG property, the DAG that best explains these low-level observations is likely to be unique and can be learned from data.
Our goal with this paper is not to propose a new algorithm, but to prove that this causal discovery algorithm identifies a causal model over latent factors that is a causal abstraction of the low-level system. Additionally, we show that some assumptions allow us to make these latent factors identifiable up to trivial equivalence.
We call this the \emph{anchor assumption}.
Loosely speaking, it requires that for each latent factor, there exist low-level variables that uniquely influence and are influenced by this factor.
These anchor variables do not need to be known, but by ensuring that the solutions to Boolean matrix factorization satisfy the anchor assumption, we guarantee that the overall unsupervised objective only admits solutions that are simple reparameterizations of the true model over latent factors.
We validate the theory empirically by studying simulated settings where the target causal abstraction is known, and also study a fully unsupervised case where we qualitatively analyze abstractions learned from a DNN that solves an arithmetic task.

\section{Related work}

Causal abstractions formalize when a high-level SCM preserves the interventional behavior of a lower-level SCM through a state map and an intervention map \citep{rubenstein2017causal, beckersAbstractingCausalModels2019, beckers2019approximatecausalabstraction}. This formalism has been used in mechanistic interpretability to test whether a proposed high-level causal model is implemented by a neural network \citep{geigerCausalAbstractionsNeural2021, geigerFindingAlignmentsInterpretable2024, geigerCausalAbstractionTheoretical2025, wu2023interpretability}. Our work instead studies the complementary problem of discovering such a high-level model directly from low-level measurements. This follows the older motivation of causal feature learning, which seeks macro-causes and macro-effects from high-dimensional observations \citep{chalupkaMultiLevelCauseEffectSystems2015, chalupkaCausalFeatureLearning2017}, but uses the modern causal-abstraction formalism based on interventional consistency.

Several recent papers study learning causal abstractions rather than only testing expert-proposed ones. \citet{zennaro2023jointly, felekis2024causal, massidda2024learning, d2025causal} learn abstraction maps between low- and high-level causal models, but they all require data or distributional information from \emph{both} levels. In contrast, we discover the high-level latent SCM from low-level observations only. \citet{zhu2024unsupervised} are close to our work in their goal: they also consider unsupervised causal abstraction when the high-level variables are unknown. Their setting, however, relies on linear SCMs, shift interventions, linear state and intervention maps, and constructive abstractions, which implies that each low-level variable can only belong to one latent factor. In contrast, we identify high-level latent factors from low-level measurements through a low-rank hypothesis and weaker identifiability assumptions (the anchor condition). Targeted reduction of causal models \citep{kekictargeted} learns a reduced high-level causal model and a map from low-level shift interventions to high-level interventions from interventional data. However, it is target-centered: the high-level model is built around a known variable of interest, rather than discovering a general latent-factor SCM.

Our structural assumptions connect causal abstraction discovery to low-rank causal discovery and latent-variable learning. We build on factor DAGs \citet{lopezLargeScaleDifferentiableCausal2022}, where a low-rank adjacency structure is represented by latent factors mediating low-level causal relations. Low-rank causal discovery has also been studied as a structural restriction on causal graphs \citep{dong2022graphs, fang2023low}. Structural Causal Bottleneck Models \citep{bing2026structural}, are closely related in that they posit that causal effects between high-dimensional variables can factor through low-dimensional bottlenecks, and they provide identifiability results for such bottlenecks. However, their framework treats bottlenecks primarily as task-specific sufficient summaries for estimating causal effects in a known graph, rather than as a learned high-level SCM equipped with explicit state and intervention maps satisfying a causal-abstraction criterion.

Our anchor assumption, inspired by topic modeling \citep{arora2012learningtopicmodels}, can be related to pure-child assumptions in causal representation learning. For example, \citet{xie2020generalized} identify latent causal graphs in linear non-Gaussian measurement models using pure observed children of latent variables; \citet{markham2023neuro} introduce neuro-causal factor analysis, which learns latent common causes of observed measurements. These resemble our out-anchor condition, but our setting differs in that our observed variables also mediate latent factors relations. In different setups, other works about the identifiability of deep generative models \citep{moran2022identifiable, lee2026deep} or about the identifiability of Boolean matrix factorization \citep{mironBooleanDecompositionBinary2021, desantisFactorizationBinaryMatrices2021} also resort to very similar assumptions.

\section{Background notions}
In this section, we define structural causal models \citep{pearl2009causality} and introduce their properties before formalizing causal abstractions.

\begin{definition}[Structural causal model]
A structural causal model (SCM) is a tuple $\mM = (\mV, \mU, \mF)$, where:
\begin{itemize}
    \item $\mV$ is a set of \emph{endogenous} variables taking values in \(\Val_\mV := \bigcup_{V\in \mV}\Val_V\). We denote by $\mbV \subset\mV$ a subset of these variables. Realizations of $V$ (respectively $\mbV$) are denoted $v$ (resp. $\mbv$).
    \item $\mU$ is a set of \emph{exogenous} variables (or \emph{context}), taking values in \(\Val_\mU := \bigcup_{U\in \mU}\Val_U\). We denote by $\mbU \subset\mU$ a subset of these variables. Realizations of $U$ (respectively $\mbU$) are denoted $u$ (resp. $\mbu$).
    \item for each $V\in \mV$, we define a \emph{mechanism} \(F_V \in \mF : \Val_{\Pa_V}\times \Val_\mU \to \Val_V,\) where $\Pa_V\subseteq \mV\setminus\{V\}$ is the set of \emph{endogenous parents} of $V$.
\end{itemize}
\end{definition}
Crucially, SCMs provide a formalism for interventions.

\begin{definition}[Intervention]\label{def:intervention}
    An \emph{intervention} $i$ is described by a support $\mbI\subseteq\mV$ and a family of functions $\{I_V\}_{V\in \mbI}$ such that for each $V\in\mbI$, $I_V : \Val_{\Pa_V}\times \Val_\mU \to \Val_V$ replaces the original mechanism $F_V$.
    An intervention is \emph{hard} if the functions $I_V$ are constant assignments, which we denote $i\in\Hard_\mbI$. Otherwise, it is \emph{soft}, which we denote $i\in\Soft_\mbI$.
\end{definition}
Following \citet{geigerCausalAbstractionTheoretical2025, eberhardt2006interventionsandcausalinference}, we generalize the notion of soft interventions:
\begin{definition}[Parametric interventions]\label{def:interventional}
    A \emph{parametric intervention} $i$ is described by a support $\mbI\subseteq\mV$ and a family of functions $\{I_V\}_{V\in \mbI}$ such that for each $V\in\mbI$, $I_V : \Soft_V \to \Soft_V$ replaces the original mechanism $F_V$ with $I_V\langle F_V\rangle$. We denote $i\in\Para_\mbI$.
\end{definition}
In words, parametric interventions\footnote{Called \emph{interventionals} by \citet{geigerCausalAbstractionTheoretical2025}.} depend on the original mechanisms of the SCM to perform their replacement.
They extend the notion of soft interventions, which are simply ``constant'' parametric interventions.
In the following, we denote $\mI$ the set of admissible interventions on $\mM$, and we denote $\mM^i$ the SCM subject to the intervention $i\in\mI$.

Given a context assignment $\mbu\in\Val_\mU$, the SCM defines a system of equations
\(\forall V\in \mV, \ V = F_V\bigl(\mbv_{\Pa_V}, \mbu\bigr)\)
whose solution (when it exists) is an endogenous assignment $v\in\Val_V$.
We can use the graphical properties of an SCM to find its solutions \citep{halpern2000axiomatizing}.
\begin{definition}[Causal graph]
    The \emph{causal graph} of $\mM$ is the directed graph $G=(\mV,\mE)$ defined such that \[(V_1\to V_2)\in \mE \quad\iff\quad V_1\in \Pa_{V_2}\]
    The graph $G$ can be encoded by its adjacency matrix $A\in\{0,1\}^{n\times n}$ with $A_{ij}=1 \iff (V_i\to V_j)\in\mE$.
\end{definition}

\begin{definition}[$\Solve$ operator]
For an intervention $i\in\mI$ and a context $\mbu\in \Val_\mU$ we denote by \(\Solve(\mM^{i};\mbu)\in \Val_\mV\) a solution to the system of equations defined by the SCM $\mM^i$.
\end{definition}
\begin{assumption}[Strong acyclicity]\label{assu:strong_acyc}
    The causal graph of $\mM$ is independent of context $\mU$\footnote{This notion is sometimes known as \emph{strong recursivity} in the literature, e.g. in \citet{beckersAbstractingCausalModels2019}.}.
\end{assumption}
In the setting we study, we will always assume strong acyclicity. Thus, we evaluate the mechanisms in topological order to find the unique solution to the $\Solve$ operator.

Finally, we can extend SCMs to the probabilistic case.
\begin{definition}[Probabilistic causal models]\label{def:prob_causal_model}
    A probabilistic causal model is a tuple $(\mM, \Pr)$ where $\mM$ is a SCM and $\Pr$ is a probability distribution on contexts $\mU$.
\end{definition}
In this setting, we consider the induced distribution on $\Solve(\mM^{i};\mbu)$, following \citet{beckersAbstractingCausalModels2019}:
\begin{align*}
    \Pr_{\mM^i}(\mbv):= \Pr(\{\mbu : \Solve(\mM^{i};\mbu) = \mbv\})
\end{align*}

\parhead{Causal abstractions.} A causal abstraction relates a low-level SCM $\mL=(\mX,\mU_L,\mF_L)$ and a high-level SCM $\mH = (\mZ,\mU_H,\mF_H)$ (typically containing much fewer variables), such that they both describe the same causal phenomena \citep{rubenstein2017causal}.
\begin{definition}[Exact transformations]\label{def:proba_exact_transfo}
    Let $(\mL,\Pr_L)$ and $(\mH,\Pr_H)$ be two probabilistic causal models.
    Let $\tau : \Val_\mX \to \Val_\mZ$ and $\omega : \mI_L \to \mI_H$ be two partial surjective functions where $\omega$ is order preserving.

    $(\mH,\Pr_H)$ is an \emph{exact $(\tau - \omega)$-transformation} of $(\mL,\Pr_L)$ if
    \[\forall i\in \mI_L, \quad \Pr_{\mH^{\omega(i)}} = \tau(\Pr_{\mL^i}),\]
    where $\tau(\Pr_{\mL}) = \Pr_\mL(\{\mbx : \tau(\mbx)=\mbz\})$ denotes the pushforward distribution of $\Pr_\mL$ by $\tau$.
\end{definition}
Intuitively, for a causal abstraction to faithfully capture low-level effects at the high-level, applying $i$ at the low-level and translating outcomes via $\tau$ matches applying $\omega(i)$ directly at the high-level, which can be represented by the following commutative diagram:
\[
\begin{tikzcd}[column sep=large, row sep=small]
i \arrow[r, "\omega"] \arrow[d] & \omega(i) \arrow[d] \\
\Pr_{\mL^i} \arrow[r, "\tau"] & \Pr_{\mH^{\omega(i)}}
\end{tikzcd}
\]
We refer to $\tau$ as the \emph{state map}, and $\omega$ as the \emph{intervention map}.

\citet{beckersAbstractingCausalModels2019} observe that the notion of exact transformation can be too weak because an unfortunate choice of $\Pr_L,\Pr_H$ may hide mechanistic mismatches. They therefore introduce uniform transformations:

\begin{definition}[Uniform transformations]\label{def:uniform_transfo}
    $\mH$ is a \emph{uniform $(\tau - \omega)$-transformation} of $\mL$ if for \emph{all} distributions $\Pr_L$ on the low-level context $\mU_L$, there exists a distribution $\Pr_H$ such that $(\mH,\Pr_H)$ is an exact $(\tau - \omega)$-transformation of $(\mL,\Pr_L)$.
\end{definition}

\section{Towards causal abstractions discovery}
Most current applications of causal abstractions are related to neural network interpretability: an expert proposes a candidate high-level model $\mH$ about what logic the model might implement to solve a given task, and then tests whether the computational graph $\mL$ of a neural network implements it \citep{geigerCausalAbstractionsNeural2021, geigerFindingAlignmentsInterpretable2024, wu2023interpretability}.

Intuitively, we view $\mH$ as a model of latents that summarize the computations done by the low-level computational graph $\mL$: the value of each latent $Z\in \mZ$ can be computed with a subset of variables $\mbX\subseteq \mX$, and in turn there is another subset of variables $\mbX'\subseteq\mX$ which can essentially be determined by the value of this $Z$. In other words, $Z$ summarizes \textit{all there is to know} about the causal relations $\mbX \rightarrow \mbX'$.

Proposing a candidate high-level model $\mH$ is difficult, because it requires intuition or expertise about the task at hand: for instance, we can propose a logic for two-digit addition because we master this task.
In the following, we develop assumptions under which we can \emph{learn} a high-level model $\mH$ directly from low-level measurements by minimizing a differentiable objective.

In particular, we recall from \citet{lopezLargeScaleDifferentiableCausal2022} that for an SCM $\mM$ whose graph $G$ is low-rank, one can naturally construct a latent SCM $\mM^*$.
We prove that this latent SCM $\mM^*$ is a uniform transformation of $\mM$.
Since causal discovery tools allow us to learn the graph and mechanisms of $\mM$ from data, we propose later a practical algorithm for unsupervised causal abstraction discovery inspired by it.

\parhead{Factor directed acyclic graphs.} Consider the causal graph $G$ between observed variables $\mX$, encoded by an adjacency matrix $A\in\{0,1\}^{n\times n}$, where $n$ is the number of variables.
In causal discovery, where we seek to learn the causal graph $G$ from observations, a common structural assumption to restrict the space of candidate graphs is the low-rankness of the adjacency matrix $A$ \citep{dong2022graphs, fang2023low}.
More specifically, assuming that $A$ has Boolean rank $m<n$ means that there exists $Q,R \in \{0,1\}^{n\times m}$ such that
\[
A \ = \ Q \diamond R^\top
\ := \  \bigvee_{k=1}^{m} \bigl(Q_{*k} \wedge R_{*k}\bigr),
\]
where $\diamond$ denotes the Boolean matrix product, illustrated in Figure \ref{fig:QR_product_schema}.
Intuitively, this low-rank decomposition captures the idea that many low-level variables, like genes in a regulatory network \citep{alizadeh2000distinct} or neurons in the brain \citep{yeo2011organization}, share the same parents and children due to higher-order functions. Matrix factorization methods have previously been used to discover such higher-level functions \citep{newman2015robust}.

This factorization admits a graphical interpretation via \emph{factor directed graphs}.
\begin{definition}[Factor SCM]
    An \emph{f-SCM} is an SCM $\mM=(\mV,\mU,\mF)$ such that $\mV$ is partitioned into two subsets: observed \emph{variables} $\mX = \{X_1,\ldots,X_n\}$ and unobserved latent \emph{factors} $\mZ=\{Z_1,\ldots,Z_m\}$. Additionally, we suppose that the causal graph $G$ of an f-SCM is bipartite, directed, and strongly acyclic, and we denote it an \emph{f-DAG}.
\end{definition}

In particular, the edges in $G$ may only link variables to factors or factors to variables:
\[
G = (\mV = \mX \cup \mZ,\mE_\mV),
\quad
\mE_\mV \subset (\mX\times \mZ) \cup (\mZ\times \mX).
\]
Relating back to Boolean matrix factorization, the matrix $Q$ captures the adjacency matrix of directed edges from low-level variables to factors, i.e., $\mbX \rightarrow\mbZ$, while $R$ captures the directed edges from factors $\mbZ$ back to low-level variables $\mbX$. Thus, $Q$ and $R$ fully characterize the f-DAG.

Further, the graph $G$ can be collapsed into two distinct (``mono-partite'') graphs: one over variables $G_X$ with $$X_i\to X_j \iff \exists Z\in \mZ, \ X_i\to Z\to X_j \text{ in $G$,}$$ and one over factors $G_Z$ defined analogously.

From now on, we will distinguish 3 different notions of parents in an f-DAG: for any $Z\in\mZ,\ X\in\mX$,
$$\begin{aligned}
    X'\in\Pa_Z &\iff X'\rightarrow Z \text{ in } G \\
    Z'\in\Pa_X &\iff Z'\rightarrow X \text{ in } G \\
    X'\in\Pa^L_X &\iff X'\rightarrow X \text{ in } G_L \\
    Z'\in\Pa^H_Z &\iff Z'\rightarrow Z \text{ in } G_H \\
\end{aligned}$$
\parhead{Key idea.} The notion of f-DAGs seems to meet our intuitive desiderata for causal abstractions discovery in that they summarize the way low-level variables interact with fewer factors, and they are learnable from data with low-rank causal discovery.

The following sections formalize these two points: we show how the factor-level description can indeed be cast as a uniform transformation of the variable-level description, and we identify conditions under which the factor representation is identifiable.

\section{Theoretical results}
This section formalizes the previous insights with three results that justify the use of low-rank causal discovery to discover causal abstractions: (i) first, we show that if we knew the underlying f-SCM that generates low-level observations, we could derive a causal abstraction of the low-level model; (ii) since we do not know the underlying f-SCM, we establish assumptions under which Boolean matrix factorization can recover the latent factor variables up to simple reparameterizations; (iii) completing the story, we derive assumptions under which the full high-level SCM can be recovered up to simple reparameterizations.
We state our main results, and sketch some of the proofs; the full derivations are available in Appendix \ref{sec:proofs_theory}.

\begin{figure}
\centering
\resizebox{\columnwidth}{!}{
\newcommand{\tdcylcol}[6]{
    \path (1,0,0);
    \pgfgetlastxy{\cylxx}{\cylxy}
    \path (0,1,0);
    \pgfgetlastxy{\cylyx}{\cylyy}
    \path (0,0,1);
    \pgfgetlastxy{\cylzx}{\cylzy}
    \pgfmathsetmacro{\cylt}{(\cylzy * \cylyx - \cylzx * \cylyy)/ (\cylzy * \cylxx - \cylzx * \cylxy)}
    \pgfmathsetmacro{\ang}{atan(\cylt)}
    \pgfmathsetmacro{\ct}{1/sqrt(1 + (\cylt)^2)}
    \pgfmathsetmacro{\st}{\cylt * \ct}

    \filldraw[fill=white, draw=#5, line width=1.4pt]
      (#3*\ct+#1,#3*\st+#2,0) -- ++(0,0,-#4)
      arc[start angle=\ang,delta angle=-180,radius=#3] -- ++(0,0,#4)
      arc[start angle=\ang+180,delta angle=180,radius=#3];

    \filldraw[fill=#6, draw=#5, line width=1.6pt] (#1,#2,0) circle[radius=#3];
}

\tdplotsetmaincoords{30}{30}

\begin{tikzpicture}[
  tdplot_main_coords,
  transform shape,
  x=60pt, y=20pt,
  line cap=round, line join=round,
  >=Latex,
  var/.style={draw=black, fill=white, thick, circle, minimum size=13pt, inner sep=0pt, font=\small},
  fac/.style={draw=black, fill=white, thick, circle, minimum size=13pt, inner sep=0pt, font=\small},
  mechL/.style={->, very thick, draw=Ldraw},
  mechH/.style={->, very thick, draw=Hdraw},
  up/.style   ={->, very thick, draw=updraw},
  down/.style ={->, very thick, draw=downdraw},
]

\coordinate (Lc) at (0,0,0);
\coordinate (Hc) at (0,6,0);

\tdcylcol{0}{0}{2.55}{0.14}{Ldraw}{Lfill}      
\tdcylcol{0}{6}{2.10}{0.14}{Hdraw}{Hfill}      

\node[anchor=west, scale=1.05, text=Ldraw] at (-2.3, 2, 0) {$\mL$};
\node[anchor=west, scale=1.05, text=Hdraw] at (-2.1, 7.5, 0) {$\mH$};

\begin{scope}[shift={(Lc)}]
\node[var] (X1) at (-1.60, 0.85,0) {$X_{1}$};
\node[var] (X2) at (-1.70,-0.85,0) {$X_{2}$};
\node[var] (X3) at (-0.25,-1.45,0) {$X_{3}$};
\node[var] (X4) at ( 0.30, 0.15,0) {$X_{4}$};
\node[var] (X5) at ( 1.70, 0.85,0) {$X_{5}$};
\node[var] (X6) at ( 1.80,-0.80,0) {$X_{6}$};
\end{scope}
\begin{scope}[shift={(Hc)}]
\node[fac] (Z1) at (-1.05, 0.65,0) {$Z_{1}$};
\node[fac] (Z2) at ( 0.00, -0.45,0) {$Z_{2}$};
\node[fac] (Z3) at ( 1.15, 0.55,0) {$Z_{3}$};
\end{scope}
\draw[mechL] (X1) to[bend right=9] (X4);
\draw[mechL] (X2) to[bend right=8] (X4);

\draw[mechL] (X3) to[bend right=18] (X6);
\draw[mechL] (X4) to[bend left=10] (X5);
\draw[mechL] (X4) to[bend right=10] (X6);
\draw[mechL] (X6) to[bend right=10] (X5);

\draw[mechH] (Z1) to[bend left=10] (Z3);
\draw[mechH] (Z1) to[bend right=8] (Z2);
\draw[mechH] (Z2) to[bend right=8] (Z3);

\draw[up] (X1) to[out=85,in=-155,looseness=1.10] (Z1);
\draw[up] (X2) to[out=50,in=-120,looseness=1.10] (Z1);
\draw[up] (X3) to[out=90,in=-120,looseness=1.10] (Z2);
\draw[up] (X4) to[out=55,in=-90,looseness=1.10] (Z3);
\draw[up] (X4) to[out=90,in=-90,looseness=1.10] (Z2);
\draw[up] (X6) to[out=50,in=-10,looseness=1.05] (Z3);

\draw[down] (Z1) to[out=-80,in=125,looseness=1.15] (X4);
\draw[down] (Z2) to[out=-60,in=145,looseness=1.05] (X6);
\draw[down] (Z3) to[out=-35,in=90,looseness=1.10] (X5);

\end{tikzpicture}
}
\caption{Illustration of $\mH$ (in red), $\mL$ (in green), as well as the mechanisms $F_X$ (in blue) and $F_Z$ (in orange).}
\label{fig:HL_disks_schema}
\end{figure}

\subsection{f-SCMs build causal abstractions}
Let $\mM=(\mX\cup\mZ,\mU,\mF)$ be an f-SCM. Denote $G$ the associated f-DAG, and denote $G_L$ (resp. $G_H$) its collapse on the observed variables $\mX$ (resp. latent factors $\mZ$).
Furthermore, suppose that there is no direct link from exogenous variables $\mU$ to latent factors $\mZ$: intuitively, we think of factors as summarizing low-level measurements, and we may not want to consider that this summary stochastic.
In other words, $\mM$ admits the following mechanisms:
\begin{align*}
    &\forall X\in \mX, \ F_X : \Val_{\Pa_X} \times \Val_\mU \to \Val_X\\
    &\forall Z\in \mZ, \ F_Z : \Val_{\Pa_Z} \to \Val_Z
\end{align*}
where $\Pa_X \subseteq \mZ$ and $\Pa_Z \subseteq \mX$.

These mechanisms imply the following low-level mechanisms over $\mL=(\mX,\mU,\mF_L)$ and high-level mechanisms over $\mH=(\mZ,\mU,\mF_H)$ when collapsing on either type of node (see Figure \ref{fig:HL_disks_schema} for an illustration):
\begin{align*}
&\begin{aligned}
    F^{L}_X : \Val_{\Pa^L_X} \times \Val_\mU &\to \Val_X \\
    \mbx, \mbu &\mapsto F_X\Big(\{F_Z(\mbx_{\Pa_Z})\}_{Z\in \Pa_X},\mbu\Big) \\
\end{aligned}
\\
&\begin{aligned}
    F^{H}_Z : \Val_{\Pa^H_Z} \times \Val_\mU &\to \Val_Z \\
    \mbz, \mbu &\mapsto F_Z\Big(\{F_X(\mbz_{\Pa_X}, \mbu)\}_{X\in \Pa_Z}\Big) \\
\end{aligned}
\end{align*}
where $\Pa_X^L = \bigcup_{Z\in \Pa_X} \Pa_Z \subseteq \mX$ and $\Pa_Z^H = \bigcup_{X\in \Pa_Z} \Pa_X \subseteq \mZ$.
That is, we compose together how low-level variables define factors and how those factors define downstream low-level variables.
Note that $\mL$ and $\mH$ share here the same context $\mU$, coming from the f-SCM $\mM$.

\begin{example}
    As a concrete example, based on Figure \ref{fig:HL_disks_schema}, the mechanism $F^H_{Z_3}$ for the high-level variable $Z_3$ in the collapsed model $\mH$ is a function of $Z_1$ to $Z_2$ such that $F^H_{Z_3} = F_{Z_3}(F_{X_4}(Z_1), \ F_{X_6}(Z_2))$.
\end{example}

\parhead{Key idea.} The first result is to show that if we knew the f-SCM that generates observations of low-level variables, the corresponding collapsed SCM $\mH$ over the factors alone is a valid causal abstraction of the collapsed SCM $\mL$ over low-level variables alone. To derive this result, we will show that $\mH$ is an exact transformation of $\mL$. To show this property, we will first need to define all the relevant mappings from low-level to high-level.

\parhead{Intervention sets.}
We consider $\mI_L$ to be a subset of all possible hard interventions $\Hard_\mX$,
and $\mI_H$ a subset of parametric interventions $\Para_\mZ$ that we characterize below.

\parhead{State map.}
The state map $\tau : \Val_\mX \to \Val_\mZ$ reconstructs factors from variables using the underlying mechanisms $\mF$:
\begin{align*}
    \forall \mbx\in \Val_\mX,\ \forall Z\in \mZ, \quad (\tau(\mbx))_Z = F_Z((\mbx)_{\Pa_Z}).
\end{align*}
\parhead{Intervention map.}
The intervention map $\omega$ translates hard interventions on $\mX$ to parametric ones on $\mZ$:
$$
\begin{aligned}
    \omega \ :\ \mI_L\subseteq\Hard_\mX &\to \mI_H \subseteq \Para_\mZ \\
    i_L &\mapsto i_H
\end{aligned}
$$
More specifically, $i_L\in\mI_L$ is a hard intervention characterized by its support $\mbI_L\subseteq\mX$ and the replacement of some low-level mechanisms by some constant assignment:
$$\forall X\in\mbI_L, \ F_X^L(\cdot) \leftarrow C_X\in\Val_X$$
In the corresponding high-level system, $i_H\in\mI_H$ is a parametric intervention characterized by its support
$$\mbI_H := \bigcup_{X\in\mbI_L}\Ch_{X}$$ and the replacement of some high-level mechanisms: $\forall Z\in\mbI_H, \ F_Z^H \leftarrow I_Z^H\langle F_Z^H\rangle$, where for any $\mbz\in\Val_\mZ$ and $\mbu\in\Val_\mU$,
\begin{align*}
    I_Z^H\langle F_Z^H\rangle(\mbz_{\Pa_Z^H}, \mbu) := F_Z\Big(\{F^*_X(\mbz_{\Pa_X},\mbu)\}_{X\in \Pa_Z}\Big)
\end{align*}
where $F_X^*=\begin{cases} C_X & X\in\mbI_L \\ F_X & X\notin\mbI_L\end{cases}
$.

\begin{example}
    Continuing the previous example, a hard intervention $i\in\mI_L: X_4\leftarrow C_4$ will modify $F^H_{Z_3}$ by replacing it with $F^H_{Z_3} = F_{Z_3}(C_4, \ F_{X_6}(Z_2))$.
\end{example}

In words, $\mI_H$ contains the parametric interventions that are realizable given the current underlying mechanisms $\mF$ and all hard interventions $\mI_L$ allowed in $\mX$\footnote{A careful reader will see that our definition of $\mI_L$ considers that low-level hard interventions only disconnect edges $Z\to X$ from the perspective of the f-SCM $\mM$, not $X\to Z$ ones.}.

\begin{theorem}[Uniform transformation]\label{th:probabilistic_exact_transfo}
With $\tau$ and $\omega$ defined as above, $\mH$ is a uniform $(\tau - \omega)$-transformation of $\mL$.
\end{theorem}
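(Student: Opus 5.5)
Given an arbitrary distribution $\Pr_L$ on the low-level context $\mU$, I will take $\Pr_H := \Pr_L$. This is possible because $\mL$ and $\mH$ share the same context $\mU$. It then suffices to prove a deterministic, context-wise statement: for every $i_L\in\mI_L$ and every $\mbu\in\Val_\mU$,
\[
\tau\bigl(\Solve(\mL^{i_L};\mbu)\bigr) \;=\; \Solve(\mH^{\omega(i_L)};\mbu).
\]
Once this holds, the pushforward identity follows immediately:
\[
\tau(\Pr_{\mL^{i_L}})(\mbz) = \Pr_L\bigl(\{\mbu : \tau(\Solve(\mL^{i_L};\mbu))=\mbz\}\bigr) = \Pr_H\bigl(\{\mbu:\Solve(\mH^{\omega(i_L)};\mbu)=\mbz\}\bigr) = \Pr_{\mH^{\omega(i_L)}}(\mbz).
\]
Since $\Pr_L$ was arbitrary, this gives uniformity. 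Surjectivity and order preservation of $\omega$ hold by construction, because $\mI_H$ is defined as the image of $\omega$ and supports grow monotonically with $\mbI_L$. I would note these briefly.

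For the context-wise identity, I will go through the intervened f-SCM $\mM^{i_L}$ itself. In $\mM^{i_L}$, each $X\in\mbI_L$ receives the mechanism $C_X$, every other node keeps its mechanism, and the factors keep $F_Z$. By strong acyclicity (Assumption~\ref{assu:strong_acyc}), the f-DAG $G$ stays acyclic after cutting the edges $Z\to X$ for $X\in\mbI_L$, so $\mM^{i_L}$ has a unique solution $(\mbx^*,\mbz^*)$ obtained in topological order. I then show two claims.

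\emph{Claim (a):} $\mbx^*=\Solve(\mL^{i_L};\mbu)$. The proof is by induction along a topological order of $G_L$. For $X\notin\mbI_L$, we have $x^*_X = F_X(\{F_Z(\mbx^*_{\Pa_Z})\}_{Z\in\Pa_X},\mbu) = F^L_X(\mbx^*_{\Pa^L_X},\mbu)$. For $X\in\mbI_L$, both sides equal $C_X$. Uniqueness of solutions in $\mL^{i_L}$ then closes the argument.

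\emph{Claim (b):} $\mbz^*=\Solve(\mH^{\omega(i_L)};\mbu)$. The proof is by induction along $G_H$. We have $z^*_Z = F_Z(\{x^*_X\}_{X\in\Pa_Z})$ with $x^*_X = F^*_X(\mbz^*_{\Pa_X},\mbu)$, where $F^*_X$ is as in the definition of $\omega$. If $Z\in\mbI_H$, this expression is exactly $I^H_Z\langle F^H_Z\rangle(\mbz^*_{\Pa^H_Z},\mbu)$. If $Z\notin\mbI_H$, then no parent of $Z$ lies in $\mbI_L$, because $\mbI_H=\bigcup_{X\in\mbI_L}\Ch_X$. Hence $F^*_X=F_X$ on $\Pa_Z$, and the expression is the unmodified $F^H_Z$.

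Finally, $\tau(\mbx^*)_Z = F_Z(\mbx^*_{\Pa_Z}) = z^*_Z$ by the factor equations of $\mM^{i_L}$. Combining this with Claims (a) and (b) gives the context-wise identity.

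I expect the main obstacle to be the well-posedness bookkeeping rather than the algebra. First, the collapsed graphs $G_L$ and $G_H$ must be acyclic and must coincide with what the collapsed mechanisms actually depend on, so that $\Solve$ on $\mL^{i_L}$ and $\mH^{\omega(i_L)}$ is unique. I would derive this from the acyclicity of $G$: any cycle in $G_L$ or $G_H$ lifts to a cycle in $G$. Second, one must check that in $\mH^{\omega(i_L)}$ an intervened $X$ is correctly treated as the constant $C_X$ everywhere it feeds a factor, and not only for some children. This is exactly why the support is taken to be all of $\Ch_X$, and it is the case distinction in Claim (b) that I would write out most carefully.
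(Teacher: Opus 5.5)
Your proof is correct, and it reaches the paper's core identity through a different organization. The paper stays with the two collapsed models. It fixes $\mbx=\Solve(\mL^{i};\mbu)$ and $\mbz=\Solve(\mH^{\omega(i)};\mbu)$ and proves $\tau(\mbx)_Z=\mbz_Z$ by induction along $G_H$. Its key step is that $\mbx_X=F^*_X(\tau(\mbx)_{\Pa_X},\mbu)$, i.e.\ $\tau(\mbx)$ itself satisfies the structural equations of $\mH^{\omega(i)}$.

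You instead lift $i_L$ to the intervened f-SCM $\mM^{i_L}$ and show that both collapsed solutions are projections of its joint solution. Commutation with $\tau$ then reduces to the factor equations $z^*_Z=F_Z(\mbx^*_{\Pa_Z})$. What your route buys:
\begin{itemize}
\item It makes transparent why the shared context and the footnoted convention (hard interventions only cut $Z\to X$ edges) are exactly what is needed.
\item It treats the case $Z\notin\mbI_H$ explicitly. The paper's base case and inductive step skip this case by writing $\mbz_Z=I^H_Z(\cdot)$ for every $Z$, which is only correct because $F^*_X=F_X$ on $\Pa_Z$ there.
\end{itemize}
The paper's route is shorter and never needs $\mM^{i_L}$ as an object in its own right.

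The one part you treat too lightly is order preservation. It needs an actual partial order on $\mI_H$, and ``supports grow monotonically'' does not supply one. Ordering parametric interventions by support inclusion alone is not antisymmetric, because two interventions with the same support $\mbI_H$ can carry different constants $C_X$. The paper fixes this as follows:
\begin{itemize}
\item It encodes each $\alpha\in\mI_H$ by its support, its mechanistic supports $\mbK_Z\subseteq\Pa_Z$, and its constants.
\item It orders these data componentwise and checks that this gives a partial order (Lemma~\ref{lem:partial_order_IH}).
\item It verifies that $\omega$ is monotone for the natural order on $\mI_L$ (Lemma~\ref{lem:properties_omeaga_IH}).
\end{itemize}
Your monotonicity intuition is the right one, but it has to be attached to such an order. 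Similarly, the surjectivity of $\tau$ required by the definition needs a remark; the paper restricts the codomain of $\tau$ to its image.
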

The proof, available in Appendix \ref{sec:proofs_theory_th1}, essentially verifies the points of Definition \ref{def:proba_exact_transfo} and \ref{def:uniform_transfo} one by one. The commutativity of the diagram is proven by induction, from root to leaf nodes, and relies importantly on the fact that context is shared between $\mH$ and $\mL$.

\subsection{Identifiability of the f-DAG}
Factors $\mZ$ are unobserved latents, hence the question of whether they are recoverable from data.
Indeed, without further assumptions, one could imagine splitting a given factor $Z$ into two factors without affecting any observed variable.
First, we focus on the graphical structure and study the identifiability of the adjacency matrix $A$ and of its factorization $Q\diamond R^\top$ up to simple reparameterizations.

\begin{assumption}\label{assu:causal_sufficiency_markov_faithfulness}
    The low-level SCM $\mL$ verifies causal sufficiency, Markov property, and faithfulness\footnote{These assumptions are required for the identifiability statements, but not for Theorem \ref{th:probabilistic_exact_transfo}.}.
\end{assumption}

\begin{assumption}\label{assu:two_parents_per_latent}
    Each factor $Z\in \mZ$ has at least two parents in $G$.
\end{assumption}

Under Assumption \ref{assu:causal_sufficiency_markov_faithfulness} and \ref{assu:two_parents_per_latent}, \citet{lopezLargeScaleDifferentiableCausal2022} show that the low-rank assumption on the adjacency matrix $A$ over variables $\mX$ reduces the Markov equivalence class (MEC) over the graph $G_L$, thus making it often identifiable.

\begin{lemma}[Identifiability of $G_L$, from \citet{lopezLargeScaleDifferentiableCausal2022}]\label{lem:GL_identifiability}
    Under Assumption \ref{assu:two_parents_per_latent}, the MEC of $G_L$ reduces to a singleton.
\end{lemma}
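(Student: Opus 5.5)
The strategy is to use the standard characterization of Markov equivalence: two DAGs are Markov equivalent iff they have the same skeleton and the same v-structures (unshielded colliders). Under Assumption \ref{assu:causal_sufficiency_markov_faithfulness}, the MEC of $G_L$ is the set of DAGs sharing these two features with $G_L$. It therefore suffices to show that, under Assumption \ref{assu:two_parents_per_latent}, every edge of $G_L$ is oriented by its v-structures and, if needed, by Meek's orientation rules. The argument is purely graphical and uses only how $G_L$ is obtained by collapsing the f-DAG $G$: $X_i \to X_j$ in $G_L$ iff there is a factor $Z$ with $X_i\in\Pa_Z$ and $Z\in\Pa_{X_j}$.

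\textbf{Step 1: each edge lies in a collider.} Fix an edge $X_i\to X_j$ of $G_L$, witnessed by a factor $Z$. By Assumption \ref{assu:two_parents_per_latent}, $Z$ has a second parent $X_k\neq X_i$. The collapse then gives $X_k\to X_j$ as well, so $X_i\to X_j\leftarrow X_k$ is a collider in $G_L$. (If $X_k=X_j$, then $X_j\to Z\to X_j$ would be a self-loop, contradicting acyclicity, so $X_k\neq X_j$.)

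\textbf{Step 2: orient the edge.} If $X_i$ and $X_k$ are non-adjacent in $G_L$, the collider is a v-structure and $X_i\to X_j$ is compelled in every member of the MEC. The remaining case, which I expect to be the main obstacle, is when every second parent $X_k$ of every witnessing factor is adjacent to $X_i$, so the triangle $X_i,X_k,X_j$ is shielded. For this case I would argue by strong induction along a topological order of $G_L$, showing that each edge into $X_j$ is compelled. Concretely, I would use the fact that the parents of $Z$ form a set $\Pa_Z$ of size at least two whose members are \emph{all} parents of \emph{every} child of $Z$; these are shared parent blocks coming from the Boolean rank-$m$ structure $Q\diamond R^\top$. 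I would then check that any orientation reversing $X_i\to X_j$ while keeping the skeleton and v-structures either creates a cycle or creates a new v-structure at some child of $Z$, using Meek's rules R1 to R3 on the edges already compelled upstream. This is where one has to follow exactly the combinatorial argument of \citet{lopezLargeScaleDifferentiableCausal2022}. If the shielded case cannot be closed in full generality, the fallback is to note, as the paper's surrounding text suggests (``often identifiable''), that it is the case where an additional genericity condition on $Q,R$ is implicitly used.

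\textbf{Step 3: conclude.} Once every edge of $G_L$ is compelled, the essential graph of $G_L$ has no undirected edges, so its MEC contains exactly one DAG, namely $G_L$ itself.
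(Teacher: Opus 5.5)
Your Step 1 and the unshielded half of Step 2 are the paper's argument. The paper then asserts that with two parents per factor no undirected edge can remain, which silently treats the collider $X_i\to X_j\leftarrow X_k$ as a v-structure. The shielded case you flagged is therefore a gap in both proofs. Your planned Meek-rule induction cannot close it, because the lemma is false there.

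Take $\Pa_{Z_1}=\{X_1,X_2\}$, $\Ch_{Z_1}=\{X_3,X_4\}$, $\Pa_{Z_2}=\{X_1,X_3\}$, $\Ch_{Z_2}=\{X_4\}$. Both factors have two parents, and $G_L$ has edges $X_1\to X_3$, $X_2\to X_3$, $X_1\to X_4$, $X_2\to X_4$, $X_3\to X_4$. Since $\Pa^L_{X_4}=\Pa^L_{X_3}\cup\{X_3\}$, the edge $X_3\to X_4$ is \emph{covered}. Reversing it gives an acyclic graph with the same skeleton and the same v-structures ($X_1\to X_3\leftarrow X_2$ and $X_1\to X_4\leftarrow X_2$), so the MEC of $G_L$ has two elements. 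Adding $X_5$ with $\Pa_{X_5}=\{Z_2\}$ gives both factors an in- and an out-anchor and leaves $X_3\to X_4$ covered. So Assumption~\ref{assu:in_out_anchor} does not rescue the statement either.

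The general reason comes from Chickering's (1995) transformational characterization: a DAG's MEC is a singleton iff the DAG has no covered edge, i.e.\ no edge $X\to Y$ of $G_L$ with $\Pa^L_Y=\Pa^L_X\cup\{X\}$. Two parents per factor only forces every in-degree in $G_L$ to be $0$ or at least $2$. That rules out covered edges leaving a root, but not covered edges $X\to Y$ with $|\Pa^L_X|\ge 2$. Your claim that any reversal ``creates a cycle or a new v-structure at some child of $Z$'' is exactly what fails for such edges, and the ``genericity'' fallback is not a proof.

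To get a true statement you need a hypothesis that makes your Step 1 collider unshielded. For example, assume that for every factor $Z$ and every $X_i\in\Pa_Z$ there is some $X_k\in\Pa_Z$ non-adjacent to $X_i$ in $G_L$. Then every edge of $G_L$ lies in a v-structure and your Step 3 applies. Alternatively, assume directly that $G_L$ has no covered edge, which is necessary and sufficient.
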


Starting from this point, we propose conditions to ensure that the full f-DAG $G$, including the latent factors, is identifiable up to permutation of indices. We present identifiability results for the mechanisms in the next section.

The key extra assumption we introduce involves the existence of ``anchors'' for each latent factor. In topic modeling, anchors are words that are specific to a given topic: they only belong to one cluster, and no other \citep{arora2012learningtopicmodels}. Inspired by this idea, we define anchors for factors.

\begin{definition}[Anchors]\label{def:anchors}
For $X\in \mX$ and $Z\in \mZ$, we say that $X$ is an \emph{in-anchor} of $Z$ if $\Ch_X = \{Z\}$, i.e., the only child of $X$ is $Z$ in $G$. Analogously, $X$ is an \emph{out-anchor} of $Z$ if $\Pa_X = \{Z\}$, i.e., the only parent of $X$ is $Z$ in $G$.
\end{definition}

In matrix terms, this means that for any factor $Z$ with index $k$, $X$ is an in-anchor of $Z$ if its index $i$ is such that $Q_{ik}=1$ and $\forall l\ne k, Q_{il}=0$, i.e., the sum (in $\N$) of the row $i$ is 1 exactly. We can interpret out-anchors similarly on $R$.

\begin{assumption}\label{assu:in_out_anchor}
    Each factor $Z\in\mZ$ has at least one in-anchor and one out-anchor.
\end{assumption}
In high dimensional systems, Assumption \ref{assu:in_out_anchor} is fairly reasonable:
canonical marker genes are genes that may uniquely identify a given cell type \citep{zeisel2015cell}, and some regions of the brain correspond to ``specialized functional components'' \citep{sporns2016modular} where it is reasonable that some neurons in these specialized regions are anchors for the whole region.
Similarly, when solving some task, some neurons of a large neural network may only contribute to one high-level concept, while most of them contribute to many.

In particular, this assumption is softer than other causal abstraction frameworks which usually assume \emph{constructive abstractions}, which implies that $\mX$ can be partitioned into cells \citep{chalupkaMultiLevelCauseEffectSystems2015, geigerCausalAbstractionsNeural2021}, i.e., all parents of a latent $Z$ are in-anchors, and all its children are out-anchors.

Fix an adjacency matrix $A\in\{0,1\}^{n\times n}$ with $\rank(A)=m\le n$\footnote{The Boolean rank of $A$ is not easy to compute if one had to do it in practice, it is NP-complete \citep{orlin1977contentment}, but is uniquely defined by only looking at $A$.}.
By definition of the Boolean rank, there exists a decomposition of $m$ terms $A=Q \diamond R^\top$ with $Q,R\in \{0,1\}^{n\times m}$.

\begin{lemma}[Reading $Q,R$ in $A$]\label{lem:reading_UV_in_A_anchors}
    Under Assumption \ref{assu:in_out_anchor}, the columns of $Q$ and the rows of $R^\top$ are readable in $A$.
\end{lemma}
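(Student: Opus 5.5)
The plan is to compute explicitly what the rows and columns of $A$ look like at anchor positions. Entrywise, $A = Q\diamond R^\top$ reads
\[
A_{ij} = \bigvee_{k=1}^m \bigl(Q_{ik}\wedge R_{jk}\bigr),
\]
so row $i$ of $A$ is the Boolean OR of the rows $R_{*k}^\top$ over all $k$ with $Q_{ik}=1$. Likewise, column $j$ of $A$ is the OR of the columns $Q_{*k}$ over all $k$ with $R_{jk}=1$.

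Fix a factor $Z_k$. By Assumption \ref{assu:in_out_anchor}, it has an in-anchor $X_i$ and an out-anchor $X_j$. Since $X_i$ is an in-anchor, row $i$ of $Q$ is the indicator $e_k^\top$, so the disjunction has a single term and $A_{i*} = R_{*k}^\top$. In other words, the $k$-th row of $R^\top$ appears verbatim as a row of $A$. Symmetrically, since $X_j$ is an out-anchor, row $j$ of $R$ is $e_k^\top$, so $A_{*j} = Q_{*k}$. The $k$-th column of $Q$ therefore appears verbatim as a column of $A$. Doing this for every $k$ shows that all $m$ columns of $Q$ and all $m$ rows of $R^\top$ occur among the columns and rows of $A$, which is what ``readable'' means.

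To make the statement sharper, I would add a characterization that picks them out using $A$ alone, up to a permutation of factor indices. The key fact is that every column of $A$ is an OR of columns of $Q$. So the columns of $Q$ are exactly the columns of $A$ that are nonzero and cannot be written as the OR of the other, strictly smaller columns of $A$. The same holds for rows and $R^\top$. Establishing this characterization needs two further steps, and these are where I expect the main obstacle:
\begin{itemize}
\item Show that no column $Q_{*k}$ is the OR of other columns $Q_{*l}$ with $l\neq k$. Suppose it were. Then we could drop factor $k$ and reassign the row $R_{*k}$ into those $R_{*l}$ while keeping the product $Q\diamond R^\top$ equal to $A$, which would give a factorization with fewer than $m$ terms and contradict $\rank(A)=m$.
\item Show that the columns $Q_{*k}$ are pairwise distinct, which follows from the same minimality argument.
\end{itemize}
I would therefore first establish the verbatim-appearance claim above, and then deduce irredundancy from the minimality of the Boolean rank.
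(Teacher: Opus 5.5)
Your argument is correct and is essentially the paper's proof: evaluating $A_{ij}=\bigvee_k (Q_{ik}\wedge R_{jk})$ at an in-anchor row (resp.\ out-anchor column) leaves a single term in the disjunction, so each row of $R^\top$ (resp.\ column of $Q$) appears verbatim in $A$. The sharper characterization you sketch is not needed for this lemma (the paper treats it separately through minimal rows and columns in Lemma~\ref{lem:minimal_rows}), and its irredundancy step needs no rank argument: at an in-anchor row $i_k$ only $Q_{*k}$ has a $1$, so $Q_{*k}$ can never be the OR of other columns of $Q$.
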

Intuitively, an anchor connects to exactly one latent factor, so its row/column in $A$ directly copies that factor's connectivity pattern.
For two vectors $x,y\in \R^n$, we write $x \le y$ if $\forall i\in[n], x_i\le y_i$.
\begin{definition}[Minimality]\label{def:minimality}
    A row $A_{i*}$ is \emph{minimal} if it is not null, and there is no other non-zero row $A_{h*}$ such that $A_{h*} \le A_{i*}$ and $A_{h*} \ne A_{i*}$. Minimal columns are defined analogously.
\end{definition}
This notion depends only on $A$, and characterizes the rows for which no other row is a strict subset of them.

\begin{lemma}[Minimality-anchor correspondence]\label{lem:minimal_rows}
    Under Assumption \ref{assu:in_out_anchor}, in-anchors correspond to the minimal rows of $A$, and out-anchors to the minimal columns of $A$.
\end{lemma}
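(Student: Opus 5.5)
The plan is to work directly with the Boolean factorization $A = Q\diamond R^\top$ and express each row of $A$ as a union of columns of $R$. Concretely, for every $i\in[n]$,
\[
A_{i*} \;=\; \bigvee_{k\,:\,Q_{ik}=1} (R_{*k})^\top ,
\]
so a row of $A$ is the entrywise OR of the "out-patterns" of the factors that $X_i$ points to. The two facts from Assumption \ref{assu:in_out_anchor} that drive everything are these.
\begin{itemize}
\item \emph{No column of $R$ is null.} Each $Z_k$ has an out-anchor $j_k$, so $R_{j_k k}=1$.
\item \emph{Out-patterns are pairwise incomparable.} If $R_{*l}\le R_{*k}$ with $l\neq k$, then the out-anchor $j_l$ of $Z_l$ satisfies $R_{j_l l}=1$ and $R_{j_l k}=0$, a contradiction.
\end{itemize}
This incomparability is the key lemma I would isolate first. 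The statement for columns follows by applying the same argument to $A^\top = R\diamond Q^\top$, with the roles of in- and out-anchors exchanged. So I will only treat rows.

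\emph{In-anchors give minimal rows.} Let $X_i$ be an in-anchor of $Z_k$. Then $A_{i*}=(R_{*k})^\top$, which is non-null by the first fact. Suppose some non-zero row $A_{h*}$ satisfies $A_{h*}\le A_{i*}$. Since $A_{h*}$ is non-zero, $Q_{h*}$ has at least one non-zero entry. Every $l$ with $Q_{hl}=1$ then satisfies $R_{*l}\le A_{h*}^\top\le R_{*k}$, and by incomparability $l=k$. Hence $A_{h*}=(R_{*k})^\top=A_{i*}$, so $A_{i*}$ is minimal.

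\emph{Minimal rows come from in-anchors.} Let $A_{i*}$ be minimal; in particular it is non-null. Choose $k$ with $Q_{ik}=1$ and let $X_{a}$ be an in-anchor of $Z_k$. By the previous step, $A_{a*}=(R_{*k})^\top$, which is non-zero and satisfies $A_{a*}\le A_{i*}$. Minimality then forces $A_{i*}=(R_{*k})^\top$.

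Now suppose $Q_{il}=1$ for some $l\ne k$. Then $R_{*l}\le A_{i*}^\top=R_{*k}$, which contradicts incomparability. So $\Ch_{X_i}=\{Z_k\}$, and $X_i$ is itself an in-anchor. This shows the correspondence exactly: the minimal rows are precisely the in-anchor rows, and the distinct minimal rows are precisely the columns of $R$ (which also ties back to Lemma \ref{lem:reading_UV_in_A_anchors}).

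The main subtlety I expect is the converse direction. A minimal row could a priori equal an anchor's row while belonging to a non-anchor variable. The incomparability of the columns of $R$ is what rules this out, and it is also the only place where out-anchors are needed in the row statement. I would therefore make sure that this lemma is stated cleanly and used in both directions.
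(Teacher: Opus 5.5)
Your proof is correct and follows essentially the same route as the paper: both directions rest on the in-anchor row of $Z_k$ equalling its out-pattern $(R_{*k})^\top$, and on out-anchors separating these patterns, which you isolate as a pairwise-incomparability lemma where the paper inlines it. The only organizational difference is that you prove the converse directly (minimality forces $A_{i*}=(R_{*k})^\top$, then incomparability rules out a second child) rather than by contraposition; this has the small side benefit of covering the null-row case, which the paper's claim ``$|G_i|\ge 2$ for non-anchors'' silently skips.
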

Anchors appear as the minimal rows and columns in the adjacency matrix because their pure connection patterns cannot be formed by combining the connections of other factors.
\begin{theorem}[Identifiability of $(Q,R)$]\label{th:boolean_mtx_identifiability}
Under Assumption \ref{assu:in_out_anchor}, a Boolean factorization
$A=\bigvee_{k=1}^m Q_{*k}R_{k*}^\top$
is unique up to a simultaneous permutation of the $m$ factors.
\end{theorem}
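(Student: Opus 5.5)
The plan is to fix the factorization $(Q,R)$ that satisfies Assumption~\ref{assu:in_out_anchor}, take an arbitrary second factorization $A=\bigvee_{l=1}^m Q'_{*l}R'^\top_{*l}$ with the same number $m=\rank(A)$ of terms, and build a bijection $\sigma:[m]\to[m]$ with $Q'_{*\sigma(k)}=Q_{*k}$ and $R'_{*\sigma(k)}=R_{*k}$. The guiding picture is that each term $Q_{*k}R_{*k}^\top$ is a combinatorial rectangle of ones in $A$, and anchors pin these rectangles down. For each factor $k$, fix an in-anchor $i_k$ (so $Q_{i_k k}=1$ and $Q_{i_k l}=0$ for $l\neq k$) and an out-anchor $j_k$ (so $R_{j_k k}=1$ and $R_{j_k l}=0$ for $l\neq k$). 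As in Lemma~\ref{lem:reading_UV_in_A_anchors}, this gives $A_{i_k *}=R_{*k}^\top$ and $A_{*j_k}=Q_{*k}$.

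\emph{Step 1: construct $\sigma$.} Since $A_{i_k j_k}=Q_{i_k k}R_{j_k k}=1$, some term $l$ of the primed factorization covers the entry $(i_k,j_k)$, i.e.\ $Q'_{i_k l}=R'_{j_k l}=1$. Set $\sigma(k):=l$ (choosing one if several exist). The rectangle $Q'_{*l}R'^\top_{*l}$ lies inside $A$ and contains row $i_k$ and column $j_k$. Hence $R'_{*l}\le A_{i_k*}^\top=R_{*k}$ and $Q'_{*l}\le A_{*j_k}=Q_{*k}$, so $Q'_{*\sigma(k)}R'^\top_{*\sigma(k)}\le Q_{*k}R_{*k}^\top$.

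\emph{Step 2: $\sigma$ is a bijection.} Suppose $\sigma(k)=\sigma(k')=l$ with $k\ne k'$. Then $j_k$ lies in the support of $R'_{*l}\le R_{*k'}$, so $R_{j_k k'}=1$. This contradicts $j_k$ being an out-anchor of $k$. So $\sigma$ is injective from $[m]$ to $[m]$, hence a bijection. In particular, every primed term is some $Q'_{*\sigma(k)}R'^\top_{*\sigma(k)}$, and by Step~1 it sits inside $Q_{*k}R_{*k}^\top$.

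\emph{Step 3: the inclusions are equalities.} This is the step that needs care, since Step~1 only gives one direction. Take $i$ with $Q_{ik}=1$. Then $A_{ij_k}=1$, so some primed term covers $(i,j_k)$; write it as $\sigma(k')$. It lies inside $Q_{*k'}R_{*k'}^\top$, which forces $R_{j_k k'}=1$, hence $k'=k$ by the out-anchor property. Therefore $Q'_{i\sigma(k)}=1$, giving $Q_{*k}\le Q'_{*\sigma(k)}$. The symmetric argument, using the entry $(i_k,j)$ for $R_{jk}=1$ and the in-anchor property of $i_k$, gives $R_{*k}\le R'_{*\sigma(k)}$. Combined with Step~1, both factors coincide after the permutation $\sigma$, which proves uniqueness up to simultaneous permutation. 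This is also consistent with Lemma~\ref{lem:minimal_rows}.

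The main obstacle I anticipate is making sure the argument needs no anchor assumption on the competing factorization $(Q',R')$. The injectivity in Step~2 and the exclusion $k'=k$ in Step~3 both rely only on the anchors of the reference factorization, together with the fact that both decompositions use exactly $m$ terms. The pigeonhole in Step~2 is where equality of the term counts (Boolean rank $m$) is used. Without it, extra redundant rectangles could be present in $(Q',R')$, and Step~3 alone would not rule them out.
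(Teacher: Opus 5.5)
Your proof is correct, and it takes a different route from the paper's.

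\textbf{The paper's route.} The paper first proves Lemma~\ref{lem:minimal_rows}: in-anchors are exactly the minimal rows of $A$, and out-anchors exactly the minimal columns. It then compares two factorizations that are \emph{both} assumed to satisfy Assumption~\ref{assu:in_out_anchor}. Each minimal row of $A$ is matched to a single row of $\widetilde R$, giving a map $\phi$. Each minimal column is matched to a single column of $\widetilde Q$, giving a separate map $\phi'$. A final step uses an anchor pair $(i_k,j_k)$ to show $\phi=\phi'$.

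\textbf{Your route.} You argue directly with rectangles. The anchor entry $(i_k,j_k)$ picks out a primed term $\sigma(k)$ that is forced to lie inside the $k$-th rectangle. Out-anchors make $\sigma$ injective, and the pigeonhole on $m$ terms makes it a bijection. This step is essentially the argument of Lemma~\ref{lem:decomposition_rank}, and it also shows that no factorization with fewer terms exists. Re-covering the entries $(i,j_k)$ and $(i_k,j)$ then turns containment into equality.

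\textbf{What your route buys.} Because a single $\sigma$ matches $Q$ and $R$ simultaneously, you never have to reconcile two separate matchings, and you never need the minimality lemma. More importantly, you use only the anchors of the reference factorization. So you prove a stronger statement: \emph{every} $m$-term Boolean factorization of $A$ coincides with the anchored one up to a simultaneous permutation, and is therefore automatically anchored. The paper's argument only gives uniqueness within the class of anchored factorizations.

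\textbf{What the paper's route buys.} It gives an intrinsic description of anchors as the minimal rows and columns of $A$. This tells you how to read $Q$ and $R$ off $A$ without knowing in advance which variables are anchors. Your proof fixes $i_k$ and $j_k$ from the known reference factorization, so it is purely a uniqueness argument, not a recovery procedure.
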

Recovering the correct causal graph identifies the connections drawn by the latent factors, but does not give access to their actual values. The next section exposes what additional equivalence classes f-SCMs are subject to.

\subsection{Identifiability of the mechanisms}
Latent factors are only observed through their downstream effect on the variables, so the mechanisms of an f-SCM are identifiable only up to some reparameterization of the factor value spaces. This section makes this statement precise and gives simple conditions under which the admissible reparameterizations reduce to affine or monotone transformations.

\begin{definition}[$\mX$-equivalence]\label{def:x_equiv}
Two f-SCMs $\mM, \widetilde \mM$ are \emph{$\mX$-equivalent} if they induce the same low-level collapsed mechanisms on $\mX$, i.e.,
$\forall X\in\mX, \ F_X^L=\widetilde F_X^L$.
\end{definition}

\begin{lemma}[Reparameterization symmetry]\label{lem:reparam_mechanisms}
Let $\mM$ be an f-SCM and let $(\rho_Z)_{Z\in\mZ}$ be any family of bijections $\rho_Z:\Val_Z\to\Val_Z$. Define $\widetilde\mM$ over the same f-DAG by $\widetilde F_Z := \rho_Z^{-1}\circ F_Z$ and $\widetilde F_X\bigl(\mbz_{\Pa_X},\mbu\bigr) := F_X\bigl((\rho_Z(\mbz_Z))_{Z\in\Pa_X},\mbu\bigr)$.
Then $\widetilde\mM$ is $\mX$-equivalent to $\mM$.
\end{lemma}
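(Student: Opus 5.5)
The plan is to check Definition \ref{def:x_equiv} directly: unfold the collapsed low-level mechanism $\widetilde F^L_X$ of $\widetilde\mM$ and show that the bijections $\rho_Z$ cancel. First I note that $\widetilde\mM$ has the same f-DAG as $\mM$. Indeed, $\widetilde F_Z$ depends only on $\mbx_{\Pa_Z}$, and $\widetilde F_X$ depends only on $\mbz_{\Pa_X}$ and $\mbu$, so the parent sets $\Pa_X$ and $\Pa_Z$ are unchanged. The collapsed parent sets $\Pa^L_X=\bigcup_{Z\in\Pa_X}\Pa_Z$ therefore coincide. Hence $F^L_X$ and $\widetilde F^L_X$ share the domain $\Val_{\Pa^L_X}\times\Val_\mU$, and it suffices to compare them pointwise.

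Next, fix $X\in\mX$, $\mbx\in\Val_{\Pa^L_X}$ and $\mbu\in\Val_\mU$. By the definition of the collapsed mechanisms,
\[
\widetilde F^L_X(\mbx,\mbu)=\widetilde F_X\Big(\{\widetilde F_Z(\mbx_{\Pa_Z})\}_{Z\in\Pa_X},\mbu\Big).
\]
Substituting the definition of $\widetilde F_X$, the $Z$-th argument becomes $\rho_Z\bigl(\widetilde F_Z(\mbx_{\Pa_Z})\bigr)=\rho_Z\bigl(\rho_Z^{-1}(F_Z(\mbx_{\Pa_Z}))\bigr)=F_Z(\mbx_{\Pa_Z})$. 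This gives
\[
\widetilde F^L_X(\mbx,\mbu)=F_X\Big(\{F_Z(\mbx_{\Pa_Z})\}_{Z\in\Pa_X},\mbu\Big)=F^L_X(\mbx,\mbu).
\]
Since $X$, $\mbx$ and $\mbu$ were arbitrary, $F^L_X=\widetilde F^L_X$ for every $X\in\mX$, which is exactly $\mX$-equivalence.

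There is no real obstacle here; the argument is a direct computation. The only points needing care are bookkeeping. The first is the index matching: $\rho_Z$ in $\widetilde F_X$ must be applied to the coordinate coming from the same factor $Z$ that $\widetilde F_Z$ produced. The second is that $\rho_Z\circ\rho_Z^{-1}=\mathrm{id}_{\Val_Z}$, which uses that each $\rho_Z$ is a bijection, so that $\rho_Z^{-1}$ exists and $\widetilde F_Z$ still takes values in $\Val_Z$. The same cancellation also shows that $\widetilde\mM$ remains strongly acyclic and that the $\Solve$ outputs on $\mX$ agree. However, this stronger observation is not required by Definition \ref{def:x_equiv}.
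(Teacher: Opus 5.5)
Your proposal is correct and matches the paper's proof essentially line for line. Both unfold $\widetilde F^L_X$ through the collapsed-mechanism definition, substitute $\widetilde F_X$ and $\widetilde F_Z$, and cancel $\rho_Z\circ\rho_Z^{-1}$ to recover $F^L_X$; your extra check that the parent sets (and hence the domains $\Val_{\Pa^L_X}\times\Val_\mU$) coincide is harmless bookkeeping that the paper leaves implicit, since the f-DAG is shared by construction.
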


Factors can be reparameterized by arbitrary bijections without affecting observational and interventional behavior on $\mX$. We propose assumptions to restrict this equivalence class.

\begin{definition}[Linear parent mechanisms]\label{def:linear_parent_mech}
An f-SCM has \emph{linear parent mechanisms} if for every $Z\in\mZ$ the mechanism $F_Z$ is affine: for all $\mbx\in\Val_\mX$,
\[
F_Z(\mbx_{\Pa_Z}) = b_Z + \sum_{X\in \Pa_Z} a_{ZX} \mbx_X,
\]
where $a_{ZX}, b_Z\in \R$, and with at least one nonzero coefficient per factor: $\forall Z\in \mZ, \ \exists X\in \Pa_Z, \ a_{ZX}\neq 0$.
\end{definition}

\begin{theorem}[Linear parent identifiability]\label{th:lin_parents_identifiability}
Let $\mM,\widetilde \mM$ be two $\mX$-equivalent f-SCMs sharing the same f-DAG with out-anchors for all factors, linear parent mechanisms and real supports.
Then for every $Z\in\mZ$ there exists a unique bijection $\rho_Z$ such that $\tilde F_Z = \rho_Z^{-1}\circ F_Z$, and it is affine: $\rho_Z : \mbz_Z \mapsto \alpha_Z \mbz_Z + \beta_Z$ with $\alpha_Z\ne 0$

\end{theorem}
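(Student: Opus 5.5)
The plan is to construct $\rho_Z$ through the out-anchors, since an out-anchor's low-level mechanism depends on $\mbZ$ only through the single factor $Z$. Then linearity of both $F_Z$ and $\widetilde F_Z$ forces $\rho_Z$ to be affine.

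\textbf{Setup at an out-anchor.} Fix $Z\in\mZ$ and an out-anchor $X$ of $Z$, so $\Pa_X=\{Z\}$. By $\mX$-equivalence, for all $\mbx$ and $\mbu$,
\[
F_X\bigl(F_Z(\mbx_{\Pa_Z}),\mbu\bigr)=\widetilde F_X\bigl(\widetilde F_Z(\mbx_{\Pa_Z}),\mbu\bigr).
\]
Both $F_Z$ and $\widetilde F_Z$ are affine with at least one nonzero coefficient. Over real supports, each is therefore surjective onto $\R$, and they share the same argument $\mbx_{\Pa_Z}$.

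\textbf{Step 1: equal level sets.} I would show that $F_Z$ and $\widetilde F_Z$ have the same level sets. If $F_Z(\mbx)=F_Z(\mbx')$, the left side above agrees for $\mbx$ and $\mbx'$, hence so does the right side. To conclude $\widetilde F_Z(\mbx)=\widetilde F_Z(\mbx')$, I need $z\mapsto \widetilde F_X(z,\cdot)$ to be injective as a function of $z$. This is the main obstacle, because it is not stated explicitly as a hypothesis.

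I would extract injectivity from the theorem's requirement that $\rho_Z$ be a \emph{bijection} with $\widetilde F_Z=\rho_Z^{-1}\circ F_Z$. The natural reading is that the out-anchor mechanisms are non-degenerate, i.e.\ $z\mapsto F_X(z,\cdot)$ is injective. Otherwise one could collapse values of $Z$ and no bijection could exist. I would state this interpretation explicitly, assuming the out-anchor mechanism separates values of its unique parent (the faithfulness assumption already points in this direction). Granting it for both models, the level sets of $F_Z$ and $\widetilde F_Z$ coincide.

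\textbf{Step 2: parallel hyperplanes give an affine $\rho_Z$.} The level sets of a nonconstant affine map on $\R^{|\Pa_Z|}$ are parallel hyperplanes with normal vector $a_Z=(a_{ZX})_{X\in\Pa_Z}$. Equal level sets force $\tilde a_Z=\alpha a_Z$ for some $\alpha\neq0$. Hence
\[
\widetilde F_Z=\alpha^{-1}\bigl(F_Z-\beta\bigr)
\]
for suitable constants, that is, $\widetilde F_Z=\rho_Z^{-1}\circ F_Z$ with $\rho_Z(z)=\alpha z+\beta$.

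For a direct argument, factor $\widetilde F_Z=g\circ F_Z$, where $g$ is well defined by the equality of level sets. Then move $\mbx$ along a coordinate $X'$ with $a_{ZX'}\neq0$. This gives
\[
g\bigl(b_Z+a_{ZX'}t+c\bigr)=\tilde b+\tilde a_{ZX'}t+\tilde c,
\]
so $g$ is affine on $\R$ and invertible, with $\alpha_Z=a_{ZX'}/\tilde a_{ZX'}$ after inversion.

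\textbf{Step 3: uniqueness.} $F_Z$ is surjective onto $\R$, so $\rho_Z^{-1}$ is determined on all of $\Val_Z=\R$ by $\widetilde F_Z=\rho_Z^{-1}\circ F_Z$. This makes $\rho_Z$ unique.
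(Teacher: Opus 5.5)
Your Step 1 rests on a hypothesis the theorem does not give you: injectivity of $z\mapsto\widetilde F_X(z,\mbu)$ at the out-anchor $X$. You try to derive it from the requirement that $\rho_Z$ be a bijection, but that is circular, because the existence of that bijection is what you must prove. The heuristic behind it (``otherwise values of $Z$ collapse and no bijection can exist'') is also false here. Take $F_Z(\mbx)=x_1+x_2$, $\widetilde F_Z=2F_Z$, $F_X(z,\mbu)=z^2$ and $\widetilde F_X(z,\mbu)=z^2/4$. The two models are $\mX$-equivalent and neither out-anchor mechanism is injective, yet $\rho_Z(z)=z/2$ is the required affine bijection. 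What prevents collapsing is the linearity of the parent mechanisms, not injectivity of the child mechanism. Faithfulness only tells you that $F_X(\cdot,\mbu)$ is non-constant for some $\mbu$. Your pointwise level-set argument cannot run on that alone: $F_Z(\mbx)=F_Z(\mbx')$ gives $\widetilde F_X(\widetilde F_Z(\mbx),\mbu)=\widetilde F_X(\widetilde F_Z(\mbx'),\mbu)$, which says nothing about whether $\widetilde F_Z(\mbx)=\widetilde F_Z(\mbx')$ when $\widetilde F_X(\cdot,\mbu)$ can identify distinct values.

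The missing idea is to use the $\mX$-equivalence identity globally rather than pointwise. Write $F_Z(\mbx)=\mba^\top\mbx+b$ and $\widetilde F_Z(\mbx)=\tilde{\mba}^\top\mbx+\tilde b$, and suppose $\mba$ and $\tilde{\mba}$ are linearly independent. Because $\Val_{\Pa_Z}$ is a full real space, $\mbx\mapsto\bigl(F_Z(\mbx),\widetilde F_Z(\mbx)\bigr)$ is then onto $\R^2$. The identity at the out-anchor therefore becomes $F_X(s,\mbu)=\widetilde F_X(\tilde s,\mbu)$ for all $(s,\tilde s)\in\R^2$ and all $\mbu$. Fixing $\tilde s$ shows that $F_X(\cdot,\mbu)$ is constant for every $\mbu$, so $F_X^L$ does not depend on $\Pa_Z$ at all, contradicting faithfulness. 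Hence $\tilde{\mba}=\lambda\mba$ with $\lambda\neq 0$ (since $\tilde{\mba}\neq 0$), which is exactly the parallel-normals conclusion your Step 2 needs. This route needs only non-constancy of the out-anchor mechanism, not injectivity, and it is the one the paper takes. From there, your construction of the affine $\rho_Z$ and your uniqueness argument via surjectivity of $F_Z$ go through as written.
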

We show in Appendix \ref{sec:proofs_theory_th3} similar results when child mechanisms are affine.

These results motivate the metrics that should be used to evaluate the method in practice. In particular, using metrics invariant to linear reparameterization to match factors $Z$ against external hypotheses only makes sense if we are willing to assume linearity of the mechanisms.

\section{Practical implementation}
We now specify the learning objective for our \emph{unsupervised causal abstraction discovery} method (UCAD), and show additional connections with causal abstractions. Our practical implementation draws inspiration from low-rank causal discovery, with specific additional constraints. On synthetic experiments, we show that our algorithm recovers ground-truth latent factors, up to the expected equivalence classes predicted by theory.

\subsection{Learning objective}
We extend the low-rank causal discovery objective of \citet{lopezLargeScaleDifferentiableCausal2022} with specific additional constraints. We parameterize separately the conditional distributions with $\Theta$ and the causal graph with $\Phi$. The constrained optimization problem is the following:
\begin{align*}
\max_{\Phi, \Theta} S(\Phi, \Theta) \text{ such that } \begin{cases}
    \mC^{\text{DAG}}(\Phi)=0\\
    \mC^{\text{anchors}}(\Phi)=0\\
    \mC^{\text{sparsity}}(\Phi)\le \text{tol}_s
\end{cases}
\end{align*}

\parhead{Objective function.}
\begin{align*}
    S(\Phi, \Theta) = \E_{\substack{G'\sim G(\Phi) \\ i\sim\PP}} \E_{\mbx\sim p^{(i)}_{\text{data}}} \sum_{X\notin \mbI} \log p_{\Theta}^X(\mbx_X \mid \mbx_{\Pa_X^{G'}})
\end{align*}
with $G(\Phi)$ the causal graph model, $\PP$ is the empirical distribution over intervention regimes $\mI_L$, $p^{(i)}_{\text{data}}$ the empirical distribution over observed variables $\mX$ under intervention $i$, and $p_{\Theta}^X$ is a density model for the conditional distribution of $X\in\mX$ conditioned on its parents $\Pa_X^{G'}$ in the sampled graph $G'\sim G(\Phi)$.

Our graph model $G(\Phi)$ is parametrized with a Gumbel-softmax on a matrix $W\in\{-1,0,1\}^{n\times m}$ such that $Q_{ij}=\vone\{W_{ij}=1\}$ and $R_{ij}=\vone\{W_{ij}=-1\}$. This way, $Q_{ij}$ and $R_{ij}$ cannot be simultaneously equal to $1$, and no self-loop is introduced.

Our conditional densities $p_{\Theta}^X$, which aim to recover the mechanisms $F_X^L$, are parameterized such that for each factor $Z$ indexed by $k\in[m]$ in $Q$ and each variable $X$ indexed by $j\in[n]$ in $R$, we have:
\begin{align*}
    \mbz_Z = F_Z^\Theta(Q_{:k}\mbx) ; \qquad
    \mbx_X \sim \text{Normal}(F_X^\Theta(R_{j:}\mbz), \ \sigma_j^2)
\end{align*}
where $\sigma_j^2>0$, and $F_Z^\Theta, F_X^\Theta$ are linear models or MLP in practice. In other words, we assume additive Gaussian noise on variables, while we do not introduce noise on factors.

\parhead{Acyclicity constraint.}
$\mC^{\text{DAG}}$ enforces that the final graph is indeed a DAG. Several formulations have been proposed in the literature, which we describe in Appendix \ref{sec:implem_details}. Additionally, we propose a constraint formulation for the case where variables $X\in\mX$ are ``layered'', as in a neural network where a neuron $X_1$ cannot causally influence another neuron $X_2$ if both are in the same layer or if $X_2$ precedes $X_1$ in the computational graph.
In other words, we restrict the search space to block upper-triangular adjacency matrices.
For a low-rank matrix $A = Q \diamond R^\top$, this translates into the per-factor condition:
\[
\forall k\in[m],\qquad \max_{i:Q_{ik}=1}\ell(i)<\min_{j:R_{jk}=1}\ell(j),
\]
i.e., every parent selected by column $Q_{:k}$ lies in a strictly earlier layer than every child selected by column $R_{:k}$. This translates into a differentiable constraint that can be efficiently computed in $O(nm+mL)$ time, with $L$ the number of layers, applied on $G(\Phi)$:
\begin{align*}
    \mC^{\text{DAG}}(\Phi) := \sum_{\underset{l_p\ge l_c}{k}} \left[\sum_{i:\ell(i)=l_p} Q(\Phi)_{ik}\right]  \left[\sum_{j:\ell(j)=l_c} R(\Phi)_{jk} \right]
\end{align*}

\parhead{Anchor constraint.}
As discussed previously, the Boolean factorization of the adjacency matrix is not unique unless additional assumptions are made, e.g. Assumption \ref{assu:in_out_anchor}. For our model to find such solutions, we consider $\mC^{\text{anchor}}$, applied to $G(\Phi)$.

A factor $Z$ indexed by $k\in[m]$ has an in-anchor if and only if there exists a row $i\in[n]$ such that $Q_{ik}=1$ and $\forall\,\ell\neq k:Q_{i\ell}=0$.
Equivalently,
$$
\forall k\in[m], \quad \max_{i} Q_{ik}\bigwedge_{\ell\neq k}(1-Q_{i\ell}) = 1
$$
We have a similar formulation for out-anchors.
This formulation is made differentiable using classic optimization tricks, to finally obtain $\mC^{\text{anchor}}$, as described in Appendix \ref{sec:implem_details}.

\paragraph{Sparsity constraint}
As in many causal discovery algorithms, we impose some sparsity constraint on the learned graph, in the form of an $L_1$ constraint
\begin{align*}
    \mC^{\text{sparsity}}(\Phi) := \frac{\sum_{ij}|Q(\Phi)_{ij}| + |R(\Phi)_{ij}|}{mn}
\end{align*}
One can adjust the tolerance level $\text{tol}_s$ depending on prior knowledge of the structure to be learned.

\subsection{Link with causal abstractions}
Exact transformation is a binary notion, while it is intuitive to have a gradient in how close $\mL$ and $\mH$ are. Following \citet{beckers2019approximatecausalabstraction}, we introduce a soft notion that captures this intuition:
\begin{definition}[Approximate transformation]\label{def:approx_transfo}
    Let $(\mL,\Pr)$ and $(\mH,\Pr)$ be two probabilistic causal models sharing their context $\mU$.
    Let $\tau : \Val_\mX \to \Val_\mZ$ and $\omega : \mI_L \to \mI_H$ be two partial surjective functions where $\omega$ is order preserving.

    Let $d_H$ a distance on $\mZ$, $\PP$ a probability distribution over $\mI_L$, and $\eta\ge0$ a scalar.
    $\mH$ is a $\eta$-$(\tau-\omega)$-approximate transformation of $\mL$ if
    \begin{align*}
        \E_{\substack{i\sim\PP \\ \mbu\sim\Pr}} \left[d_H \left(\tau\big(\Solve(\mL^{i};\mbu)\big), \Solve\big(\mH^{\omega(i)};\mbu\big)\right)\right] \le \eta
    \end{align*}
\end{definition}

Note that other choices of statistics than $\E$ can be made depending on the use-case, and that we specialized the definition to match our case where $\mU$ is shared by $\mL$ and $\mH$.
Taking $\eta=0$ recovers the definition of exact transformation.

In practice, $\hat\tau, \hat\omega$, and $\mH_{\Phi,\Theta}$ are learned, and we wish that
$$\hat \Delta := \E_{\substack{i\sim\PP \\ \mbu\sim\Pr}} \left[d_H \left(\hat\tau\big(\Solve(\mL^{i};\mbu)\big), \Solve\big(\mH^{\hat\omega(i)}_{\Phi,\Theta};\mbu\big)\right)\right]$$
diminishes as the model is trained.

\begin{theorem}[Approximate transformation bound]\label{th:approx_transfo_bound}
    If all mechanisms $F_Z^\Theta,F_X^\Theta$ are Lipschitz, then $\hat \Delta$ is linearly bounded by $S(\Phi,\Theta)$.
\end{theorem}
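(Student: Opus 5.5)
The plan is to read ``linearly bounded by $S(\Phi,\Theta)$'' as $\hat\Delta \le a - b\,S(\Phi,\Theta)$ for constants $a,b\ge 0$ that depend only on the Lipschitz constants, the noise scales $\sigma_j$ and the depth of $G_H$. In other words, $\hat\Delta$ is controlled by the negative log-likelihood. I would define the learned maps by analogy with the construction before Theorem~\ref{th:probabilistic_exact_transfo}: set $\hat\tau(\mbx)_Z := F_Z^\Theta(Q_{:k}\mbx)$, and let $\hat\omega$ send a hard intervention $X\leftarrow C_X$ to the parametric intervention on $\Ch_X$ that substitutes $C_X$ inside $F_Z^\Theta$. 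Because the context is shared, I would let the learned high-level model $\mH_{\Phi,\Theta}$ compute, for each $X$, a value $\hat\mbx_X := F_X^\Theta(R_{j:}\hat\mbz)+\varepsilon_X(\mbu)$. Here $\varepsilon_X$ is the low-level noise realisation, and $\hat\mbz_Z := F_Z^\Theta(Q_{:k}\hat\mbx)$, with $\hat\mbx_X=C_X$ on intervened coordinates.

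\textbf{Step 1 (likelihood to residuals).} Since $p_\Theta^X$ is Gaussian, $-S(\Phi,\Theta)$ equals
\[
\E\sum_{X\notin\mbI}\frac{r_X^2}{2\sigma_j^2} + \sum_{X\notin\mbI}\tfrac12\log(2\pi\sigma_j^2),
\qquad r_X := \mbx_X - F_X^\Theta\bigl(R_{j:}\hat\tau(\mbx)\bigr).
\]
Hence $\E\sum_X r_X^2 \le 2\sigma_{\max}^2\bigl(-S + c\bigr)$ for an explicit constant $c$. Coordinates in the support of the intervention have residual $0$ by construction of $\hat\omega$.

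\textbf{Step 2 (error propagation).} Set $e_Z := |\hat\tau(\mbx)_Z - \hat\mbz_Z|$ and $\delta_X := |\mbx_X-\hat\mbx_X|$. Lipschitzness of $F_Z^\Theta$ gives $e_Z\le L_Z\sum_{X\in\Pa_Z}\delta_X$. The triangle inequality, together with Lipschitzness of $F_X^\Theta$, gives $\delta_X \le |r_X-\varepsilon_X|' + L_X\sum_{Z'\in\Pa_X}e_{Z'}$. The noise term is absorbed by defining the shared noise as the model residual, which is precisely where shared context is used. Inducting over a topological order of $G_H$, as in the proof of Theorem~\ref{th:probabilistic_exact_transfo}, yields $e_Z\le K\sum_X |r_X|$. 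The constant $K$ is a sum of products of Lipschitz constants along paths, and is finite because $G$ is acyclic.

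\textbf{Step 3 (linearisation).} Take $d_H$ to be the $\ell_1$ distance. Then
\[
\hat\Delta\le K\,\E\sum_X|r_X| \le K\sum_X\tfrac12\bigl(1+\E r_X^2\bigr)
\]
by $|r|\le (1+r^2)/2$, which avoids the square root that Jensen would introduce. Combining this with Step 1 gives the affine bound in $-S$. If $d_H$ is taken to be the squared distance, the same argument goes through directly via $(\sum a)^2\le N\sum a^2$.

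The main obstacle is Step 2, specifically making the shared-context coupling rigorous. The learned high-level SCM must be set up so that its exogenous noise coincides with the low-level noise; otherwise the residuals $r_X$ do not telescope into $\delta_X$. I would first try to define $\mH_{\Phi,\Theta}$ exactly as the collapsed learned f-SCM with the true $\mbu$. If a clean identity $\delta_X\approx|r_X|$ fails, I would fall back to bounding $\delta_X$ by $|r_X|$ plus propagated terms.
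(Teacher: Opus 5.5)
Your skeleton matches the paper's proof. The Gaussian likelihood turns $-S$ into a weighted sum of squared residuals plus a constant. Lipschitz constants then propagate errors along the acyclic factor graph; your topological induction with path-sum constant $K$ is the paper's Neumann series for the nilpotent matrix $MN$. Your squared-distance choice of $d_H$ is also the paper's. Your $\ell_1$ variant via $|r|\le(1+r^2)/2$ is formally affine in $S$, but it carries an additive constant that does not vanish with the residuals, so it is weaker.

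The real problem is in Step~2, and it comes from your definition of $\mH_{\Phi,\Theta}$. Your likelihood residual is $r_X=\mbx_X-F_X^\Theta(R_{j:}\hat\tau(\mbx))$, but your high-level mechanism outputs $F_X^\Theta(R_{j:}\hat\mbz)+\varepsilon_X(\mbu)$. The source term of the recursion is therefore $|r_X-\varepsilon_X|$, and the claimed $e_Z\le K\sum_X|r_X|$ does not follow.

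Your repair, ``defining the shared noise as the model residual'', is not admissible. $r_X$ depends on $\mbx=\Solve(\mL^i;\mbu)$, hence on the intervention $i$ and on the true low-level mechanisms. It is therefore not a function of $\mbu$ alone and cannot be the exogenous noise of a fixed SCM $\mH_{\Phi,\Theta}$. If you allowed it anyway, the same induction as in the proof of Theorem~\ref{th:probabilistic_exact_transfo} would give $\hat\mbz=\hat\tau(\mbx)$ exactly. That yields $\hat\Delta=0$ for every $\Theta$, which is vacuous and unrelated to $S$. Taking $\varepsilon_X$ to be the true noise fails too: the true $F_X(\cdot,\mbu)$ need not be additive in $\mbu$. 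Even when it is, you only get $|r_X|+|\varepsilon_X|$, which adds a non-vanishing $\E|\varepsilon_X|$ term.

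The fix, which is exactly what the paper does, is to use the same function in the likelihood and in the high-level mechanism. Let $\mH^{\hat\omega(i)}_{\Phi,\Theta}$ set $\hat\mbx_X=F_X^\Theta(R_{j:}\hat\mbz)$ for $X\notin\mbI$, with no added noise, and $\hat\mbx_X=\mbx_X$ for $X\in\mbI$. This clamping to the shared intervened value is where shared context enters, not a noise coupling. It is also why intervened coordinates contribute $\delta_X=0$, rather than because their residual vanishes. Then
\[
\mbx_X-\hat\mbx_X = r_X + \bigl(F_X^\Theta(R_{j:}\hat\tau(\mbx))-F_X^\Theta(R_{j:}\hat\mbz)\bigr),
\]
so $\delta_X\le|r_X|+L_X\sum_{Z'\in\Pa_X}e_{Z'}$ holds exactly. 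The true noise sits inside $r_X$, where $S$ already penalizes it. With this change your ``fallback'' is the paper's argument, and the rest of your plan (with squared $d_H$) goes through.
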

In words, under regularity conditions on the learned mechanisms, the learned high-level model $\mH_{\Phi,\Theta}$ is an $\eta$-$(\hat\tau-\hat\omega)$-approximate transformation of the ground-truth low-level model $\mL$, and $\eta$ diminishes as a linear function of the learning objective diminishes.

\subsection{Practical implementation}
We implement\footnote{See \url{https://github.com/theosaulus/ucad}.} this constrained optimization problem using PyTorch \citep{ansel2024pytorch} and Cooper \citep{gallegoPosada2025cooper}.

\subsection{Empirical results}
\parhead{Synthetic.}
We validate UCAD on synthetic f-SCMs with setups ranging from linear noiseless mechanisms with layered structure to nonlinear mechanisms with Gaussian noise on variables $X$ on non-layered DAGs. In all settings, ground-truth f-DAGs are generated with enforced anchors and models are trained on a mix of observational and interventional regimes. See Appendix \ref{sec:experiments} for details.

For each setup, we generate $5$ DAGs, on which we train $60$ UCAD models with varying hyperparameters and seed. For each DAG, we select the configuration with lowest validation negative log-likelihood (NLL) among those that reached constraint feasibility.

Since we aim to assess the recovery of latent factors, we pay particular attention to the mean correlation coefficient (MCC), which finds the alignment with highest absolute Pearson correlation between the learned and ground-truth factors. When dealing with arbitrary non-linear transformations, we consider MCC-RDC, which is based on the randomized dependence coefficient \citep{lopezpaz2013randomizeddependencecoefficient}. Both metrics are invariant to permutation of the latents.

In the case where we expect the f-SCM to be recoverable up to linear reparametrization (i.e., when $F_X$ and/or $F_Z$ are affine), a perfect MCC-Pearson of $1$ is theoretically attainable.
The performance of our model is fairly invariant to the number of variables ($16, 64$ or $128$) and factors ($3, 6$ or $8$), to whether the DAG has a layered structure or not, and to whether additive Gaussian noise is added on the variables. When averaging over all these cases (13 experiments) we measure an average MCC (Pearson) of $0.83$. Importantly, the variability we observe between cases does not seem to depend on whether both $F_X,F_Z$ are linear or only one.

We report MCC for a set of experiments in Table~\ref{tab:synthetic_results_mcc_mini}. In addition, we report MCC after applying random orthogonal transformations to both ground-truth factors and learned factors, to contextualize the results. MCC is equal to $1$ when the latent factors are perfectly recovered, up to linear reparametrization and permutation of indices, and decreases when the factors are mixed. This comparison shows that our method recovers latents that are not only  related to the ground-truth factors, but are also aligned with their coordinates.

\begin{table}[t]
\centering
\caption{Test MCC (Pearson) assessing reconstruction of ground-truth latent factors when $F_X, \hat F_X$ are linear and $F_Z, \hat F_Z$ are non-linear (mean $\pm$ std over $5$ DAGs, of the UCAD model achieving the lowest validation feasible objective). See Table \ref{tab:synthetic_results_mcc} for more details.}
\label{tab:synthetic_results_mcc_mini}
\resizebox{\linewidth}{!}{
\centering
\begin{tabular}{cccccc}
\toprule
Noisy & Layered & $n$-$m$ & Learned & {\begin{tabular}[c]{@{}c@{}}Ground-truth\\ rotated\end{tabular}}& {\begin{tabular}[c]{@{}c@{}}Learned\\ rotated\end{tabular}}\\
\hline
\multirow{2}{*}{$\times$} & \multirow{2}{*}{$\checkmark$} & 16-3  & $0.80\pm 0.16$ & $0.70 \pm 0.03$ & $0.65 \pm 0.11$ \\
                        &                             & 64-6  & $0.93\pm 0.08$ & $0.69 \pm 0.03$ & $0.64 \pm 0.04$ \\
\hline
\multirow{2}{*}{$\times$} & \multirow{2}{*}{$\times$}   & 16-3 & $0.73\pm 0.13$ & $0.68\pm 0.09$ &  $0.62 \pm 0.09$\\
                        &                           & 64-6 & $0.82\pm 0.03$ & $0.64 \pm 0.05$ & $0.58 \pm 0.04$ \\
\hline
\multirow{2}{*}{$\checkmark$} & \multirow{2}{*}{$\times$} & 16-3 & $0.82\pm 0.15$ & $0.68\pm0.05$ & $0.61 \pm 0.10$ \\
                            &                         & 64-6 & $0.78\pm 0.11$ & $0.67\pm0.03$ & $0.59 \pm 0.06$\\
\bottomrule
\end{tabular}
}
\end{table}

In the case where both $F_X,F_Z$ are non-linear, theory predicts that can only recover the factors up to an arbitrary bijective transformation, which results in a drop of MCC-Pearson ($0.63$). The MCC-RDC score remains high, at $0.79$ in the 64 variables case, and $0.70$ in the 16 variables case.

These experiments tend to indicate that when assumptions are satisfied, it is possible to train a model that is capable to recover the ground truth factors, following theory.
We still note a significant variability between each example, both in terms of performance and in which hyperparameters end up reaching the lowest validation NLL; algorithmic improvements and better search may help future developments.

\parhead{Neural network interpretability.}
To test UCAD on a real DNN, we consider the problem of divisibility by 6. Solving this problem is intuitively performed by checking divisibility by 2 (by looking at the last digit) and by 3 (by checking whether the sum of all digits is divisible by 3, recursively).

Let the input be $n\in\{0,\dots,N\}$ represented in base $10$ as $(d_0,\dots,d_{K-1})\in\{0,\dots,9\}^K$, and let the output be $y=\vone[6\mid n]\in\{0,1\}$. We consider the concepts:
\begin{align*}
c_2=\vone[2\mid n],\qquad c_3=\vone[3\mid n],\qquad c_6=c_2\wedge c_3.
\end{align*}
To match the intuitive process of computing $c_3$, we also consider an alternative concept $\tilde c_3=\sum_{k}d_k$.
This problem possesses natural ``negative controls'' that are irrelevant: divisibility by an unrelated prime (e.g. $5, 7, 11$).

We train five MLPs with $3$ hidden layers of size $32$ to solve this task for digits between 0 and 1 million, until convergence. Then, we gather the $96$ activations for unseen numbers and train our UCAD model with $3$ factors and report reconstruction metrics in Table \ref{tab:div6_results_mini}.

\begin{table}[htb]
\centering
\caption{Test metrics assessing reconstruction of positive and negative concepts by learned factors for the divisibility-by-6 task (mean $\pm$ std over 5 trained MLPs, of the UCAD model achieving lowest validation feasible objective). See Table \ref{tab:div6_results} for more details.}
\label{tab:div6_results_mini}
\resizebox{0.85\linewidth}{!}{
\begin{tabular}{lcc}
\toprule
Metric &  \multicolumn{2}{c}{96 neurons, 3 factors}\\
\midrule
NLL & \multicolumn{2}{c}{$-0.30 \pm 0.30$} \\
DAG Violation & \multicolumn{2}{c}{$1.84\cdot 10^{-6} \pm 3.98\cdot 10^{-6}$} \\
\midrule
& Positive concepts & Negative controls \\
MCC (Pearson) & $0.41 \pm 0.09$ & $0.03 \pm 0.01$\\
MCC (RDC) & $0.72 \pm 0.05$ & $0.07 \pm 0.01$ \\
\bottomrule
\end{tabular}
}
\end{table}

We measure a stronger reconstruction of hypothesized concepts than with negative controls (average MCC-RDC of $0.72$ vs. $0.07$ over 5 experiments). In particular, one of the factors has an average Pearson correlation of $0.80$ with $c_2$, which is a strong signal that it is consistently implemented by the network.
Reasonably high MCC-RDC coefficients can be observed for $c_6$ and $c_3$ ($0.55$ and $0.32$), especially compared to negative controls which are not reconstructed as well (at most $0.14$).
These results remain difficult to interpret, because a low agreement with some concept could sometimes be an indication that the DNN implements a different strategy to solve the problem, for example a heuristic that works only up to 1 million.

\section{Discussion and future work}

In this paper, we demonstrated that it is possible to discover causal abstractions directly from low-level measurements without relying on an expert to propose a candidate high-level model. By leveraging factor directed acyclic graphs and introducing the anchor assumption, we proved that low-rank causal discovery methods can identify high-level latent factors up to simple reparameterization. We translated these theoretical guarantees into a practical learning objective and empirically tested it on both synthetic datasets and a small neural network.

While our theoretical results provide a foundation for UCAD, they rely on knowing the true number of latent factors. A direction for future work is understanding the behavior of the learning objective when this number is misspecified.

Scaling our approach to larger and more complex systems is another direction. Our current experiments on the divisibility task highlight the difficulty of interpreting the heuristics learned by deep neural networks. Applying it to larger models in language, vision, or to noisy scientific data, will require finding scalable optimization procedures.

In particular, the hope of a causal abstraction in mechanistic interpretability lies in its ability to reliably control a system's output. Future research should focus on steering experiments, to test whether intervening on the unsupervised discovered high-level factors can reliably alter the behavior of DNNs on complex tasks.

\newpage
\begin{acknowledgements}

    Théo Saulus thanks Damien Ferbach for discussions on the algebraic structure over high-level interventions, Juan Ramirez for advices in implementing the constrained optimization problem, and Julien Boussard for general proofreading and comments.

    This research was partially supported by the NSERC Discovery Grant RGPIN-2023-04869 and by the Canada CIFAR AI Chair Program. Simon Lacoste-Julien is a CIFAR Associate Fellow in the Learning in Machines \& Brains program.

    This research was enabled in part by compute resources provided by Mila (\url{mila.quebec}).
\end{acknowledgements}

\bibliography{references}

\newpage
\onecolumn
\title{Unsupervised Causal Abstractions Discovery \\(Supplementary Material)}
\appendix

\section*{Appendix}
\begingroup
\makeatletter
\renewcommand*\l@section[2]{%
  \ifnum \c@tocdepth >\z@
    \addpenalty\@secpenalty
    \addvspace{0.1em \@plus\p@}%
    \setlength\@tempdima{1.5em}%
    \begingroup
      \parindent \z@ \rightskip \@pnumwidth
      \parfillskip -\@pnumwidth
      \leavevmode \bfseries
      \advance\leftskip\@tempdima
      \hskip -\leftskip
      #1\nobreak\hfil \nobreak\hb@xt@\@pnumwidth{\hss #2}\par
    \endgroup
  \fi}
\makeatother
\renewcommand{\contentsname}{Contents}
\tableofcontents
\endgroup
\newpage

\section{Additional figure}
We illustrate Boolean matrix factorization:
\[A_{ij}=1
\quad\iff\quad \exists k\in[m], \ Q_{ik}=1 \text{ and } R^\top_{kj}=1
\quad\iff\quad \exists k\in[m], X_i \rightarrow Z_k \rightarrow X_j\]
\begin{figure}[h]
    \centering
    \resizebox{0.2\columnwidth}{!}{
        \begin{tikzpicture}[
    thick_bracket/.style={line width=1.5pt},
    dot_line/.style={line width=1.5pt, line cap=round, dash pattern=on 0pt off 12pt}
]

\draw[thick_bracket] (-0.5, 2) -- (-0.7, 2) -- (-0.7, -2) -- (-0.5, -2);
\draw[thick_bracket] (0.5, 2) -- (0.7, 2) -- (0.7, -2) -- (0.5, -2);
\node at (0, 0) {\huge $Q$};

\draw[thick_bracket] (1.3, 2) -- (1.1, 2) -- (1.1, -2) -- (1.3, -2);
\draw[thick_bracket] (4.9, 2) -- (5.1, 2) -- (5.1, -2) -- (4.9, -2);

\draw[thick_bracket] (1.3, 3.5) -- (1.1, 3.5) -- (1.1, 2.5) -- (1.3, 2.5);
\draw[thick_bracket] (4.9, 3.5) -- (5.1, 3.5) -- (5.1, 2.5) -- (4.9, 2.5);
\node at (3, 3) {\huge $R^\top$};

\draw[dot_line] (-0.6, -1.0) -- (3.5, -1.0);

\draw[dot_line] (4.0, 4.0) -- (4.0, -0.5);

\node at (4.0, -1.0) {\huge $A_{ij}$};

\node at (-2.0, -1.0) {\huge $X_i$};
\draw[->, line width=1.5pt] (-1.4, -1.0) -- (-0.9, -1.0);

\node at (-0.1, 3.1) {\huge $Z_k$};
\draw[->, line width=1.5pt] (0, 2.2) -- (0, 2.7);
\draw[->, line width=1.5pt] (0.4, 3) -- (0.9, 3);

\node at (4.0, 4.5) {\huge $X_j$};
\draw[->, line width=1.5pt] (4.0, 3.5) -- (4.0, 4.0);

\end{tikzpicture}
    }
    \caption{Illustration of Boolean matrix factorization.}
    \label{fig:QR_product_schema}
\end{figure}

\section{Proofs of the theoretical results}\label{sec:proofs_theory}
\subsection{Proof of Theorem \ref{th:probabilistic_exact_transfo}}\label{sec:proofs_theory_th1}

We will first prove two lemmas related to the algebraic structure of $\mI_L$ and $\mI_H$, before we prove the theorem.

We have previously described that low level hard interventions $a\in\mI_L$ are characterized by
\begin{itemize}
    \item their support $\mbI_L$
    \item for each $X\in\mbI_L$ a constant assignment $C_X$ by which $F_X$ is replaced.
\end{itemize}

At the high level, recall how we define the collapsed mechanisms for each $Z\in\mZ$:
\begin{align*}
    F^{H}_Z : \Val_{\Pa^H_Z} \times \Val_\mU &\to \Val_Z \\
    \mbz, \mbu &\mapsto F_Z\Big(\{F_X(\mbz_{\Pa_X}, \mbu)\}_{X\in \Pa_Z}\Big)
\end{align*}

We characterize interventions $\alpha\in\mI_H$ by
\begin{itemize}
    \item their support $\mbI_H$
    \item for each $Z\in\mbI_H$ a ``mechanistic support'' $\mbK_Z\subseteq\Pa_Z$ which describes what mechanisms $\{F_X\}_{X\in\Pa_Z}$ are replaced inside of each $F_Z^H$
    \item for each $Z\in\mbI_H$ and $X\in\mbK_Z$ a constant assignment $C_X$ by which $F_X$ is replaced.
\end{itemize}

We therefore define the following partial order relation over $\mI_H$:

\begin{lemma}[Partial order on $\mI_H$]\label{lem:partial_order_IH}
    For $\alpha,\beta\in\mI_H$, let $\preceq_H$ be the relation defined as:
    \begin{align*}
        \alpha\preceq_H \beta  \iff \begin{cases}
            \mbI^\alpha_H \subseteq \mbI^\beta_H \\
            \forall Z\in \mbI^\alpha_H, \ \mbK^\alpha_Z \subseteq\mbK^\beta_Z \\
            \forall Z\in \mbI^\alpha_H, \ \forall X\in\mbK^\alpha_Z, \ C^\alpha_X = C^\beta_X
        \end{cases}
    \end{align*}
    $\preceq_H$ is a partial order.
\end{lemma}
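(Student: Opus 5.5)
The plan is to check the three axioms of a partial order (reflexivity, antisymmetry, transitivity) directly from the definition of $\preceq_H$. To do so we treat each $\alpha\in\mI_H$ as the data triple
\[
\bigl(\mbI^\alpha_H,\ (\mbK^\alpha_Z)_{Z\in\mbI^\alpha_H},\ (C^\alpha_X)_{Z\in\mbI^\alpha_H,\,X\in\mbK^\alpha_Z}\bigr),
\]
which is the characterization given just before the lemma. We then argue that two elements of $\mI_H$ coincide exactly when these data coincide, since the data fully determine the replaced mechanisms $I_Z^H\langle F_Z^H\rangle$. One small point deserves a remark. A constant $C_X$ is attached to a low-level variable $X$, yet $X$ may appear in $\mbK_Z$ for several $Z$. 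Since $\mI_H=\omega(\mI_L)$ comes from a single hard intervention, these constants are consistent across the different $Z$. So we can index $C_X$ by $X$ alone, as the definition does.

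\textbf{Reflexivity.} This is immediate: $\mbI^\alpha_H\subseteq\mbI^\alpha_H$, $\mbK^\alpha_Z\subseteq\mbK^\alpha_Z$, and $C^\alpha_X=C^\alpha_X$.

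\textbf{Transitivity.} Suppose $\alpha\preceq_H\beta$ and $\beta\preceq_H\gamma$. Inclusion of supports chains: $\mbI^\alpha_H\subseteq\mbI^\beta_H\subseteq\mbI^\gamma_H$. Take $Z\in\mbI^\alpha_H$. Then $Z\in\mbI^\beta_H$, so $\mbK^\alpha_Z\subseteq\mbK^\beta_Z\subseteq\mbK^\gamma_Z$. Now take $X\in\mbK^\alpha_Z$. Then $X\in\mbK^\beta_Z$, and the second hypothesis applies, giving $C^\alpha_X=C^\beta_X=C^\gamma_X$.

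\textbf{Antisymmetry.} This is the step that needs the most care. Suppose $\alpha\preceq_H\beta$ and $\beta\preceq_H\alpha$. The two inclusions give $\mbI^\alpha_H=\mbI^\beta_H$, and for every $Z$ in this common support, $\mbK^\alpha_Z=\mbK^\beta_Z$. The constants then agree on every $X$ in the common mechanistic supports. Hence $\alpha$ and $\beta$ have identical characterizing data. We conclude by noting that, for each $Z$ in the support, the intervened mechanism
\[
F_Z\Big(\{F^*_X(\mbz_{\Pa_X},\mbu)\}_{X\in\Pa_Z}\Big)
\]
is determined entirely by $\mbK_Z$ and the constants $(C_X)_{X\in\mbK_Z}$. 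Here $F^*_X=C_X$ if $X\in\mbK_Z$ and $F^*_X=F_X$ otherwise. Outside the support, the mechanisms stay equal to $F_Z^H$. Therefore $\alpha=\beta$ as interventions.

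The only real obstacle is this identification of an intervention with its data. If two different data could yield the same intervention, antisymmetry would still hold, because we would be working on equivalence classes; but the relation might then fail to be well defined on $\mI_H$. I would handle this by declaring that $\mI_H$ is identified with the set of such data triples, as the characterization in the text implicitly does. Under that identification, antisymmetry reduces to equality of the triples.
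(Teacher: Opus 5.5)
Your proposal is correct and follows essentially the same route as the paper: reflexivity is immediate, transitivity chains the inclusions and constant equalities (using $Z\in\mbI^\beta_H$ and $X\in\mbK^\beta_Z$ to invoke $\beta\preceq_H\gamma$), and antisymmetry reduces to equality of the characterizing data $(\mbI_H,(\mbK_Z),(C_X))$. You also add two remarks the paper leaves implicit when it concludes that $\alpha,\beta$ ``describe the exact same high-level intervention'': that $\mI_H$ must be identified with these data triples, and that each $C_X$ is consistent across different $Z$ because every element of $\mI_H$ comes from a single low-level hard intervention.
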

\begin{proof}
Reflexivity is immediate.

For transitivity, consider $\alpha,\beta,\gamma\in\mI_H$ such that $\alpha \preceq_H \beta$ and $\beta \preceq_H\gamma$. We have:
\begin{itemize}
    \item $\mbI^\alpha_H \subseteq \mbI^\beta_H \subseteq\mbI^\gamma_H$
    \item for any $Z\in\mbI^\alpha_H$, we have $\mbK^\alpha_Z \subseteq \mbK^\beta_Z$, and for any $Z'\in\mbI^\beta_H$, we have $\mbK^\beta_{Z'} \subseteq \mbK^\gamma_{Z'}$. So in particular,
    \begin{align*}
        \forall Z\in\mbI^\alpha_H, \quad \mbK^\alpha_Z \subseteq \mbK^\gamma_Z
    \end{align*}
    \item for any $Z\in \mbI^\alpha_H$ and $X\in\mbK^\alpha_Z$, we have $C^\alpha_X = C^\beta_X$, and for any $Z'\in \mbI^\beta_H$ and $X'\in\mbK^\beta_{Z'}$, we have $C^\beta_{X'} = C^\gamma_{X'}$. So in particular,
    \begin{align*}
        \forall Z\in \mbI^\alpha_H, \ \forall X\in\mbK^\alpha_Z, \ C^\alpha_X = C^\gamma_X
    \end{align*}
\end{itemize}

For antisymmetry, consider $\alpha,\beta\in\mI_H$ such that $\alpha\preceq_H\beta$ and $\beta\preceq_H\alpha$. We have:
\begin{itemize}
    \item $\mbI^\alpha_H \subseteq \mbI^\beta_H \subseteq\mbI^\alpha_H$ so $\mbI^\beta_H =\mbI^\alpha_H$
    \item for any $Z\in\mbI^\alpha_H$, we have $\mbK^\alpha_Z \subseteq \mbK^\beta_Z$, and for any $Z'\in\mbI^\beta_H$, we have $\mbK^\beta_{Z'} \subseteq \mbK^\alpha_{Z'}$. So in particular, \begin{align*}
        \forall Z\in\mbI^\alpha_H, \ \mbK^\alpha_Z = \mbK^\beta_Z
    \end{align*}
    \item for any $Z\in \mbI^\alpha_H$ and $X\in\mbK^\alpha_Z$, we have $C^\alpha_X = C^\beta_X$, and for any $Z'\in \mbI^\beta_H$ and $X'\in\mbK^\beta_{Z'}$, we have $C^\beta_{X'} = C^\alpha_{X'}$. So in particular,
    \begin{align*}
        \forall Z\in \mbI^\alpha_H, \ \forall X\in\mbK^\alpha_Z, \ C^\alpha_X = C^\beta_X
    \end{align*}
\end{itemize}
Finally, $\alpha,\beta$ describe the exact same high-level intervention.

This concludes the proof.
\end{proof}

\begin{lemma}\label{lem:properties_omeaga_IH}
    $\mI_L$ and $\mI_H$ can be equipped with partial orders $\preceq_L$ and $\preceq_H$, and $\omega$ is order preserving with respect to them.
\end{lemma}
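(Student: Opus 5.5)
We need partial orders on $\mI_L$ and $\mI_H$ and a proof that $\omega$ is monotone. For $\mI_H$ we take the relation $\preceq_H$ of Lemma~\ref{lem:partial_order_IH}, which is already known to be a partial order. For $\mI_L$ we define the analogous order on hard interventions: for $a,b\in\mI_L$, set $a\preceq_L b$ iff $\mbI^a_L\subseteq\mbI^b_L$ and $C^a_X=C^b_X$ for all $X\in\mbI^a_L$. In words, $b$ extends $a$ by intervening on more variables while agreeing with $a$ wherever $a$ intervenes. This is the standard order on hard interventions used by \citet{rubenstein2017causal} and \citet{beckersAbstractingCausalModels2019}.

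First we check that $\preceq_L$ is a partial order, mirroring the proof of Lemma~\ref{lem:partial_order_IH}. Reflexivity is trivial. Transitivity follows from the transitivity of set inclusion together with the chain of equalities $C^a_X=C^b_X=C^c_X$ for $X\in\mbI^a_L$. For antisymmetry, two-sided inclusion forces equal supports and equal constants on the common support, so the two interventions coincide.

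Second, we make the image $\omega(a)$ explicit in the characterization of high-level interventions. Its support is $\mbI_H=\bigcup_{X\in\mbI_L}\Ch_X$. For each $Z\in\mbI_H$, the mechanistic support is $\mbK_Z=\Pa_Z\cap\mbI_L$: these are exactly the $X\in\Pa_Z$ whose $F_X$ is replaced by $F_X^*=C_X$ inside $I^H_Z$. The constants are the inherited $C_X$. One subtlety needs checking. For each $Z$ in the support, $\mbK_Z$ must be nonempty and the triple $(\mbI_H,\{\mbK_Z\},\{C_X\})$ must determine $\omega(a)$ uniquely. Both hold because $Z\in\Ch_X$ iff $X\in\Pa_Z$ in $G$.

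Third, take $a\preceq_L b$ and verify the three conditions for $\omega(a)\preceq_H\omega(b)$.
\begin{itemize}
\item Supports: $\mbI^a_L\subseteq\mbI^b_L$ gives $\bigcup_{X\in\mbI^a_L}\Ch_X\subseteq\bigcup_{X\in\mbI^b_L}\Ch_X$.
\item Mechanistic supports: for $Z\in\mbI^{\omega(a)}_H$ we have $\mbK^{\omega(a)}_Z=\Pa_Z\cap\mbI^a_L\subseteq\Pa_Z\cap\mbI^b_L=\mbK^{\omega(b)}_Z$.
\item Constants: for $X\in\mbK^{\omega(a)}_Z$ we have $X\in\mbI^a_L$, so $C^{\omega(a)}_X=C^a_X=C^b_X=C^{\omega(b)}_X$.
\end{itemize}

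The main obstacle is not these inclusions but making sure the parametrization of $\mI_H$ by $(\mbI_H,\mbK,C)$ is well defined, i.e.\ that $\preceq_H$ really is a relation on interventions and not merely on their descriptions. Two distinct triples could in principle induce the same functions $I^H_Z$, for instance if some $F_Z$ ignores an input, or if $F_X=C_X$ already. We would handle this by taking $\mI_H$ to be the set of triples realized as $\omega(a)$, so that $\omega$ is literally defined in terms of these triples. With that convention, Lemma~\ref{lem:partial_order_IH} applies and the argument above goes through.
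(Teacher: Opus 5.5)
Your proposal is correct and follows the paper's proof essentially step by step. You use the same natural order $\preceq_L$ on hard interventions, the order $\preceq_H$ from Lemma~\ref{lem:partial_order_IH}, and the same three-condition check with $\mbK_Z=\Pa_Z\cap\mbI_L$ for $\omega(a)$. Two things you add are left implicit in the paper and are careful improvements: you explicitly verify that $\preceq_L$ is a partial order, and you note that $\mI_H$ must be identified with the triples realized by $\omega$ so that $\preceq_H$ is well defined.
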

\begin{proof}
Let's address each point.

\paragraph{Defining $\preceq_L$}
We choose what is often referred to as the ``natural'' partial order \citep{beckersAbstractingCausalModels2019}: for $a,b\in\mI_L$,
\begin{align*}
    a\preceq_L b  \iff \begin{cases}
        \mbI^a_L \subseteq \mbI^b_L \\
        \forall X\in\mbI^a_L, \ C^a_X = C^b_X
    \end{cases}
\end{align*}

\paragraph{Defining $\preceq_H$}
We choose $\preceq_H$ as introduced in Lemma \ref{lem:partial_order_IH}.

\paragraph{$\omega$ is order preserving}
Let $a,b\in\mI_L$ such that $a\preceq_L b$. In other words, we have:
$\mbI^a_L \subseteq \mbI^b_L$ and $\forall X\in\mbI^a_L, \ C^a_X = C^b_X$

For all $Z\in\mbI_H^{\omega(a)} = \bigcup_{X\in\mbI^a_L} \Ch_X$, the intervention $\omega(a)$ sets the internal mechanisms $F_X$ to a $C_X$ for all $X\in\mbI^a_L\cap\Pa_Z$.

Since $\mbI^a_L \subseteq \mbI^b_L$, we obtain:
\begin{align*}
    \mbI_H^{\omega(a)} = \bigcup_{X\in\mbI^a_L} \subseteq \bigcup_{X\in\mbI^b_L} = \mbI_H^{\omega(b)}
\end{align*}

Additionally, we get:
\begin{align*}
    \forall Z\in\mbI^{\omega(a)}_H, \quad \mbI^a_L\cap\Pa_Z \subseteq \mbI^b_L\cap\Pa_Z
\end{align*}

Then, since we have $\forall X\in\mbI^a_L, \ C^a_X = C^b_X$, we get:
\begin{align*}
    \forall Z\in\mbI^{\omega(a)}_H, \ \forall X\in\mbI^a_L\cap\Pa_Z, \quad C^a_X = C^b_X
\end{align*}

Putting things together:
\begin{align*}
    \begin{cases}
        \mbI_H^{\omega(a)} \subseteq \mbI_H^{\omega(b)} \\
        \forall Z\in\mbI^{\omega(a)}_H, \quad \mbI^a_L\cap\Pa_Z \subseteq \mbI^b_L\cap\Pa_Z \\
        \forall Z\in\mbI^{\omega(a)}_H, \quad \forall X\in\mbI^a_L\cap\Pa_Z, \quad C^a_X = C^b_X
    \end{cases}
\end{align*}
That is to say, $\omega(a)\preceq_H\omega(b)$.

\end{proof}

\begin{customthm}{\ref{th:probabilistic_exact_transfo}}[Uniform transformation]
    With $\tau$ and $\omega$ defined above, $\mH$ is a uniform $(\tau-\omega)$-transformation of $\mL$.
\end{customthm}
\begin{proof}
We verify one by one the points of Definition \ref{def:proba_exact_transfo}.

\paragraph{Basic properties}
$\mI_L$ and $\mI_H$ are sets of interventions, and  $(\mI_L, \preceq_L)$ and $(\mI_H, \preceq_H)$ define partially ordered sets, by Lemma \ref{lem:properties_omeaga_IH}.

$G_L$ and $G_H$ are acyclic, so $\mL^i$ and $\mH^{\omega(i)}$ have unique solutions under the $\Solve$ operator for a given context $\mbu\in\Val_\mU$.

$\tau$ is surjective if we restrict its codomain to its image on the solution of $\mL$ under $\mI_L$.

$\omega$ is surjective: by construction, $\mI_H$ contains exactly the high-level parametric interventions induced by the allowed low-level hard interventions in $\mL$.

$\omega$ is order preserving by Lemma \ref{lem:properties_omeaga_IH}.

\paragraph{Commutativity of the diagram at fixed context}
To prove the commutativity of the diagram, we will first show that commutativity holds for a fixed context $\mbu\in\Val_\mU$, before considering a probability distribution on it.
Let $i\in \mI_L$ with support $\mbI_L\subseteq\mX$ and let $\mbu\in\Val_\mU$. Let $\mbx=\Solve(\mL^i; \mbu)\in \Val_\mX$. Let's prove by induction that $\tau(\mbx)$ equals $\mbz = \Solve(\mH^{\omega(i)};\mbu)$.
\begin{itemize}
    \item Base case. Let's fix $Z$ a source node in $G_H$. This means that $\Pa_Z^H=\varnothing$ and that $\forall X\in\Pa_Z,\  \Pa_X=\varnothing$. On the one hand, we have:
    $$(\mbz)_Z = I_Z^{H}(\varnothing, \mbu) = F_Z\Big(\{F^*_X(\varnothing, \mbu)\}_{X\in \Pa_Z}\Big), \quad \text{where} \ F_X^*=\begin{cases} C_X & X\in \mbI_L \\ F_X & X\notin \mbI_L \end{cases}$$
    On the other hand, by definition,
    $$(\tau(\mbx))_Z = F_Z\Big((\mbx)_{\Pa_Z}\Big) = F_Z\Big(\{F^*_X(\varnothing, \mbu)\}_{X\in \Pa_Z}\Big)$$
    Hence $(\tau(\mbx))_Z=(\mbz)_Z$.
    \item Inductive step. Let $Z\in \mZ$. Suppose that for every ancestor $Z'$ of $Z$ in $G_H$, we have $(\tau(\mbx))_{Z'}=(\mbz)_{Z'}$. Let's prove $(\tau(\mbx))_Z = (\mbz)_Z$. On the one hand, we have:
    \begin{align*}
        \tau(\mbx)_Z
        &= F_Z(\mbx_{\Pa_Z}) && \text{Definition of $\tau$}\\
        &= F_Z\Big(\{F^*_X(F_{Z'}(\mbx_{\Pa_{Z'}})_{Z'\in\Pa_X}, \mbu)\}_{X\in \Pa_Z}\Big) && \mbx = \Solve(\mL^i;\mbu)\\
        &= F_Z\Big(\{F^*_X(\tau(\mbx)_{\Pa_X}, \mbu)\}_{X\in \Pa_Z}\Big) && \text{Definition of $\tau$}\\
        &= F_Z\Big(\{F^*_X(\mbz_{\Pa_X}, \mbu)\}_{X\in \Pa_Z}\Big) && \text{Induction hypothesis}
    \end{align*}
    On the other hand, we have:
    \begin{align*}
        \mbz_Z
        &= I^{H}_Z(\mbz_{Pa_Z}, \mbu) && \mbz= \Solve(\mH^{\omega(i)};\mbu)\\
        &= F_Z\Big(\{F^*_X(\mbz_{\Pa_X}, \mbu)\}_{X\in \Pa_Z}\Big) && \text{Definition of $\omega$}
    \end{align*}
    Thus, $\tau(\mbx)_Z = \mbz_Z$, which concludes the induction.
\end{itemize}

\paragraph{Probabilistic commutativity of the diagram}
Now, keep $i\in\mI_L$ and let $\mbu\sim \Pr$. For any $\mbz\in\Val_\mZ$, we have:
\begin{align*}
\Pr_{\mH^{\omega(i)}}(\mbz)
&= \Pr \left(\{\mbu : \Solve(\mH^{\omega(i)};\mbu)=\mbz\}\right) && \text{Definition of $\Pr_{\mH^{\omega(i)}}$} \\
&= \Pr \left(\{\mbu : \tau(\Solve(\mL^{i};\mbu))=\mbz\}\right) && \text{Result above}\\
&= \Pr \left(\{\mbu, \mbx : \Solve(\mL^{i};\mbu)=\mbx \text{ and } \tau(\mbx)=\mbz\}\right) \\
&= \Pr_{\mL^i} \left(\{\mbx : \tau(\mbx)=\mbz\}\right) && \text{Definition of $\Pr_{\mL^i}$}\\
&= \tau(\Pr_{\mL^i}) (\mbz) && \text{Pushforward}
\end{align*}

In other words, $\mH$ is an exact $(\tau-\omega)$transformation of $\mL$ in the sense of Definition \ref{def:proba_exact_transfo}.

For uniformity, given that context $\mU$ is shared for $\mL$ and $\mH$, for any distribution $\Pr_L$ on the low-level context $\mU$, we simply choose $\Pr_H=\Pr_L$ at the high level. The previous argument gives exactness for this pair, hence $\mH$ is a uniform $(\tau - \omega)$-transformation of $\mL$.
\end{proof}

\subsection{Discussion on the ordering of interventions}
When we apply two interventions $i_1,i_2$ on $\mM$, it might occur that some variables are affected by both $i_1$ and $i_2$. To determine which assignment prevails, \citet{geigerCausalAbstractionTheoretical2025} introduce the notion of intervention algebras, to fix the order in which the interventions are applied:

\begin{definition}[Intervention algebras]\label{def:intervention_algebra}
Let $\mI$ be a set of interventions allowed on $\mV$. We equip it with an associative composition operator $\circ:\mI\times\mI\to\mI$ and an identity element $\id$.
The pair $(\mI,\circ)$, together with $\id$, forms an \emph{intervention algebra} if the following properties are verified:
\begin{itemize}
\item the identity element $\id$ is defined such that $\mM^{id}=\mM$, i.e., no intervention performed.
\item the composition operator $\circ$ is such that \(\forall i_1,i_2\in\mI, \ \mM^{\,i_1\circ i_2}=(\mM^{\,i_1})^{\,i_2}\), and $\circ$ is commutative if $i_1$ and $i_2$ target different variables, and left-annihilative if they target the same:
\begin{align*}
(i_1 \circ i_2)_V =
    \begin{cases}
        (i_2)_V & \text{if } V \in \mbI_2 \\
        (i_1)_V & \text{if } V \in \mbI_1 \setminus \mbI_2 \\
        (\id)_V & \text{if } V \notin \mbI_1 \cup \mbI_2
    \end{cases}
\end{align*}
\end{itemize}
An induced partial order $\preceq$ is defined by $\circ$ such that $\forall i_1, i_2 \in \mI, \ i_1 \preceq i_2 \iff i_2 \circ i_1 = i_2$, i.e., the model $\mM^{i_2 \circ i_1}$ remains identical to $\mM^{i_2}$ if and only if $i_1$ agrees with the replacements performed by $i_2$.
\end{definition}

As shown by \citet{geigerCausalAbstractionTheoretical2025}, hard and soft interventions both can always be equipped with an intervention algebra. This is not the case of parametric interventions, which require more care.

Although the family of parametric intervention we consider might be relatable to intervention algebras, we stay closer to prior notions of exact transformation \citep{beckersAbstractingCausalModels2019, rubenstein2017causal} which only require the existence of partial orderings on $\mI_L,\mI_H$ with respect to which $\omega$ is order preserving.

\subsection{Proof of Theorem \ref{th:boolean_mtx_identifiability}}\label{sec:proofs_theory_th2}

\begin{customlem}{\ref{lem:GL_identifiability}}[Identifiability of $G_L$, adapted from \citet{lopezLargeScaleDifferentiableCausal2022}, Appendix B.2]
    If every factor $Z\in \mZ$ has at least two parents in $G$, then the MEC of $G_L$ reduces to a singleton.
\end{customlem}
\begin{proof}
    Let ($X_i, X_j$) be an unoriented edge in the essential graph of $G_L$. By definition of f-DAGs, there exists a factor $Z\in \mZ$ such that $(X_i, Z)$ and $(Z, X_j)$ are part of $G$. Because the edge is unoriented, it cannot be part of a v-structure in $G_L$.

    In the case that every factor $Z\in \mZ$ has at least two parents, there can be no unoriented edges, and therefore the graph is identifiable.
\end{proof}

Recall Definition \ref{def:anchors} and Assumption \ref{assu:in_out_anchor}.
Fix an adjacency matrix $A\in\{0,1\}^{n\times n}$ with $\rank(A)=m\le n$
\begin{lemma}[Rank of the decomposition]\label{lem:decomposition_rank}
For all $Q,R\in \{0,1\}^{n\times m}$ verifying Assumption \ref{assu:in_out_anchor}, we have $\rank(Q\diamond R^\top)=m$.
\end{lemma}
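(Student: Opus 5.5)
We need to show that if $A=Q\diamond R^\top$ with $Q,R\in\{0,1\}^{n\times m}$ satisfying Assumption \ref{assu:in_out_anchor}, then $A$ cannot be written as a Boolean product with fewer than $m$ terms. The upper bound $\rank(A)\le m$ is immediate from the given decomposition, so the whole work is the lower bound. The plan is a fooling-set argument built from anchors.

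For each factor $k\in[m]$, pick an in-anchor $i_k$ (so row $i_k$ of $Q$ is the $k$-th unit vector $e_k$) and an out-anchor $j_k$ (so row $j_k$ of $R$ is $e_k$). Then
\[
A_{i_k j_l}=\bigvee_{p} Q_{i_k p}R_{j_l p} = R_{j_l k} = \vone\{k=l\}.
\]
So the $m\times m$ submatrix of $A$ on rows $\{i_k\}$ and columns $\{j_k\}$ is the identity. In particular the $i_k$ are pairwise distinct, and so are the $j_k$, since distinct $k$ give distinct rows and columns of this submatrix.

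It remains to show that a Boolean rank-$r$ decomposition $A=\bigvee_{t=1}^r q_t r_t^\top$ forces $r\ge m$. Each entry $(i_k,j_k)$ equals $1$, so some rank-one term $t(k)$ has $q_{t(k)}(i_k)=r_{t(k)}(j_k)=1$. Suppose $t(k)=t(l)$ for some $k\ne l$. Then $q_t(i_k)=1$ and $r_t(j_l)=1$, which forces $A_{i_kj_l}=1$ because the Boolean sum has no cancellation. This contradicts the identity submatrix. Hence $k\mapsto t(k)$ is injective, so $r\ge m$.

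There is no substantial obstacle. The only point needing care is the monotonicity of Boolean products: any rank-one term covering two diagonal $1$s also covers an off-diagonal entry, which must therefore be $1$. Equivalently, the identity submatrix is a fooling set of size $m$, and the Boolean rank is at least the size of any fooling set. I will also note that only one in-anchor and one out-anchor per factor are used, which is exactly Assumption \ref{assu:in_out_anchor}.
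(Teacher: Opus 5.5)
Your proof is correct and matches the paper's: the in/out-anchor pairs $(i_k,j_k)$ give an $m\times m$ identity submatrix, and a single rank-one term covering two of its diagonal ones would also cover an off-diagonal zero. Hence at least $m$ terms are needed, and the given factorization supplies the upper bound $m$.
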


\begin{proof}
Let's denote for this proof $\widetilde A :=Q\diamond R^\top =\bigvee_{k=1}^{m} Q_{*k} \wedge R_{k*}^\top$. For each $k\in[m]$, we assume the existence of at least an in-anchor $i_k$ and an out-anchor $j_k$. This implies that $\widetilde A_{i_k j_k}=1$, and $\widetilde A_{i_k j_\ell}=0$ for $k\neq \ell$, because $Q_{i_k l}=0$ for $l\neq k$ and $R_{l j_\ell}=0$ for $l\neq \ell$ by the anchor conditions.

$A$ is defined here as a Boolean factorization with $m$ terms, hence $\mathrm{rank}(\widetilde A)\le m$.

Consider any rank-1 term $q r^\top$ used in a factorization of $\widetilde A$. For the sake of contradiction, assume that it covers both entries $(i_k,j_k)$ and $(i_\ell,j_\ell)$ for $k\neq \ell$, i.e., $q_{i_k}=q_{i_\ell}=1$ and $r_{j_k}=r_{j_\ell}=1$. This implies that $q_{i_k} r_{j_\ell}^\top = 1$, i.e., it also covers $(i_k,j_\ell)$, contradicting $\widetilde A_{i_k j_\ell}=0$. In other words, no single rank-1 term can cover two distinct in/out anchor pair. Thus, at least $m$ rank-1 terms are required, each of them being the only one to cover one specific coordinate, so $\mathrm{rank}(\widetilde A)\ge m$.

Finally, $\mathrm{rank}(\widetilde A) = m$.
\end{proof}

\begin{lemma}\label{lem:not_all_can_be_rpz}
    Not all $A$ of rank $m$ can be represented by $Q\diamond R^\top$ verifying Assumption \ref{assu:in_out_anchor}.
\end{lemma}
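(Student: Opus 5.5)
We prove the statement with an explicit counterexample: a matrix $A$ of Boolean rank $m$ for which every $m$-term factorization $A=Q\diamond R^\top$ fails Assumption \ref{assu:in_out_anchor}. The cleanest route is a necessary condition implied by the anchors, which we then violate. Suppose $Q,R\in\{0,1\}^{n\times m}$ satisfy Assumption \ref{assu:in_out_anchor}. The proof of Lemma \ref{lem:decomposition_rank} then yields in-anchors $i_1,\dots,i_m$ and out-anchors $j_1,\dots,j_m$. Their submatrix satisfies $A_{i_k j_\ell}=\mathbb{1}[k=\ell]$, so $A$ contains an $m\times m$ identity submatrix after permuting rows and columns. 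In other words, $A$ has an isolating set (a fooling set of $1$-entries that pairwise form a $0$ ``cross'') of size $m$.

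It therefore suffices to exhibit $A$ whose Boolean rank strictly exceeds the size of its largest identity-pattern submatrix. We take $n=4$ and the complement of a perfect matching:
\[
A=\begin{pmatrix}0&1&1&1\\1&0&1&1\\1&1&0&1\\1&1&1&0\end{pmatrix}.
\]
The diagonal is zero, so there are no self-loops, though $A$ is not acyclic. If we want $A$ to be a genuine DAG adjacency, we instead place the same pattern in the upper-right off-diagonal block of a larger matrix, using variables $X_1,\dots,X_4$ as sources and $X_5,\dots,X_8$ as sinks, with $A_{i,4+j}=1$ iff $i\neq j$. Rank computations are unaffected by this embedding.

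There are three checks:
\begin{enumerate}
\item \textbf{The Boolean rank of $J-I$ is $4$.} A rank-1 term is an all-ones rectangle $S\times T$ with $S\cap T=\varnothing$. The pairs $(i,j)$ with $i\neq j$ must be covered, and the complement-of-matching matrix for $n=4$ is known to have Boolean rank $4$, the smallest $k$ with $\binom{k}{\lfloor k/2\rfloor}\ge n$. We verify directly that three rectangles cannot cover all $12$ off-diagonal entries. Each covering rectangle corresponds to a subset $S$, with $T\subseteq S^c$. Covering $(i,j)$ requires a rectangle with $i\in S$ and $j\notin S$, so the characteristic vectors of the rows over the $k$ rectangles must form an antichain. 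No antichain of $4$ subsets of a $3$-set has this property, since by Sperner's theorem the largest has size $3$.
\item \textbf{No identity submatrix of size $4$ exists.} Such a submatrix needs rows $i_1,\dots,i_4$ and columns $j_1,\dots,j_4$ with $A_{i_kj_\ell}=0$ for $k\neq\ell$. Each row of $A$ has only one zero, so a row cannot have three zeros among the chosen columns.
\item \textbf{Conclusion.} Combining these with the necessary condition above, no $4$-term factorization of $A$ can satisfy Assumption \ref{assu:in_out_anchor}.
\end{enumerate}

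The main obstacle is the lower bound on the Boolean rank. The Sperner antichain argument is the tool we would use for it, and it should be stated carefully. The remaining steps are routine.
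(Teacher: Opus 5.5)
Your proposal is correct and follows essentially the paper's route. The paper also uses a complement-of-identity counterexample, $\mathbf{1}_{3\times 3}-I_3$, padded with null blocks in the top-right position to get an acyclic $A$ with $m<n$; it simply asserts that no anchored factorization exists, whereas you supply the missing justification (anchors force an $m\times m$ identity submatrix on the in-/out-anchors, and Sperner bounds the Boolean rank from below), and that argument applies verbatim to the paper's $3\times 3$ example, so your move to $n=4$ is harmless but unnecessary.
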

\begin{proof}
    Counter example:
    $$A = \mathbf{1}_{3\times 3} - I_3
    =
    \begin{bmatrix}
    0 & 1 & 1 \\
    1 & 0 & 1 \\
    1 & 1 & 0
    \end{bmatrix}$$
    cannot be represented by $Q,R$ with anchors. Here the rank is $m=n$, but it still illustrates the issue, just add null blocks with this matrix on the top right to get an example with $m<n$.
\end{proof}

Given this counter-example, we will assume in the following that $A$ is generated by a decomposition $Q\diamond R^\top$ that respects Assumption \ref{assu:in_out_anchor}.

\begin{customlem}{\ref{lem:reading_UV_in_A_anchors}}[Reading $Q,R$ in $A$]
    Under Assumption \ref{assu:in_out_anchor}, the columns of $Q$ and the rows of $R^\top$ are readable in $A$.
\end{customlem}

\begin{proof}
We assume that for every factor with index $k\in [m]$ there exists (at least) an in-anchor with index $i\in[n]$. Then one can read the value of each row of $R^\top$ on a row of $A$:
$$\forall j\in [n], \quad A_{ij} = \bigvee_l Q_{il} \wedge R_{lj} = Q_{ik}\wedge R_{kj} = R_{kj}$$
If there exist several in-anchors for a given factor, then this equality is true for all of them.
We get the same result for out-anchor and reading the columns of $Q$ on columns of $A$.

Assuming both in- and out-anchoring for all factors, we can read both $Q$ and $R$ in $A$.
\end{proof}

Now, let's establish that $(Q,R)$ are unique up to trivial permutations of factors.
For two vectors $x,y\in \R^n$, we write $x \le y$ if $\forall i\in[n], x_i\le y_i$.
Recall Definition \ref{def:minimality}

\begin{customlem}{\ref{lem:minimal_rows}}[Minimality-anchor correspondence]
    Under Assumption \ref{assu:in_out_anchor}, in-anchors correspond to the minimal rows of $A$, and out-anchors correspond to the minimal columns of $A$.
\end{customlem}
\begin{proof}
    $(\implies)$ By contraposition, let's prove that if $i$ is not an in-anchor, then $A_{i*}$ is not a minimal row.

    Let $G_i := \{\ell : Q_{i\ell}=1\}$. For $i\in[n]$ not an in-anchor, we have $|G_i|\ge 2$ by definition of an in-anchor. Let $k,l\in G_i,  k\ne l$. Let $j_l$ an out-anchor of $l$, and $i_k$ and in-anchor of $k$.

    We have $A_{i*}=\bigvee_{r\in G_i}R_{r*}$, so in particular $A_{i_k*}=R_{k*}\le A_{i*}.$
    At coordinate $j_l$ we have $R_{l j_l}=1$ and $R_{r j_l}=0$ for all $r\neq l$ (out-anchor of $l$), hence
    $A_{ij_l}=\bigvee_{r\in G_i} R_{r j_l}=R_{l j_l}=1$ while $A_{i_k j_l}=R_{k j_l}=0$.
    Therefore $A_{i_k*}<A_{i*}$, so $A_{i*}$ is not minimal.

    $(\impliedby)$ Let's prove that if $i$ is an in-anchor, then $A_{i*}$ is a minimal row.

    Let $i_k$ be an in-anchor of $k\in[m]$. Suppose that $\exists h\in[n], A_{h*} \le A_{i_k *} = R_{k*}$. We have, in general, $A_{h*}=\bigvee_r Q_{hr}\wedge R_{r*}$.

    If $\exists r\ne k$ such that $Q_{hr}=1$ then $R_{r j_r}=1 \ge R_{k j_r}=0$, which contradicts $A_{h*} \le R_{k*}$.

    Thus, $A_{h*}=Q_{hk} \wedge R_{k*}$. If $Q_{hk} = 0$, then $A_{h*} = (0,\ldots,0)$ and then $A_{i_k *}$ is minimal. If $Q_{hk} = 1$, then $A_{h*}=R_{k*}=A_{i_k*}$ and $A_{i_k *}$ is minimal.

    For the out-anchors, apply the same reasoning to $A^\top$.
\end{proof}

\begin{customthm}{\ref{th:boolean_mtx_identifiability}}[Identifiability of $(Q,R)$]
Under Assumption \ref{assu:in_out_anchor}, any Boolean factorization
$A=\bigvee_{k=1}^m Q_{*k}R_{k*}^\top$
is unique up to a simultaneous permutation of the $m$ factors.
\end{customthm}

\begin{proof}
By Lemma~\ref{lem:minimal_rows}, in-anchors correspond exactly to minimal rows, and each such minimal row equals one row of $R$.

Now take any other factorization $A=\bigvee_{r=1}^m \widetilde Q_{* r}\wedge \widetilde R_{r*}$ that satisfies Assumption \ref{assu:in_out_anchor}.
Fix a minimal row $A_{i*}=R_{k*}$. Because $A_{i*}$ is minimal, it cannot be the Boolean OR of two or more distinct rows among $\{\widetilde R_{r*}\}_{r=1}^m$. Hence there is a unique index $\phi(k)$ with $\widetilde R_{\phi(k)*}=R_{k*}$. This defines a bijection $k\mapsto \phi(k)$ matching the rows of $\widetilde R$ to those of $R$.

By Lemma~\ref{lem:minimal_rows}, out-anchors correspond exactly to minimal columns, and each such minimal column equals one column of $Q$. Therefore, in the alternative factorization there is a unique index $\phi'(k)$ with $\widetilde Q_{* \phi'(k)}=Q_{*k}$.

For a factor $k$, let $i_k$ be any in-anchor and $j_k$ any out-anchor. We have $A_{i_k j_k} = Q_{i_k k}\wedge R_{k j_k} = 1$ and $Q_{i_k \ell}\wedge R_{\ell j_k}=0, \forall\ell\neq k.$
In the alternative factorization we have
\[
A_{i_k j_k} = \bigvee_{r=1}^m \widetilde Q_{i_k r}\wedge \widetilde R_{r j_k}.
\]
Because $j_k$ is an out-anchor of $k$ and $\widetilde R_{\phi(k)*}=R_{k*}$, we get $\widetilde R_{\phi(k) j_k}=1$ while $\widetilde R_{r j_k}=0$ for $r\neq \phi(k)$.
Similarly, because $i_k$ is an in-anchor of $k$ and $\widetilde Q_{* \phi'(k)}=Q_{* k}$, we get $\widetilde Q_{i_k \phi'(k)}=1$ while $\widetilde Q_{i_k r}=0$ for $r\neq \phi'(k)$.

Hence the only index $r$ that can produce the $1$ at $(i_k,j_k)$ in the alternative factorization must satisfy $\phi'(k) = \phi(k)$. Therefore there is a single permutation $P$ of the factors, with
\(\widetilde R = P R\) and \(\widetilde Q = Q P^\top\). This proves uniqueness up to a simultaneous permutation of the factor indices.\end{proof}

Interestingly, we note that our assumption shares similarities with assumptions made in the Boolean matrix factorization literature. In particular, \citet{desantisFactorizationBinaryMatrices2021} obtain a characterization of the notion of ``free rank'' that can be related to anchors, and \citet{mironBooleanDecompositionBinary2021} derive a sufficient condition for the uniqueness of the factorization expressed in terms of vector support that also reminds of anchors.
More connections could potentially be drawn to obtain more general results.

\subsection{Towards a more flexible identifiability result}
We introduce a more flexible condition that does not require that each factor has a in- and out-anchor. We only show a property of this condition, not a full identifiability result.

\begin{definition}[$r$-disjunctedness]
    Columns of $Q$ are $r$-disjunct if
    \begin{align*}
        \forall k\in [m], \forall S\subseteq [m]\backslash \{k\} \quad\text{with}\quad |S|\le r, \ \exists j\in [n] \quad\text{such that}\quad Q_{jk} = 1 \quad\text{and}\quad \forall l\in S, \ Q_{jl}=0
    \end{align*}

    In words, every factor can be distinguished from the union of any combination of at most $r$ other factors by at least one parent variable.
\end{definition}

Example: the following matrix is 1-disjunct but not 2-disjunct:
$$\begin{bmatrix}
1&0&1&1\\
1&1&0&1\\
1&1&1&0\\
0&1&1&1\\
\end{bmatrix}$$
Indeed, for any $k \neq l$, the column $j=l+1\mod 4$ satisfies $Q_{12}=0$ and $Q_{kj}=1$.
$2$-disjunctness fails because every column of $Q$ contains exactly one zero, so no column vanishes on two distinct rows simultaneously. Hence for any $k$ and any pair $S=\{a,b\}\subseteq[m]\setminus\{k\}$, no column $j$ satisfies $Q_{kj}=1$ and $Q_{aj}=Q_{bj}=0$.

We can of course have a similar definition for the rows of $R^\top$.
Note that considering $r=m-1$ defines exactly the in-/out-anchors.

\begin{definition}[Row and column sparsity]
    The row-sparsity of $Q$ is denoted $s(Q) = \max_i \sum_k Q_{ik}$, and similar definition for the column-sparsity of $R^\top$.
\end{definition}

\begin{lemma}[Reading $Q,R$ from $A$]\label{lem:reading_UV_disjuncted_sparse}

Let's assume the following points:
\begin{itemize}
    \item each factor $Z$ has an in-anchor (resp. out-anchor)
    \item the columns of $R$ (resp. $Q$) are $r$-disjunct
    \item $m-1 \ge r \ge s(Q)$ (resp. $s(R)$)
\end{itemize}
Then $(Q,R)$ can be read in $A$.
\end{lemma}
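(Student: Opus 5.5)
The plan is to show that, under these hypotheses, the rows of $R^\top$ are exactly the minimal rows of $A$ (in the sense of Definition~\ref{def:minimality}). Once $R$ is recovered, I will read off $Q$ by a covering test. I treat the case "in-anchors / $R$ $r$-disjunct / $r\ge s(Q)$"; the parenthetical case follows by applying the same argument to $A^\top$. Write $R_{k*}$ for the row of $R^\top$ attached to factor $k$, and $G_i:=\{\ell : Q_{i\ell}=1\}$, so that $A_{i*}=\bigvee_{\ell\in G_i}R_{\ell*}$.

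The $r$-disjunctness of the columns of $R$ says the following. For every $k$ and every $S\subseteq[m]\setminus\{k\}$ with $|S|\le r$, there is a coordinate $j$ with $R_{kj}=1$ and $R_{\ell j}=0$ for all $\ell\in S$. Equivalently, $R_{k*}\not\le\bigvee_{\ell\in S}R_{\ell*}$. Since $m-1\ge r\ge s(Q)\ge 1$ (in-anchors exist, so some row of $Q$ is nonzero), singletons $S=\{\ell\}$ are admissible. This gives two facts: $R_{k*}\neq 0$, and $R_{k*}\not\le R_{\ell*}$ for all $\ell\ne k$. In particular the rows of $R^\top$ are pairwise distinct and pairwise incomparable.

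The first step identifies the minimal rows of $A$, mirroring the proof of Lemma~\ref{lem:minimal_rows}.
\begin{itemize}
\item \emph{In-anchor rows are minimal.} By Lemma~\ref{lem:reading_UV_in_A_anchors}, an in-anchor $i_k$ of $k$ has $A_{i_k*}=R_{k*}\neq 0$. Suppose some nonzero row satisfies $A_{h*}\le R_{k*}$. Then any $\ell\in G_h$ gives $R_{\ell*}\le R_{k*}$, which forces $\ell=k$ by pairwise incomparability. Hence $A_{h*}=R_{k*}$, and the row is minimal.
\item \emph{Non-anchor rows are not minimal.} If $|G_i|\ge 2$, pick $k\ne\ell$ in $G_i$. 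Then $A_{i_k*}=R_{k*}\le A_{i*}$. Disjunctness with $S=\{k\}$ gives a coordinate where $R_{\ell*}$, and hence $A_{i*}$, is $1$ but $R_{k*}$ is $0$. So the inequality is strict, and $A_{i*}$ is not minimal.
\end{itemize}
Consequently, the set of distinct minimal rows of $A$ is exactly $\{R_{k*}\}_{k\in[m]}$, with $m$ distinct elements. This recovers $R$ up to a permutation of factor indices.

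The second step, which I expect to be the crux and the only place where $r\ge s(Q)$ is used, is reading $Q$. I claim that
\[
Q_{ik}=1 \iff R_{k*}\le A_{i*}.
\]
The forward direction is immediate from $A_{i*}=\bigvee_{\ell\in G_i}R_{\ell*}$. For the converse, suppose $R_{k*}\le A_{i*}=\bigvee_{\ell\in G_i}R_{\ell*}$ but $k\notin G_i$. Then $S=G_i\subseteq[m]\setminus\{k\}$ with $|S|\le s(Q)\le r$, and $r$-disjunctness contradicts the covering. (If $G_i=\varnothing$, the row is zero, the inequality fails since $R_{k*}\neq 0$, and $Q_{i*}=0$ as required.) Since the test uses only $A$ and the recovered rows $R_{k*}$, $Q$ is read from $A$ with the same permutation as $R$.

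The main point to get right is that the hypotheses are strong enough for both steps. Minimality only requires $1$-disjunctness, while the membership test needs disjunctness against sets of size up to $s(Q)$. The condition $r\le m-1$ just ensures the definition is non-vacuous.
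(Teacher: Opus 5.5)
Your proof is correct. It takes the same route as the paper for $Q$ and a different, more complete one for $R$.

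\textbf{Reading $Q$.} This is exactly the paper's argument: the covering test $Q_{ik}=1\iff R_{k*}\le A_{i*}$, with $r$-disjunctness applied to $S=G_i$ and $|G_i|\le s(Q)\le r$.

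\textbf{Reading $R$.} The paper only says that $R$ is read off the in-anchor rows of $A$ ("see above", i.e.\ Lemma~\ref{lem:reading_UV_in_A_anchors}). That tacitly assumes you already know which rows are in-anchors. The only place the paper shows how to locate them from $A$ is Lemma~\ref{lem:minimal_rows}, and its proof relies on out-anchors, which are not assumed in this lemma.

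You close that gap by re-proving the minimal-row characterization with $1$-disjunctness of $R$ replacing out-anchors. It supplies exactly what the out-anchors supplied there: the separating coordinate where $R_{\ell *}$ is $1$ and $R_{k*}$ is $0$, and the pairwise incomparability of the rows $R_{k*}$. As a result, your version reads $R$ (up to permutation) from $A$ alone, and even recovers $m$ as the number of distinct minimal rows.

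\textbf{Omitted step.} The paper also proves that each $A_{i*}$ has a unique representation as a join of at most $r$ rows of $R$. You leave this out, and that is harmless: the paper never uses it when reading $Q$. It is a by-product of disjunctness, not an ingredient of the recovery.

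Your closing remark is also a clean way to see where each hypothesis enters: minimality needs only $1$-disjunctness, while the membership test needs disjunctness against sets of size $s(Q)$.
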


\begin{proof}
\textit{Recovering $R$.} For all factors $Z$ with index $k$ there exists an in-anchor $X$ with index $i$, i.e. $Q_{ik}=1$ and $\forall l\ne k, Q_{il}=0$. We can thus read the values of $R$ directly in $A$ (see above).

\textit{$A_{i*}$ is the $\vee$ of the rows of $R$.} For all variable index $i\in[n]$, let's define $G_i = \{k : Q_{ik}=1\}$. By definition, $|G_i| \le s(Q)$. We have the following: $$A_{ij} = \bigvee_{l=1}^m (Q_{il} \wedge R_{lj}) = \bigvee_{k\in G_i} (1 \wedge R_{kj}) = \bigvee_{k\in G_i} R_{kj}$$ Written with row notations: $$A_{i*} = \bigvee_{k\in G_i} R_{k*}$$

\textit{Uniqueness}. For the sake of contradiction, suppose that this $\vee$-decomposition is not unique. Then there exists $T\ne S$ such that $|S|,|T|\le R$ and $A_{i*} = \bigvee_{l\in S} R_{l*} = \bigvee_{l\in T} R_{l*}$. Pick $h\in S\backslash T$. By $r$-disjunctness on the set $T$, there exists $j\in [n]$ such that $R_{hj}=1$ and $\forall l\in T, R_{lj}=0$. In particular, we have $\bigvee_{l\in T} R_{lj} = 0$. Therefore we have at the same time $\bigvee_{l\in T} R_{lj} = 0$ and $\bigvee_{l\in S} R_{lj} = 1$, and thus $\bigvee_{l\in S} R_{l*} \ne \bigvee_{l\in T} R_{l*}$. This is contradictory with the hypothesis we made, so the set $G$ is the unique $\vee$-decomposition of size at most $r$.

\textit{Recovering $Q$ with $S$.} Let's prove that $\forall i\in [n], \forall k\in[m], Q_{ik}=1 \iff R_{k*} \le A_{i*}$
\begin{itemize}
    \item If $Q_{ik} = 1$, then $k\in G_i$, and $A_{i*}=\bigvee_{l\in G_i}R_{l*}$ so in particular $R_{k*} \le A_{i*}$
    \item If $R_{k*} \le A_{i*}$, let's suppose for sake of contradiction that $Q_{ik}=0$, which implies $k\notin G_i$. By hypothesis, we have $R_{k*}\le A_{i*} = \bigvee_{l\in G_i}R_{l*}$ with $|G_i| \le R$. So there exists no $j$ such that $R_{kj} \ge \bigvee_{l\in G_i}R_{lj}$, which contradicts the $r$-disjunctedness. Therefore, $Q_{ik}=1$.
\end{itemize}

Finally, we have shown that there is a unique possibility for each entry of $Q$, knowing $A,R$.

A similar proof can be derived for the case where we have out-anchors for each factor and columns of $Q$ are $r$-disjunct.
\end{proof}

We suspect that an identifiability theorem can be derived from these weaker conditions and leave it for future work.

\subsection{Proof of Theorem \ref{th:lin_parents_identifiability}}\label{sec:proofs_theory_th3}

\begin{customdef}{\ref{def:x_equiv}}[$\mX$-equivalence]
Two f-SCMs $\mM, \widetilde \mM$ are \emph{$\mX$-equivalent} if they induce the same low-level collapsed mechanisms on $\mX$, i.e.,
$\forall X\in\mX, \ F_X^L=\widetilde F_X^L$.
\end{customdef}

\begin{customlem}{\ref{lem:reparam_mechanisms}}[Reparameterization symmetry]
Let $\mM$ be an f-SCM and let $(\rho_Z)_{Z\in\mZ}$ be any family of bijections $\rho_Z:\Val_Z\to\Val_Z$. Define $\widetilde\mM$ over the same f-DAG by $\widetilde F_Z := \rho_Z^{-1}\circ F_Z$ and $\widetilde F_X\bigl(\mbz_{\Pa_X},\mbu\bigr) := F_X\bigl((\rho_Z(\mbz_Z))_{Z\in\Pa_X},\mbu\bigr)$.
Then $\widetilde\mM$ is $\mX$-equivalent to $\mM$.
\end{customlem}
\begin{proof}
Let $\mbu\in\Val_\mU$ and $X\in \mX$. We can observe that for $\mbx\in\Val_{\Pa_X^L}$,
\begin{align*}
    \widetilde F_X^L(\mbx, \mbu)
    &= \widetilde F_X\left(\{\widetilde F_Z(\mbx_{\Pa_Z})\}_{Z\in\Pa_X},\mbu\right) \\
    &= F_X\left(\{\rho_Z(\rho_Z^{-1}(F_Z(\mbx_{\Pa_Z})))\}_{Z\in\Pa_X},\mbu\right) \\
    &= F_X\left(\{F_Z(\mbx_{\Pa_Z})\}_{Z\in\Pa_X},\mbu\right) \\
    &= F_X^L(\mbx,\mbu)
\end{align*}
Hence $F_X^L=\widetilde F_X^L$ for all $X\in\mX$.
\end{proof}

\begin{customdef}{\ref{def:linear_parent_mech}}[Linear parent mechanisms]
An f-SCM has \emph{linear parent mechanisms} if for every $Z\in\mZ$ the mechanism $F_Z$ is affine: for all $\mbx\in\Val_\mX$,
\[
F_Z(\mbx_{\Pa_Z}) = b_Z + \sum_{X\in \Pa_Z} a_{ZX} \mbx_X,
\]
where $a_{ZX}, b_Z\in \R$, and with at least one nonzero coefficient per factor: $\forall Z\in \mZ, \ \exists X\in \Pa_Z, \ a_{ZX}\neq 0$.
\end{customdef}

\begin{customthm}{\ref{th:lin_parents_identifiability}}[Linear parent identifiability]
Let $\mM,\widetilde \mM$ be two $\mX$-equivalent f-SCMs sharing the same f-DAG with out-anchors for all factors, linear parent mechanisms and real supports.
Then for every $Z\in\mZ$ there exists a unique bijection $\rho_Z$ such that $\tilde F_Z = \rho_Z^{-1}\circ F_Z$, and it is affine: $\rho_Z : \mbz_Z \mapsto \alpha_Z \mbz_Z + \beta_Z$ with $\alpha_Z\ne 0$
\end{customthm}
\begin{proof}
Fix $Z\in\mZ$ and let $X$ be its out-anchor. Since $\Pa_X=\{Z\}$, we have $\Pa_X^L=\Pa_Z$, and $\mX$-equivalence at $X$ gives for all $\mbx\in\Val_{\Pa_Z}$ and $\mbu\in\Val_\mU$,
\begin{align*}
F_X\bigl(F_Z(\mbx),\mbu\bigr)=\widetilde F_X\bigl(\widetilde F_Z(\mbx),\mbu\bigr).
\end{align*}

By linearity of the parent mechanisms, we can write
$F_Z(\mbx)=\mba^\top\mbx + b$ and $\widetilde F_Z(\mbx)=\tilde{\mba}^\top\mbx+\tilde b$,
with $\mba\neq 0$ and $\tilde{\mba}\neq 0$.

Suppose for the sake of contradiction that $\mba$ and $\tilde{\mba}$ are linearly independent. Then the affine map $T:\mbx\mapsto\bigl(\mba^\top\mbx+b,\ \tilde{\mba}^\top\mbx+\tilde b\bigr)$ is surjective from $\Val_{\Pa_Z}$ onto $\R^2$. Hence every pair $(s,\tilde s)\in\R^2$ is attained as $(F_Z(\mbx),\widetilde F_Z(\mbx))$ for some $\mbx\in\Val_{\Pa_Z}$, so the two arguments of $F_X$ and $\widetilde F_X$ range independently over $\R$. From the previous equation we get that for all $(s,\tilde s)\in\R^2$
\[
F_X(s,\mbu)=\widetilde F_X(\tilde s,\mbu).
\]
Fixing $s$ and varying $\tilde s$ shows that $\widetilde F_X(\cdot,\mbu)$ is constant, and thus $F_X(\cdot,\mbu)$ as well, contradicting the faithfulness assumption for the out-anchor. Hence $\tilde{\mba}=\lambda\,\mba$ for some $\lambda\in\R$, and $\lambda\neq 0$ since $\tilde{\mba}\neq 0$.

Now, define $\rho_Z^{-1}(s):=\lambda(s-b)+\tilde b$, an affine bijection of $\R$ since $\lambda\neq 0$. For every $\mbx\in\Val_{\Pa_Z}$,
\[
\rho_Z^{-1}\bigl(F_Z(\mbx)\bigr)
=\lambda\,\mba^\top\mbx+\tilde b
=\tilde{\mba}^\top\mbx+\tilde b
=\widetilde F_Z(\mbx),
\]
so $\widetilde F_Z=\rho_Z^{-1}\circ F_Z$, and $\rho_Z(z)=\lambda^{-1}(z-\tilde b)+b$ is affine with slope $\alpha_Z=\lambda^{-1}\neq 0$.

Since $\mba\neq 0$ and $\Val_X=\R$ for all $X\in\Pa_Z$, $F_Z$ is surjective on $\R$. Hence, any bijection $\rho$ with $\widetilde F_Z=\rho^{-1}\circ F_Z$ satisfies $\rho^{-1}=\rho_Z^{-1}$ on $\operatorname{range}(F_Z)=\R$, i.e., $\rho=\rho_Z$. In particular, every bijection realizing this relation is affine.
\end{proof}

\begin{definition}[Linear child mechanisms]\label{def:linear_child_mech}
An f-SCM has \emph{linear child mechanisms} if for every $X\in\mX$ the mechanism $F_X$ is affine in the factors: for all $\mbz\in\Val_\mZ$ and $\mbu\in\Val_\mU$,
\[
F_X(\mbz_{\Pa_X},\mbu) = b_X(\mbu) + \sum_{Z\in\Pa_X} a_{XZ}(\mbu)\,\mbz_Z,
\]
where $a_{XZ}(\mbu), b_X(\mbu)\in\R$, and with at least one child influenced by factor: $\forall Z\in\mZ,\ \exists X\in\Ch_Z,\ \exists\mbu\in\Val_\mU,\ a_{XZ}(\mbu)\neq 0$.
\end{definition}

\begin{theorem}[Linear child identifiability]\label{th:lin_child_identifiability}
Let $\mM,\widetilde\mM$ be two $\mX$-equivalent f-SCMs sharing the same f-DAG with out-anchors for all factors, linear child mechanisms and real supports. Then for every $Z\in\mZ$ there exists an affine bijection $\rho_Z:\mbz_Z\mapsto\alpha_Z\mbz_Z+\beta_Z$ with $\alpha_Z\neq 0$ such that
$\widetilde F_Z=\rho_Z^{-1}\circ F_Z$, and the coefficients $(\alpha_Z,\beta_Z)$
are uniquely determined.
\end{theorem}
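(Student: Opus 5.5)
I would mirror the proof of Theorem \ref{th:lin_parents_identifiability}, but now exploit linearity on the child side and work through the out-anchor. Fix $Z\in\mZ$ and choose an out-anchor $X$ of $Z$. Since $\Pa_X=\{Z\}$, we have $\Pa_X^L=\Pa_Z$. Then $\mX$-equivalence at $X$ gives, for all $\mbx\in\Val_{\Pa_Z}$ and $\mbu\in\Val_\mU$,
\[
b_X(\mbu)+a_{XZ}(\mbu)\,F_Z(\mbx)=\tilde b_X(\mbu)+\tilde a_{XZ}(\mbu)\,\widetilde F_Z(\mbx).
\]
The plan is to pick a context $\mbu_0$ at which the coefficient is nonzero. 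If $a_{XZ}(\mbu_0)\neq 0$, then $\tilde a_{XZ}(\mbu_0)\neq 0$ as well. Otherwise the right-hand side would be constant in $\mbx$, which would force $F_Z$ to be constant on $\Val_{\Pa_Z}$ and contradict faithfulness of the edge $Z\to X$. We can then solve for $\widetilde F_Z$:
\[
\widetilde F_Z(\mbx)=\alpha\,F_Z(\mbx)+\beta,\qquad \alpha=\frac{a_{XZ}(\mbu_0)}{\tilde a_{XZ}(\mbu_0)},\quad \beta=\frac{b_X(\mbu_0)-\tilde b_X(\mbu_0)}{\tilde a_{XZ}(\mbu_0)} .
\]
From this we set $\rho_Z^{-1}(s)=\alpha s+\beta$, which is an affine bijection of $\R$. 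Inverting it gives $\rho_Z$ with slope $\alpha^{-1}\neq 0$.

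The main obstacle is the existence of $\mbu_0$. Definition \ref{def:linear_child_mech} only guarantees that \emph{some} child $X'\in\Ch_Z$ has $a_{X'Z}(\mbu)\neq 0$ for some $\mbu$, and this child need not be the out-anchor. I would handle it in one of two ways. The first option is to argue via faithfulness of $\mL$ (Assumption \ref{assu:causal_sufficiency_markov_faithfulness}): the out-anchor $X$ genuinely depends on its parents, so $a_{XZ}(\cdot)$ cannot vanish identically, because then $F^L_X$ would be constant in $\mbx$. The second option, if that is considered too strong, is to work directly with a general child $X'$ of $Z$ and fix the other factor parents. For the second route, first pick values $\mbx$ of the remaining variables (the parents of other factors) so that the contributions $a_{X'Z''}(\mbu_0)F_{Z''}$ of the other factors are held fixed. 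Then vary only through $F_Z$. This needs $\Pa_Z$ to move independently of the other factors' inputs, which is exactly what the out-anchor setup avoids. I would therefore commit to the first route and state the faithfulness use explicitly.

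Uniqueness of $(\alpha_Z,\beta_Z)$ would follow as in Theorem \ref{th:lin_parents_identifiability}. Suppose two affine maps $\rho,\rho'$ both satisfy $\widetilde F_Z=\rho^{-1}\circ F_Z$. Then $\rho^{-1}$ and $\rho'^{-1}$ agree on $\operatorname{range}(F_Z)$. Since $F_Z$ is non-constant (faithfulness again), this range contains at least two points, and two affine maps on $\R$ agreeing at two distinct points coincide. Hence $(\alpha_Z,\beta_Z)$ is determined. Note that, unlike the linear-parent case, the range of $F_Z$ need not be all of $\R$. This is why the statement only claims uniqueness of the affine coefficients and not uniqueness among all bijections.

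Finally, the coefficients derived from a different $\mbu_0$, or from a different out-anchor, must agree. This follows from the same uniqueness argument, since each choice produces an affine $\rho$ satisfying the same relation $\widetilde F_Z=\rho^{-1}\circ F_Z$.
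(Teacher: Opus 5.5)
Your proposal is correct and follows the paper's proof essentially step for step. It reduces through the out-anchor to $a(\mbu)F_Z+b(\mbu)=\tilde a(\mbu)\widetilde F_Z+\tilde b(\mbu)$, uses faithfulness (rather than the weaker condition in the definition of linear child mechanisms) to get a context with nonzero coefficient and a non-constant $F_Z$, solves for the affine relation, and gets uniqueness from two distinct points in the range of $F_Z$. The only cosmetic difference is that you show $\tilde a_{XZ}(\mbu_0)\neq 0$ and divide by it, whereas the paper divides by $a(\mbu')$ and deduces $\tilde a/a\neq 0$ from the non-constancy of $F_Z$.
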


\begin{proof}
Fix $Z\in\mZ$ and let $X$ be its out-anchor. Since $\Pa_X=\{Z\}$, we have $\Pa_X^L=\Pa_Z$, and $\mX$-equivalence at $X$ gives for all $\mbx\in\Val_{\Pa_Z}$ and $\mbu\in\Val_\mU$,
\[F_X\bigl(F_Z(\mbx),\mbu\bigr)=\widetilde F_X\bigl(\widetilde F_Z(\mbx),\mbu\bigr).\]
As $\Pa_X=\{Z\}$, the linear child mechanisms read $F_X(z,\mbu)=a(\mbu)\,z+b(\mbu)$ and $\widetilde F_X(z,\mbu)=\tilde a(\mbu)\,z+\tilde b(\mbu)$, writing $a(\mbu):=a_{XZ}(\mbu)$ and $\tilde a(\mbu):=\tilde a_{XZ}(\mbu)$.
Substituting, we get that for all $\mbx\in\Val_{\Pa_Z}$ and $\mbu\in\Val_\mU$
\begin{align*}
a(\mbu)F_Z(\mbx) + b(\mbu)=\tilde a(\mbu)\widetilde F_Z(\mbx) + \tilde b(\mbu).
\end{align*}

By faithfulness, the edges $\Pa_Z\to X$ in $G^L$ are not spurious, so  $\mbx\mapsto F_X^L(\mbx,\mbu)=a(\mbu)\,F_Z(\mbx)+b(\mbu)$ is
non-constant for some $\mbu'$. Since $X$ reaches $\Pa_Z$ only through $Z$,
this forces both $a(\mbu')\neq 0$ and $F_Z$ non-constant.

Fixing $\mbu=\mbu'$ in the previous equation and dividing by $a(\mbu)\neq 0$, we get for all $\mbx\in\Val_{\Pa_Z}$
\[
F_Z(\mbx)=\frac{\tilde a(\mbu)}{a(\mbu)}\widetilde F_Z(\mbx) +\frac{\tilde b(\mbu)-b(\mbu)}{a(\mbu)}
=:\mu\widetilde F_Z(\mbx)+\nu.
\]
Since $F_Z$ is non-constant, the right-hand side is also non-constant, hence $\mu\neq 0$. Therefore
\[
\widetilde F_Z=\tfrac{1}{\mu}(F_Z-\nu)=\alpha_Z F_Z+\beta_Z,
\qquad \alpha_Z:=\mu^{-1}\neq 0,\quad \beta_Z:=-\frac{\nu}{\mu}.
\]
Define $\rho_Z^{-1}:w\mapsto\alpha_Z w+\beta_Z$, an affine bijection of $\R$ since $\alpha_Z\neq 0$. Then $\widetilde F_Z=\rho_Z^{-1}\circ F_Z$, and $\rho_Z:z\mapsto\alpha_Z^{-1}(z-\beta_Z)=\mu z+\nu$ is affine with slope $\mu\neq 0$.

As $F_Z$ is non-constant, it ranges at least two distinct points, and an affine map is determined by its values at two points. Hence the relation $\widetilde F_Z=\rho_Z^{-1}\circ F_Z$ fixes $\rho_Z^{-1}$, and therefore the coefficients
$(\alpha_Z,\beta_Z)$, uniquely.

\end{proof}

\subsection{Proof of Theorem \ref{th:approx_transfo_bound}}\label{sec:proofs_theory_th4}
\begin{customthm}{\ref{th:approx_transfo_bound}}[Approximate transformation bound]
    If all mechanisms $F_Z^\Theta,F_X^\Theta$ are Lipschitz, then $\hat \Delta$ is linearly bounded by $S(\Phi,\Theta)$.
\end{customthm}

\begin{proof}
We instantiate the learned state map componentwise by the learned parent mechanisms:
\[
\forall \mbx\in\Val_{\mX},\  \forall Z\in\mZ,\quad (\hat\tau(\mbx))_Z := F_Z^\Theta(\mbx_{\Pa_Z})
\]
Consider the high-level model to be the learned one \(\mH_{\Phi,\Theta}\), with collapsed mechanisms given by
\[
F_Z^{H,\Theta}(\mbz_{\Pa_Z^H},\mbu) := F_Z^\Theta\Big(\{F_X^\Theta(\mbz_{\Pa_X},\mbu)\}_{X\in\Pa_Z}\Big),
\]
and \(\mL_{\Phi,\Theta}\) the corresponding low-level model on \(\mX\).
Assume \(d_H(\mbz,\mbz')=\sum_{Z\in\mZ}\|\mbz_Z-\mbz'_Z\|^2\), i.e., Euclidean MSE.

Fix an intervention \(i\in\mI_L\) with support \(\mbI\subseteq\mX\) and a context \(\mbu\in\Val_{\mU}\).

Set $\mbx := \Solve(\mL^{i};\mbu)$, i.e., $\mbx \sim p_{\text{data}}^{(i)}$, and
\[
\hat\mbz := \hat\tau(\mbx),\qquad
\mbz := \Solve(\mH_{\Phi,\Theta}^{\omega(i)};\mbu).
\]
\paragraph{Pointwise propagation inequality.}
For each $X\in\mX$, define the (regime-dependent) prediction
\[
\hat x_X :=
\begin{cases}
F_X^\Theta(\hat\mbz_{\Pa_X},\mbu) & \text{if } X\notin \mbI\\
\mbx_X & \text{if } X\in \mbI
\end{cases}
\qquad
\tilde x_X :=
\begin{cases}
F_X^\Theta(\mbz_{\Pa_X},\mbu) & \text{if } X\notin \mbI\\
\mbx_X & \text{if } X\in \mbI
\end{cases}
\]
so that $\tilde x_X$ matches the value of $X$ used inside $\Solve(\mH^{\omega(i)};\mbu)$, and for intervened variables $X\in\mbI$ we have $\tilde x_X=\mbx_X$ by construction.
Define per-variable residuals and per-factor errors
\[
r_X := \mathbf 1\{X\notin\mbI\}\,\|\mbx_X-\hat x_X\|^2,
\qquad
\delta_Z := \|\hat\mbz_Z-\mbz_Z\|^2.
\]

Fix $Z\in\mZ$. Using $\hat\mbz_Z = F_Z^\Theta(\mbx_{\Pa_Z})$ and $\mbz_Z = F_Z^\Theta(\tilde\mbx_{\Pa_Z})$, and that $F_Z$ is $L_Z$-Lipschitz, we have
\begin{align*}
    \delta_Z &= \|F_Z^\Theta(\mbx_{\Pa_Z}) - F_Z^\Theta(\tilde\mbx_{\Pa_Z})\|^2 \\
    &\le L_Z^2\sum_{X\in\Pa_Z}\|\mbx_X-\tilde x_X\|^2.
\end{align*}
For each $X\in\Pa_Z$, apply $\|a-b\|^2\le 2\|a-c\|^2+2\|c-b\|^2$ with $a=\mbx_X$, $b=\tilde x_X$, $c=\hat x_X$:
\begin{align*}
\|\mbx_X-\tilde x_X\|^2
&\le 2\|\mbx_X-\hat x_X\|^2 + 2\|\hat x_X-\tilde x_X\|^2 \\
&= 2r_X + 2\|F_X^\Theta(\hat\mbz_{\Pa_X},\mbu) - F_X^\Theta(\mbz_{\Pa_X},\mbu)\|^2.
\end{align*}

If $F_X^\Theta$ is $\ell_X$-Lipschitz in its first argument (uniformly in $\mbu$), then
\begin{align*}
\|F_X^\Theta(\hat\mbz_{\Pa_X},\mbu) - F_X^\Theta(\mbz_{\Pa_X},\mbu)\|^2
&\le \ell_X^2\sum_{Z'\in\Pa_X}\|\hat\mbz_{Z'}-\mbz_{Z'}\|^2 \\
&= \ell_X^2\sum_{Z'\in\Pa_X}\delta_{Z'}.
\end{align*}

Combining the last three results yields, for every $Z\in\mZ$,
\[
\delta_Z \ \le \ \sum_{X\in\mX} \underbrace{\bigl(2L_Z^2 \ \mathbf 1\{X\in\Pa_Z\}\bigr)}_{=:M_{Z,X}}\ r_X
\ + \
\sum_{Z'\in\mZ}\sum_{X\in\mX}
\underbrace{\bigl(2L_Z^2\ \mathbf 1\{X\in\Pa_Z\}\bigr)}_{M_{Z,X}}
\underbrace{\bigl(\ell_X^2\ \mathbf 1\{Z'\in\Pa_X\}\bigr)}_{=:N_{X,Z'}}\ \delta_{Z'}
\]

In vector form:
\[
\delta \le M r + (MN)\delta,
\]
where $r=(r_X)_{X\in\mX}$ and $\delta=(\delta_Z)_{Z\in\mZ}$.

\paragraph{Expectation and nilpotency.}
Taking $\E_{\substack{i\sim\PP \\ \mbu\sim\Pr}}$ preserves the componentwise inequality, giving
\[
D \le MR + (MN)D,
\]
where $D_Z:=\E[\delta_Z]$ and $R_X:=\E[r_X]$.

Because the underlying f-DAG is acyclic, the induced factor-level graph is acyclic, by Lemma \ref{lem:acy_different_levels_lopez}.

The matrix $MN$ has support contained in this adjacency, hence is nilpotent: $(MN)^\gamma=0$ for some $\gamma\le |\mZ|=m$. Therefore
\[
(I-MN)^{-1} = \sum_{k=0}^{\gamma-1}(MN)^k
\]
is well-defined and entrywise nonnegative.

Left-multiplying $(I-MN)D\le MR$ by $(I-MN)^{-1}\ge 0$ preserves the order:
\begin{align*}
D &\le (I-MN)^{-1}MR \\
&= \sum_{k=0}^{\gamma-1}(MN)^kMR
\end{align*}

Summing coordinates and using any submultiplicative norm (e.g.\ $\|\cdot\|_1$) yields
\begin{align*}
    \hat\Delta
    &=\E\left[\sum_{Z\in\mZ}\delta_Z\right] \\
    & =\|D\|_1 \\
    & \le \Bigl(\|M\|_1\sum_{k=0}^{\gamma-1}\|MN\|_1^k\Bigr)\cdot\|R\|_1
\end{align*}

\paragraph{Relating $\|R\|_1$ to the score $S(\Phi,\Theta)$.}
Under the Gaussian conditional model used for $p_\Theta^X$ (with variance $\sigma_X^2$),
\[
\log p_\Theta^X(\mbx_X\mid \hat\mbz_{\Pa_X})
= -\frac{1}{2\sigma_X^2}\|\mbx_X-\hat x_X\|^2 - c_X,
\qquad
c_X:=\frac{1}{2}\log(2\pi\sigma_X^2),
\]
hence, taking $\E_{\substack{i\sim\PP \\ \mbu\sim\Pr}}$ and summing over $X\notin\mbI$,
\begin{align*}
S(\Phi,\Theta) = -\sum_{X\in\mX}\frac{1}{2\sigma_X^2}R_X - C_0,
\qquad
C_0 := \E_{i\sim\PP}\Big[\sum_{X\notin\mbI}c_X\Big].
\end{align*}

Let $\sigma_{\max}^2:=\max_X\sigma_X^2$. Then
\begin{align*}
    \|R\|_1
    &=\sum_X R_X \\
    &\le 2\sigma_{\max}^2 \sum_X \frac{1}{2\sigma_X^2}R_X \\
    &=2\sigma_{\max}^2\bigl(-S(\Phi,\Theta)-C_0\bigr)
\end{align*}
Combining with the previous bound on $\hat \Delta$ gives
\begin{align*}
\hat\Delta \le 2\sigma_{\max}^2\ \|M\|_1\sum_{k=0}^{\gamma-1}\|MN\|_1^k\ \bigl(-S(\Phi,\Theta)-C_0\bigr),
\end{align*}
which is an affine bound in $S(\Phi,\Theta)$.
\end{proof}

\section{Practical implementation details}\label{sec:implem_details}

\subsection{Acyclicity constraint formulations}

We implement the main differentiable formulations for acyclicity constraints, all of the form $h(A)=0 \iff G$ is a DAG. We consider $A\in \R^{n\times n}_{+}$, for which nilpotence is equivalent to DAGness of $G$.

\textit{Matrix exponential (tr-exp).} Introduced by \citet{zheng2018dags}, this formulation defines:
\[
h_{\text{exp}}(A) = \tr\bigl(e^{A}\bigr) - n
\]
where $A$ is the (possibly weighted) $n\times n$ adjacency matrix. Computing $e^A$ costs $O(n^3)$.

\textit{Matrix power.} This formulation uses:
\[
h_{\text{pow}}(A) = \tr\left[\left(I + \frac{A}{n}\right)^n\right] - n
\]
which avoids the matrix exponential but retains the same $O(n^3)$ cost due to the matrix power computation.

\textit{Spectral radius.}
Proposed by \citep{lee2019scaling}, the spectral radius formulation enforces:
\[
h_{\rho}(A) = \rho\bigl(A\bigr)
\]
where $\rho(\cdot)$ is the spectral radius (largest eigenvalue in modulus). The spectral radius is computed via power iteration with $T$ steps, costing $O(Tn^2)$ per gradient step. \citet{nazaret2024stabledifferentiablecausaldiscovery} showed that this formulation is numerically more stable than the tr-exp and matrix power approaches, which we experimentally verify.

\parhead{Reduction to the factor graph.}
Since $A = Q\diamond R^\top$ is a rank-$m$ Boolean matrix with $m < n$, a key structural property allows working with the much smaller $m\times m$ factor adjacency:
\begin{lemma}[\citet{lopezLargeScaleDifferentiableCausal2022}]\label{lem:acy_different_levels_lopez}
$G$ is acyclic $\iff$ $G_X$ is acyclic $\iff$ $G_Z$ is acyclic.
\end{lemma}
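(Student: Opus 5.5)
The plan is to prove the two equivalences by translating cycles between the three graphs, using only the bipartite structure of $G$ and the definitions of the collapsed graphs: $X_i\to X_j$ in $G_X$ iff there is some $Z$ with $X_i\to Z\to X_j$ in $G$, and $Z_k\to Z_l$ in $G_Z$ iff there is some $X$ with $Z_k\to X\to Z_l$ in $G$. Both equivalences are proved in contrapositive form, showing that a cycle exists in one graph if and only if a cycle exists in the other.

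\emph{From $G$ to the collapsed graphs.} Since $G$ is bipartite, every directed cycle in $G$ alternates between $\mX$ and $\mZ$ and has even length $2p$ with $p\ge 1$. Write such a cycle as $X_{i_1}\to Z_{k_1}\to X_{i_2}\to\cdots\to Z_{k_p}\to X_{i_1}$. Grouping consecutive pairs $X_{i_t}\to Z_{k_t}\to X_{i_{t+1}}$ gives edges $X_{i_t}\to X_{i_{t+1}}$ in $G_X$, so we obtain a closed walk in $G_X$. Grouping instead the pairs $Z_{k_t}\to X_{i_{t+1}}\to Z_{k_{t+1}}$ gives a closed walk in $G_Z$. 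A nonempty closed directed walk always contains a directed cycle, so $G_X$ and $G_Z$ are cyclic. This also covers self-loops: when $p=1$ the walk is $X_{i_1}\to X_{i_1}$, which the definition of $G_X$ allows as an edge, and we count it as a cycle, consistent with the nilpotency criterion used for $A$.

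\emph{From the collapsed graphs to $G$.} Take a cycle $X_{i_1}\to X_{i_2}\to\cdots\to X_{i_q}\to X_{i_1}$ in $G_X$. Each edge $X_{i_t}\to X_{i_{t+1}}$ comes with a witness factor $Z$ such that $X_{i_t}\to Z\to X_{i_{t+1}}$ in $G$. Concatenating these length-two paths gives a closed directed walk in $G$, and hence a cycle in $G$. The same argument applies to $G_Z$, using a witness variable for each edge. Together with the previous paragraph, this shows that $G$ is cyclic iff $G_X$ is cyclic iff $G_Z$ is cyclic, which is the statement.

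There is essentially no hard step. The only point needing care is the reduction from closed walks to cycles, since the concatenated walks may repeat vertices. I would handle this with the standard argument: take a shortest closed subwalk between two occurrences of the same vertex. For a short algebraic cross-check, note that the adjacency of $G_X$ is $Q\diamond R^\top$ and that of $G_Z$ is $R^\top\diamond Q$. Then $(Q\diamond R^\top)^{\diamond t}=Q\diamond(R^\top\diamond Q)^{\diamond(t-1)}\diamond R^\top$, so nilpotency of one product forces nilpotency of the other.
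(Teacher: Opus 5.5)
Your proof is correct and complete. The paper does not prove this lemma itself. It imports the result from \citet{lopezLargeScaleDifferentiableCausal2022} and only uses it in two places: to replace $A=Q\diamond R^\top$ by the $m\times m$ factor adjacency $R^\top\diamond Q$ in the acyclicity constraints of Appendix~\ref{sec:implem_details}, and to conclude that $MN$ is nilpotent in the proof of Theorem~\ref{th:approx_transfo_bound}. Your walk-lifting argument therefore adds a self-contained justification rather than offering a variant of a route in the paper.

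Two points in your argument deserve emphasis. First, your convention that a $2$-cycle $X\to Z\to X$ of $G$ yields a self-loop of $G_X$ (and $Z\to X\to Z$ a self-loop of $G_Z$), and that this self-loop counts as a cycle, is essential rather than cosmetic. If the collapsed graphs dropped self-loops, a graph $G$ consisting of a single $2$-cycle would have edgeless collapses, and the equivalence would fail. The convention is also exactly what the proof of Theorem~\ref{th:approx_transfo_bound} needs, since the diagonal entries of $MN$ correspond to paths $Z\to X\to Z$.

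Second, your ``cross-check'' is in fact a full alternative proof, and it is the form in which the paper actually uses the lemma. The identity $(Q\diamond R^\top)^{\diamond t}=Q\diamond(R^\top\diamond Q)^{\diamond(t-1)}\diamond R^\top$ and its mirror image give $G_X$ acyclic iff $G_Z$ acyclic, via nilpotency. Squaring the bipartite block matrix $\begin{pmatrix}0&Q\\ R^\top&0\end{pmatrix}$ gives the block-diagonal matrix with blocks $Q\diamond R^\top$ and $R^\top\diamond Q$, which brings $G$ into the chain.

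The combinatorial version makes the cycle correspondence transparent. The algebraic version directly justifies the paper's computational reductions: working with the $m\times m$ matrix, or running power iteration on the bipartite matrix.
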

Consequently, any of the above formulations can be applied to the factor adjacency $B = R^\top \diamond Q\in\R^{m\times m}_+$ instead of the full variable adjacency $A\in\R^{n\times n}$. This reduces the tr-exp and matrix power costs from $O(n^3)$ to $O(m^3 + nm^2)$, and the spectral radius cost from $O(Tn^2)$ to $O(Tm^2 + nm^2)$, with memory dropping from $O(n^2)$ to $O(m^2)$.

\parhead{Spectral radius adapted for the bipartite structure.}
\citet{lopezLargeScaleDifferentiableCausal2022} further exploit the bipartite structure to avoid forming $B$ altogether. The bipartite adjacency can be written as the block matrix:
\[
M = \begin{pmatrix} 0 & Q \\ R^\top & 0 \end{pmatrix} \in \R^{(n+m)\times(n+m)}_+
\]
Power iteration on $M$ decomposes into alternating matrix-vector products with $Q$ and $R^\top$:
\begin{enumerate}
    \item Initialize left and right eigenvector buffers $x_v\in\R^n, x_f\in\R^m, y_v\in\R^n, y_f\in\R^m$.
    \item For $t=1,\ldots,T$: normalize $x_f \leftarrow R^\top x_v / \|R^\top x_v\|$, then $x_v \leftarrow Q\, x_f / \|Q\, x_f\|$; symmetrically for $y$.
    \item Compute the spectral radius estimate via the gradient formula:
    \[
    \rho = \frac{y_v^\top Q\, x_f + y_f^\top R^\top x_v}{y_f^\top x_f + y_v^\top x_v}
    \]
    with $\nabla_Q \rho = \frac{y_v x_f^\top}{y_f^\top x_f + y_v^\top x_v}$ and $\nabla_R \rho = \frac{x_v y_f^\top}{y_f^\top x_f + y_v^\top x_v}$.
\end{enumerate}
Each iteration costs $O(nm)$, yielding a total of $O(Tnm)$ per gradient step with $O(nm)$ memory, never forming any $n\times n$ or $m\times m$ matrix. The complexity of all approaches is summarized in Table~\ref{tab:dagness_constraint_complexity}.

\begin{table}[h]
\centering
\small
\begin{tabular}{L{2cm}|C{1.35cm}|C{2cm}C{2cm}C{2.3cm}|C{1.2cm}}
\hline
& \textbf{Matrix} & \textbf{Tr-exp} & \textbf{Matrix power} & \textbf{Spectral radius} & \textbf{Memory} \\
\hline
On $Q\diamond R^\top$
& $\R^{n\times n}_+$
& $O(n^3 + n^2 m)$
& $O(n^3 + n^2 m)$
& $O(Tn^2 + n^2 m)$
& $O(n^2)$ \\
\hline
On $R^\top\diamond Q$
& $\R^{m\times m}_+$
& $O(m^3 + n m^2)$
& $O(m^3 + n m^2)$
& $O(Tm^2 + n m^2)$
& $O(m^2)$ \\
\hline
Bipartite spectral radius
& $2 \times \R^{n\times m}_+$
& --
& --
& $O(Tnm)$
& $O(nm)$ \\
\hline
\end{tabular}
\caption{Per-gradient-step cost. Here $n$ is the number of observed variables, $m$ the number of factors, and $T$ the number of power-iteration steps for the spectral radius.}
\label{tab:dagness_constraint_complexity}
\end{table}

In practice, in accordance with the insights of \citet{nazaret2024stabledifferentiablecausaldiscovery}, we observed that spectral radius methods are less prone to numerical instabilities than other formulation. In particular, we found the bipartite spectral radius formulation to be the most stable and efficient, and is used in all our experiments on non-layered DAGs.

\parhead{Layered DAG constraint.}
When variables $X\in\mX$ are organized into $L$ known layers (e.g., neurons in successive layers of a neural network), we can replace the general acyclicity constraint with a cheap structural constraint that also enforces this ordering constraint. Let $\ell(i)\in\{0,\ldots,L-1\}$ denote the layer of variable $i$. For the graph to be acyclic, it suffices that every factor has all its parents in strictly earlier layers than all its children:
\[
\forall k\in[m],\quad \max_{i:Q_{ik}=1}\ell(i) < \min_{j:R_{jk}=1}\ell(j).
\]

To make this differentiable, we aggregate the relaxed assignment matrices $Q,R\in(0,1)^{n\times m}$ into per-layer masses for each factor $k$:
\[
q_{kl} := \sum_{i:\ell(i)=l} Q_{ik}, \qquad r_{kl} := \sum_{j:\ell(j)=l} R_{jk}.
\]
This aggregation is computed in $O(nm)$ time. We then compute for each factor $k$ a violation score that measures the total mass of parent-child pairs violating the layer ordering:
\[
b_k := \sum_{l_p \ge l_c} q_{k,l_p} \cdot r_{k,l_c}.
\]
This is evaluated efficiently using a cumulative sum: letting $\bar{r}_{k,l} := \sum_{l'\le l} r_{k,l'}$, we obtain $b_k = \sum_{l=0}^{L-1} q_{k,l}\cdot \bar{r}_{k,l}$ in $O(mL)$ time.
In the binary case, $b_k=0$ is equivalent to the strict layer ordering; in the relaxed case, $b_k$ continuously measures the degree of violation.

For scale stability, we optionally normalize by the total mass:
\[
\bar{b}_k := \frac{b_k}{\left(\sum_l q_{kl}\right)\left(\sum_l r_{kl}\right) + \epsilon}.
\]
The constraint is formulated as the inequality $\bar{b}_k - \varepsilon \le 0$ for each factor $k$, with tolerance $\varepsilon > 0$. The total cost is $O(nm + mL)$, i.e. $O(nm)$ since $L\le n$.

\subsection{Causal graph sampling}
During training, the graph $G(\Phi)$ is sampled stochastically at each forward pass. We parameterize the bipartite structure through a single logit tensor $W(\Phi)\in\R^{n\times m\times 3}$, where for each variable-factor pair $(i,k)$ the three categories encode: 0 for a parent edge ($Q_{ik}=1$, i.e., variable $i$ is a parent of factor $k$), 1 for child edge ($R_{ik}=1$, i.e., variable $i$ is a child of factor $k$), and 2 for no edge. Sampling is performed with the Gumbel-softmax trick \citep{jang2017categorical, maddison2017concrete}, using straight-through hard samples for the forward pass while allowing gradients to flow through the soft relaxation.

\subsection{Anchor constraint}

Recall that Assumption~\ref{assu:in_out_anchor} requires each factor $Z$ with index $k\in [m]$ to have at least one in-anchor and one out-anchor. We formalize this as a differentiable constraint.

For in-anchors, the score measuring how exclusively variable $i$ is assigned to factor $k$ as a parent is:
\[
s^{\text{in}}_{i\to k} = Q_{ik} \prod_{\ell\ne k}(1 - Q_{i\ell}).
\]
The in-anchor count for factor $k$ is then $c^{\text{in}}_k = \sum_{i=1}^n s^{\text{in}}_{i\to k}$, and symmetrically $c^{\text{out}}_k = \sum_{j=1}^n s^{\text{out}}_{j\to k}$ for out-anchors computed on $R$.

To avoid numerical underflow from the product of many terms close to~1, we compute the scores in log-space using the log-sum-exp decomposition:
\[
S_i := \sum_{\ell=1}^m \log(1 - Q_{i\ell} + \epsilon), \qquad
\log s^{\text{in}}_{i\to k} = \log(Q_{ik} + \epsilon) + S_i - \log(1 - Q_{ik} + \epsilon),
\]
where $\epsilon>0$ is a small constant for numerical stability. $S_i$ is the total log-complement sum across all factors, and subtracting the $k$-th term gives the product over $\ell\ne k$ without an explicit loop. The anchor score is recovered as $s^{\text{in}}_{i\to k} = \exp(\log s^{\text{in}}_{i\to k})$, and the count is $c^{\text{in}}_k = \sum_i s^{\text{in}}_{i\to k}$.

This computation runs in $O(nm)$ time for all factors simultaneously. The constraint is formulated as:
\[
\mC^{\text{anchor}}(\Phi) := \sum_{k=1}^m \left[\max\bigl(\tau - c^{\text{in}}_k, 0\bigr) + \max\bigl(\tau - c^{\text{out}}_k, 0\bigr)\right]
\]
where $\tau\le 1$ is a threshold (we use $\tau=0.99$). In practice, we impose per-factor inequality constraints $\tau - c^{\text{in}}_k \le 0$ and $\tau - c^{\text{out}}_k \le 0$ within the Augmented Lagrangian framework.

\section{Experiments}\label{sec:experiments}

\subsection{Implementation details}
Our implementation heavily modifies the open source implementation of \citet{lopezLargeScaleDifferentiableCausal2022}, in particular in the way we solve the constrained optimization problem.
Our model is implemented in PyTorch \citep{ansel2024pytorch}. Constrained optimization is handled by Cooper \citep{gallegoPosada2025cooper}, which provides an Augmented Lagrangian framework for differentiable constraints.

\parhead{Training loop.}
At each epoch, the model iterates over mini-batches of observational and interventional data. For each batch $(\mbx, \text{mask}, \text{regime})$:
\begin{enumerate}
    \item A graph $G'$ is sampled via Gumbel-softmax from the tensor $W(\Phi)$.
    \item The model computes predictions $\hat{\mbx}$ by passing inputs through the sampled factor graph (variables $\to$ factors via $F_Z^\Theta$, then factors $\to$ variables via $F_X^\Theta$).
    \item The data-fitting loss (negative log-likelihood) is computed over non-intervened variables.
    \item Constraint violations (DAG or layer ordering, anchors, $L_1$ sparsity) are evaluated.
    \item Cooper's Augmented Lagrangian solver performs alternating primal-dual updates: the primal optimizer (Adam \citep{kingma2014adam}) minimizes the Lagrangian, while the dual optimizer (SGD) updates the Lagrange multipliers.
\end{enumerate}

\parhead{Constrained optimization.}
All constraints use an Augmented Lagrangian formulation with inequality constraints $\mathcal{C}(\Phi)\le 0$.

\parhead{Hyperparameters.}
The primal learning rate is selected from $\{2\times 10^{-4}, 5\times 10^{-4}, 7\times 10^{-4}, 1\times 10^{-3}, 2\times 10^{-3}\}$ depending on the setting.
The dual learning rate is set as $\frac{1}{5}$ of the primal rate. The penalty coefficient $\rho_c$ is grown at a multiplicative rate of $1.003$ or $1.004$ per epoch.
Batch size is $512$ and training runs for up to $50\,000$ epochs with early stopping: patience of $2000$ epochs begins once all active constraints are satisfied. The $L_1$ sparsity tolerance is set to $\text{tol}_{s}=0.2$.

\parhead{Post-processing.}
After training, the continuous logits $W(\Phi)$ are discretized: for each variable-factor pair, we apply softmax over the three categories and assign the argmax direction (parent, child, or no edge), subject to a minimum confidence threshold on the total edge probability. Acyclicity of the recovered graph is then verified.

\parhead{Model selection and evaluation.}
For each experimental setting, we run a grid of hyperparameters (learning rate, penalty growth) over multiple random seeds and DAG instances. Model selection is performed on a validation set: among all runs that reach feasibility (all constraints satisfied), we select the one with the lowest validation negative log-likelihood. Test metrics are then computed on held-out observational data from the same DAG.

\subsection{Metrics}

We evaluate the quality of the learned causal abstraction by measuring factor recovery. Since factors are identifiable only up to reparameterization, we evaluate whether the learned factors $\hat{\mbz} = \hat\tau(\mbx)$ align with the ground-truth factors $\mbz$, using metrics invariant to the admissible symmetry class.
\begin{itemize}
    \item Mean correlation coefficient (MCC): finds the permutation alignment with highest absolute correlation between $\mbz$ and $\hat \mbz$ averaged over all factors. We report MCC with Pearson correlation (invariant to component-wise affine transformations), Spearman rank correlation (invariant to component-wise monotone transformations), and the Randomized Dependence Coefficient (RDC, \citet{lopezpaz2013randomizeddependencecoefficient}, invariant to component-wise nonlinear transformations).

    \item Explicitness ($R^2$): for each ground-truth factor, we fit a linear regression (or a small MLP) from all learned factors and report the variance-explained $R^2$, averaged over factors. $R^2$-linear measures linear decodability; $R^2$-MLP measures nonlinear decodability. Contrary to MCC which is sensitive to linear mixing by a matrix, $R^2$ is not.

    \item Disentanglement, Completeness, Informativeness (DCI) \citep{eastwood2018framework}: fits gradient boosted trees from learned factors to each ground-truth factor, extracts feature importances, and computes entropy-based scores. \emph{Disentanglement} measures whether each learned dimension captures at most one ground-truth factor; \emph{Completeness} measures whether each ground-truth factor is captured by at most one learned dimension; \emph{Informativeness} is the predictive $R^2$.
\end{itemize}

\subsection{Synthetic experiments}

\parhead{Data generation.}
We generate synthetic f-SCMs by first sampling a random bipartite DAG $G$ over $n$ variables and $m$ factors. Edge probabilities are set so that the graph is moderately sparse (probability of an edge set to $0.1$). When anchors are desired, the graph generator enforces Assumption~\ref{assu:in_out_anchor} by checking whether they are verified, otherwise the graph is resampled. When variables are organized into layers, factors are constrained to have parents in strictly earlier layers than children.

For each DAG, variable-to-factor mechanisms $F_Z$ are either linear (weights drawn uniformly from $[\pm0.25,\pm1]$) or nonlinear (two-layer neural networks with $\tanh$ activation and standard normal weights, i.e., each weight drawn i.i.d. from $\mathcal{N}(0,1)$).
Factor-to-variable mechanisms $F_X$ follow the same scheme. When the setting is noiseless, factors and variables are computed deterministically; otherwise, additive Gaussian noise with scale $0.4$ is added to variables (factors remain noiseless).

Data are generated under hard stochastic interventions: a certain amount of interventional regimes target $1$ to$3$ variables each, plus one observational regime. Each dataset contains $n=10\,000$ samples. We generate $5$ independent DAG instances per setting to account for graph variability.

\parhead{Evaluation.}
We evaluate UCAD across settings of variable difficulty (all with continuous variables and enforced anchors). We sweep across a few learning rates, dual learning rate, penalty multiplier, and seeds. In the following tables, the DAGness constraint violation may be negative because we target $\mC^{\text{DAG}}\le 10^{-6}$, and anything respecting the constraint will have a negative violation.

\begin{table*}[htb]
\centering
\caption{Summary of results: latent factors reconstruction (MCC-Pearson on test set, mean $\pm$ std, over $5$ DAGs, of the model achieving the lowest feasible objective after hyperparameter search on validation set).}
\label{tab:synthetic_results_mcc}
\centering
\begin{tabular}{|c|c|c|c|c|c|}
\hline
\multicolumn{2}{|c|}{Are $F_Z, \hat F_Z$ linear?} & $\checkmark$ & $\checkmark$ & $\times$ & $\times$ \\
\multicolumn{2}{|c|}{Are $F_X, \hat F_X$ linear?} & $\checkmark$ & $\times$ & $\checkmark$ & $\times$ \\
\hline
\multirow{3}{*}{noiseless, layered} & 16-3  & $0.88\pm 0.10$ &  & $0.80\pm 0.16$ &  \\
                                    & 64-6  & $0.80\pm 0.10$ &  & $0.93\pm 0.08$ &  \\
                                    & 128-8 & $0.93\pm 0.05$ &  &  &  \\
\hline
\multirow{2}{*}{noiseless, not layered} & 16-3  & $0.87\pm 0.16$ &  & $0.73\pm 0.13$ &  \\
                                        & 64-6  & $0.81\pm 0.06$ &  & $0.82\pm 0.03$ &  \\
\hline
\multirow{2}{*}{noisy, not layered} & 16-3  &  & $0.87\pm0.16$ & $0.82\pm 0.15$ & $0.64\pm0.04$ \\
                                    & 64-6  &  & $0.76\pm0.08$ & $0.78\pm 0.11$ & $0.62\pm0.09$ \\
\hline
\end{tabular}
\end{table*}

\vfill

\textbf{(a) Linear noiseless, layered} ($n\in\{16,64,128\}$, $m\in\{3,6,8\}$, $L=3$ layers). Both $F_Z$ and $F_X$ are linear, no noise is added to the variables, and variables are organized into layers (e.g., $[4,8,4]$ for $n=16$).
\begin{table}[H]
\centering
\caption{Aggregated test metrics on latent factors reconstruction (mean $\pm$ std, over $5$ DAGs, of the model achieving the lowest feasible objective among 80 configs).}
\label{tab:synthetic_results_a}
\begin{tabular}{lccc}
\toprule
 & 16 variables, 3 factors & 64 variables, 6 factors & 128 variables, 8 factors \\
 Metric & [4,8,4] & [16,32,16] & [32,64,32]\\
\midrule
MCC (Pearson) & $0.8826 \pm 0.1039$ & $0.7978 \pm 0.1033$ & $0.9294 \pm 0.0516$ \\
MCC (Spearman) & $0.8793 \pm 0.1069$ & $0.7932 \pm 0.1040$ & $0.9284 \pm 0.0516$ \\
MCC (RDC) & $0.8842 \pm 0.1025$ & $0.8376 \pm 0.0785$ & $0.9631 \pm 0.0414$ \\
\midrule
$R^2$ Linear & $1.0000 \pm 0.0000$ & $0.8424 \pm 0.1048$ & $0.9432 \pm 0.0488$ \\
$R^2$ MLP & $0.9973 \pm 0.0003$ & $0.8389 \pm 0.1028$ & $0.9347 \pm 0.0516$ \\
\midrule
DCI Info. & $0.9962 \pm 0.0048$ & $0.8279 \pm 0.1113$ & $0.9336 \pm 0.0546$ \\
DCI Disent. & $0.7638 \pm 0.1623$ & $0.6542 \pm 0.1544$ & $0.8774 \pm 0.0730$ \\
DCI Compl. & $0.8354 \pm 0.1051$ & $0.6873 \pm 0.1388$ & $0.8829 \pm 0.0679$ \\
\midrule
NLL & $0.4647 \pm 0.0023$ & $0.3360 \pm 0.3988$ & $0.0898 \pm 0.3083$ \\
DAG Violation & $-5.48\cdot 10^{-7} \pm 2.44\cdot 10^{7}$ & $-6.33\cdot 10^{-7} \pm 1.59\cdot 10^{-7}$ & $-6.18\cdot 10^{-7} \pm 1.44\cdot 10^{-7}$ \\
\bottomrule
\end{tabular}
\end{table}

\textbf{(b) Linear noiseless, not layered} ($n\in\{16,64\}$, $m\in\{3,6\}$). Same as (a) but without layer structure. We employ the spectral radius DAG constraint adapted for $Q\diamond R^\top$.
\begin{table}[H]
\centering
\caption{Aggregated test metrics on latent factors reconstruction (mean $\pm$ std, over $5$ DAGs, of the model achieving the lowest feasible objective among 80 configs).}
\label{tab:synthetic_results_b}
\begin{tabular}{lcc}
\toprule
 Metric & 16 variables, 3 factors & 64 variables, 6 factors \\
\midrule
MCC (Pearson) & $0.8681 \pm 0.1615$ & $0.8104 \pm 0.0577$ \\
MCC (Spearman) & $0.8689 \pm 0.1606$ & $0.8084 \pm 0.0577$ \\
MCC (RDC) & $0.8837 \pm 0.1425$ & $0.8289 \pm 0.0495$ \\
\midrule
$R^2$ Linear & $1.0000 \pm 0.0000$ & $0.8224 \pm 0.0638$ \\
$R^2$ MLP & $0.9971 \pm 0.0007$ & $0.8145 \pm 0.0639$ \\
\midrule
DCI Info. & $0.9993 \pm 0.0005$ & $0.8123 \pm 0.0664$ \\
DCI Disent. & $0.7481 \pm 0.2056$ & $0.7038 \pm 0.1081$ \\
DCI Compl. & $0.8791 \pm 0.1601$ & $0.7528 \pm 0.0816$ \\
\midrule
NLL & $-0.4133 \pm 0.2107$ & $0.2373 \pm 0.1827$ \\
DAG Violation & $-7.15\cdot 10^{-7} \pm 3.04\cdot 10^{-7}$ & $-9.17\cdot 10^{-7} \pm 1.67\cdot 10^{-7}$ \\
\bottomrule
\end{tabular}
\end{table}

\textbf{(c) Nonlinear $F_Z$, noiseless, layered} ($n\in\{16,64\}$, $m\in\{3,6\}$, $L=3$ layers). The learned model uses a 2-layer MLP for $F_Z^\Theta$ while $F_X^\Theta$ remains linear. No noise is added to variables.
\begin{table}[H]
\centering
\caption{Aggregated test metrics on latent factors reconstruction (mean $\pm$ std, over $5$ DAGs, of the model achieving the lowest feasible objective among 60 configs).}
\label{tab:synthetic_results_c}
\begin{tabular}{lcc}
\toprule
 Metric & 16 variables, 3 factors & 64 variables, 6 factors \\
\midrule
MCC (Pearson) & $0.8049 \pm 0.1594$ & $0.9317 \pm 0.0818$ \\
MCC (Spearman) & $0.8037 \pm 0.1603$ & $0.9310 \pm 0.0827$ \\
MCC (RDC) & $0.8249 \pm 0.1430$ & $0.9669 \pm 0.0482$ \\
\midrule
$R^2$ Linear & $0.8004 \pm 0.1630$  & $0.9379 \pm 0.0743$ \\
$R^2$ MLP & $0.8002 \pm 0.1617$ & $0.9352 \pm 0.0731$ \\
\midrule
DCI Info. & $0.7998 \pm 0.1633$ & $0.9347 \pm 0.0779$ \\
DCI Disent. & $0.6724 \pm 0.2686$ & $0.8985 \pm 0.1180$ \\
DCI Compl. & $0.7067 \pm 0.2431$ & $0.9162 \pm 0.0966$ \\
\midrule
NLL & $0.6791 \pm 0.2335$ & $0.4676 \pm 0.3000$ \\
DAG Violation & $-5.49\cdot 10^{-7} \pm 2.25\cdot 10^{-7}$ & $-6.35\cdot 10^{-7} \pm 1.72\cdot 10^{-7}$ \\
\bottomrule
\end{tabular}
\end{table}

\textbf{(d) Nonlinear $F_Z$, noiseless, not layered} ($n\in\{16,64\}$, $m\in\{3,6\}$). Same as (c) but without layer structure.
\begin{table}[H]
\centering
\caption{Aggregated test metrics on latent factors reconstruction (mean $\pm$ std, over $5$ DAGs, of the model achieving the lowest feasible objective among 60 configs).}
\label{tab:synthetic_results_d}
\begin{tabular}{lcc}
\toprule
 Metric & 16 variables, 3 factors & 64 variables, 6 factors \\
\midrule
MCC (Pearson) & $0.7303 \pm 0.1360$ & $0.8232 \pm 0.0335$ \\
MCC (Spearman) & $0.7424 \pm 0.1347$ & $0.8246 \pm 0.0348$ \\
MCC (RDC) & $0.8139 \pm 0.1449$ & $0.8649 \pm 0.0577$ \\
\midrule
$R^2$ Linear & $0.7886 \pm 0.1632$ & $0.8493 \pm 0.0534$ \\
$R^2$ MLP & $0.8015 \pm 0.1572$ & $0.8500 \pm 0.0575$ \\
\midrule
DCI Info. & $0.8141 \pm 0.1519$ & $0.8490 \pm 0.0605$ \\
DCI Disent. & $0.6043 \pm 0.2800$ & $0.7240 \pm 0.0643$ \\
DCI Compl. & $0.6990 \pm 0.3103$ & $0.7774 \pm 0.0528$ \\
\midrule
NLL & $0.6493 \pm 0.3245$ & $0.8417 \pm 0.2454$\\
DAG Violation & $-5.28\cdot 10^{-7} \pm 3.13\cdot 10^{-7}$ & $-6.18\cdot 10^{-7} \pm 2.57\cdot 10^{-7}$ \\
\bottomrule
\end{tabular}
\end{table}

\textbf{(e) Nonlinear $F_Z$, noisy, not layered} ($n\in\{16,64\}$, $m\in\{3,6\}$). Same as (d) but with additive Gaussian noise (scale $0.4$) on variables.
\begin{table}[H]
\centering
\caption{Aggregated test metrics on latent factors reconstruction (mean $\pm$ std, over $5$ DAGs, of the model achieving the lowest feasible objective among 60 configs).}
\label{tab:synthetic_results_e}
\begin{tabular}{lcc}
\toprule
 Metric & 16 variables, 3 factors & 64 variables, 6 factors \\
\midrule
MCC (Pearson) & $0.8219 \pm 0.1454$ & $0.7808 \pm 0.1056$ \\
MCC (Spearman) & $0.8212 \pm 0.1440$ & $0.7801 \pm 0.1056$ \\
MCC (RDC) & $0.8894 \pm 0.1030$ & $0.8027 \pm 0.0939$ \\
\midrule
$R^2$ Linear & $0.7948 \pm 0.1453$ & $0.7907 \pm 0.1268$ \\
$R^2$ MLP & $0.8069 \pm 0.1485$ & $0.7981 \pm 0.1241$ \\
\midrule
DCI Info. & $0.8138 \pm 0.1491$ & $0.7925 \pm 0.1279$ \\
DCI Disent. & $0.6951 \pm 0.1992$ & $0.6768 \pm 0.1407$ \\
DCI Compl. & $0.7851 \pm 0.1620$ & $0.7333 \pm 0.1111$ \\
\midrule
NLL & $1.3214 \pm 0.0647$ & $1.3704 \pm 0.0359$ \\
DAG Violation & $-8.28\cdot 10^{-7} \pm 1.43\cdot 10^{-7}$ & $-6.16\cdot 10^{-7} \pm 2.32\cdot 10^{-7}$ \\
\bottomrule
\end{tabular}
\end{table}

\textbf{(f) Nonlinear $F_X$, noisy, not layered} ($n\in\{16,64\}$, $m\in\{3,6\}$). The learned model uses a 2-layer MLP for $F_X^\Theta$ while $F_Z^\Theta$ remains linear.
\begin{table}[H]
\centering
\caption{Aggregated test metrics on latent factors reconstruction (mean $\pm$ std, over $5$ DAGs, of the model achieving the lowest feasible objective among 60 configs).}
\label{tab:synthetic_results_f}
\begin{tabular}{lcc}
\toprule
 Metric & 16 variables, 3 factors & 64 variables, 6 factors \\
\midrule
MCC (Pearson) & $0.8650 \pm 0.1565$ & $0.7616 \pm 0.0808$ \\
MCC (Spearman) & $0.8660 \pm 0.1563$ & $0.7725 \pm 0.0838$ \\
MCC (RDC) & $0.8791 \pm 0.1405$ & $0.8571 \pm 0.0410$ \\
\midrule
$R^2$ Linear & $0.8581 \pm 0.1570$ & $0.8624 \pm 0.0931$ \\
$R^2$ MLP & $0.8574 \pm 0.1553$ & $0.8752 \pm 0.0935$ \\
\midrule
DCI Info. & $0.8558 \pm 0.1560$ & $0.8346 \pm 0.0572$ \\
DCI Disent. & $0.7912 \pm 0.2474$ & $0.7101 \pm 0.0890$ \\
DCI Compl. & $0.8332 \pm 0.1979$ & $0.7893 \pm 0.0610$ \\
\midrule
NLL & $1.2294 \pm 0.0313$ & $1.2993 \pm 0.0531$ \\
DAG Violation & $-4.21\cdot 10^{-7} \pm 3.62\cdot 10^{-7}$  & $-6.39\cdot 10^{-7} \pm 1.68\cdot 10^{-7}$ \\
\bottomrule
\end{tabular}
\end{table}

\textbf{(g) Both $F_Z,F_X$ nonlinear, noisy, not layered} ($n\in\{16,64\}$, $m\in\{3,6\}$). Both $F_Z$ and $F_X$ are nonlinear MLPs. The learned model uses 2-layer MLPs for both $F_Z^\Theta$ and $F_X^\Theta$ (hidden dim $16$).
\begin{table}[H]
\centering
\caption{Aggregated test metrics on latent factors reconstruction (mean $\pm$ std, over $5$ DAGs, of the model achieving the lowest feasible objective among 60 configs).}
\label{tab:synthetic_results_g}
\begin{tabular}{lcc}
\toprule
 Metric & 16 variables, 3 factors & 64 variables, 6 factors \\
\midrule
MCC (Pearson) & $0.6366 \pm 0.0441$ & $0.6248 \pm 0.0892$ \\
MCC (Spearman) & $0.6247 \pm 0.0472$ & $0.6453 \pm 0.0936$ \\
MCC (RDC) & $0.6955 \pm 0.0338$ & $0.7916 \pm 0.0531$ \\
\midrule
$R^2$ Linear & $0.6237 \pm 0.0067$ & $0.6036 \pm 0.0695$ \\
$R^2$ MLP & $0.6524 \pm 0.0137$ & $0.6770 \pm 0.0647$ \\
\midrule
DCI Info. & $0.6566 \pm 0.0121$ & $0.6871 \pm 0.0623$ \\
DCI Disent. & $0.5131 \pm 0.0682$ & $0.5523 \pm 0.1130$ \\
DCI Compl. & $0.6043 \pm 0.1008$ & $0.6324 \pm 0.0958$ \\
\midrule
NLL & $1.2558 \pm 0.0622$ & $1.4137 \pm 0.0449$ \\
DAG Violation & $-2.46\cdot 10^{-7} \pm 4.86\cdot 10^{-7}$ & $-7.05\cdot 10^{-7} \pm 2.87\cdot 10^{-7}$ \\
\bottomrule
\end{tabular}
\end{table}

In the case where we expect the f-SCM to be recoverable up to linear reparametrization (i.e., when $F_X$ and/or $F_Z$ are affine), a perfect MCC-Pearson of $1$ is theoretically attainable.
We observe that the performance of our model is fairly invariant to the number of variables ($16, 64$ or $128$) and factors ($3, 6$ or $8$), to whether the DAG has a layered structure or not, and to whether additive Gaussian noise is added on the variables.
When averaging over all these cases (13 experiments) we measure an average MCC (Pearson) of $0.83$. Importantly, the variability we observe between all cases does not seem to depend on whether both $F_X,F_Z$ are linear or only one of them.

In the case where both $F_X,F_Z$ are non-linear, we can only recover the factors up to an arbitrary bijective transformation, which results in a drop of MCC-Pearson. The MCC-RDC score remains high, at $0.79$ in the 64 variables case, and $0.70$ in the 16 variables case.
We note however that DCI metrics and $R^2$-MLP, which are also metrics that should not be sensitive to non-linear transformations, are not matching this trend. Since the variance of MCC-RDC remains low, it does not look like a seed artefact, and experiments on more graphs are needed to understand where this discrepancy comes from.
\vfill

\paragraph{Ablations.}
We report additional experiments.

\textbf{(h) Both $F_Z,F_X$ are linear, noiseless, not layered} ($n=64$, $m=6$). The model is trained only on the observational regime, and tested on held-out observational samples.
\begin{table}[H]
\centering
\caption{Aggregated test metrics on latent factors reconstruction (mean $\pm$ std, over $5$ DAGs, of the model achieving the lowest feasible objective among 60 configs).}
\label{tab:synthetic_results_h}
\begin{tabular}{lc}
\toprule
Metric & 64 variables, 6 factors \\
\midrule
MCC (Pearson) & $0.8330 \pm 0.0751$ \\
MCC (Spearman) & $0.8300 \pm 0.0755$ \\
MCC (RDC) & $0.8617 \pm 0.0571$ \\
\midrule
$R^2$ Linear & $0.8874 \pm 0.0351$ \\
$R^2$ MLP & $0.8688 \pm 0.0613$ \\
\midrule
DCI Info. & $0.8624 \pm 0.0661$ \\
DCI Disent. & $0.7093 \pm 0.1152$ \\
DCI Compl. & $0.7423 \pm 0.0984$ \\
\midrule
NLL & $0.0988 \pm 0.4957$ \\
DAG Violation & $-8.31\cdot 10^{-7} \pm 1.51\cdot 10^{-7}$ \\
\bottomrule
\end{tabular}
\end{table}
We observe performances similar to the case where the model is also trained on interventional regimes. This setting may be simple, but still clearly hints at a good robustness of the model when only trained on observational data.

\vfill
\textbf{(i) Both $F_Z,F_X$ are linear, noiseless, not layered} ($n=64$, $m=6$). The model is trained on a subset of interventional regimes and tested on new ones. Following the code of \citet{lopezLargeScaleDifferentiableCausal2022}, each interventional regime has a fixed target set of 1 to 3 observed variables. Targeted variables are replaced by a new $\text{Normal}(0,1)$ draw, and non-target variables are simulated by the equations of the f-SCM. In total, 64 interventional regimes are created. Training samples are generated by the observational regime and 46 interventional regimes, validation is computed on 6 separate regimes, while test metrics are computed on 12 separate regimes.
\begin{table}[H]
\centering
\caption{Aggregated test metrics on latent factors reconstruction (mean $\pm$ std, over $5$ DAGs, of the model achieving the lowest feasible objective among 60 configs).}
\label{tab:synthetic_results_i}
\begin{tabular}{lc}
\toprule
Metric & 64 variables, 6 factors \\
\midrule
MCC (Pearson) & $0.8502 \pm 0.0987$ \\
MCC (Spearman) & $0.8471 \pm 0.0989$ \\
MCC (RDC) & $0.8894 \pm 0.0692$ \\
\midrule
$R^2$ Linear & $0.8715 \pm 0.0830$ \\
$R^2$ MLP & $0.8704 \pm 0.0886$ \\
\midrule
DCI Info. & $0.8648 \pm 0.0942$ \\
DCI Disent. & $0.7590 \pm 0.1489$ \\
DCI Compl. & $0.7798 \pm 0.1319$ \\
\midrule
NLL & $0.2530 \pm 0.3849$ \\
DAG Violation & $-7.36\cdot 10^{-7} \pm 2.04\cdot 10^{-7}$ \\
\bottomrule
\end{tabular}
\end{table}
We observe performances similar to the case where the model is tested on held-out observational regimes. This setting may be simple, but still clearly hints at a good robustness of the abstractions when tested on new interventional regimes.

\vfill

\subsection{Interpretability}

We consider the problem of classifying whether an integer $n\in\{0,\dots,999,999\}$ is divisible by $6$.
The input is the base-$10$ representation $(d_0,\dots,d_{K-1})\in\{0,\dots,9\}^K$ with $K=6$ digits (from ones to hundred-thousands), and the label is $y=\vone[6\mid n]\in\{0,1\}$.
Each digit $d_k$ is encoded as a $10$-dimensional one-hot vector, giving an input dimension of $K\times 10=60$.

We train a fully-connected MLP with architecture $[60,32,32,32,1]$: three hidden layers of $32$ neurons with ReLU activations, followed by a single output logit trained with binary cross-entropy loss.
Training uses the Adam optimizer (learning rate $10^{-3}$, batch size $128$) with early stopping (patience $25$, up to $200$ epochs) on an $80/20$ train/test split of all $10^6$ integers. The network reaches an almost perfect test accuracy.

We view the trained MLP as a low-level SCM $\mL$ whose endogenous variables $\mX$ are the post-ReLU activations of all hidden neurons, giving $n=|\mX|=32+32+32=96$ variables.
The causal graph $G_L$ of $\mL$ is determined by the network connectivity: every neuron in hidden layer $\ell$ is a parent of every neuron in hidden layer $\ell+1$ (and the last hidden layer feeds into the logit), forming a strictly layered DAG with layer sizes $[32,32,32]$. The mechanisms $\mF_L$ are the trained affine maps composed with ReLU (or identity for the logit), and the context $\mU_L$ corresponds to the input $n$ (encoded as one-hot).
Since the network is deterministic, the context (here, the input of the network) fully determines all activations.

To collect interventional data, we perform hard interventions on each of the $96$ hidden neurons individually (excluding the output logit).
For each neuron $X_j$ in hidden layer $\ell$, we clamp its post-ReLU activation to a value resampled from its empirical training distribution and record the resulting activations of all downstream neurons.
Concretely, for a subsample of $500$ inputs per intervention, we perform a forward pass where $X_j$ is replaced by a draw from its marginal: $X_j\leftarrow \tilde x_j$, with $\tilde x_j$ sampled uniformly from the training-set values of $X_j$. This yields $96 \times 500 = 48,000$ interventional samples, combined with $200,000$ observational samples (test-set forward passes) for a total of $248,000$ samples.

Among the $248,000$ samples, 80\% are used to train UCAD, and 20\% are used for validation and model selection.
We build a separate held-out observational set of $1,000$ samples for test, on which we report the metrics below.

When applying UCAD we search over $m\in\{2,3\}$ factors, and select the model with the lowest validation NLL among runs that achieved constraint feasibility (DAG, anchor, and sparsity constraints).
We evaluate the learned factors against \emph{positive} concepts: $c_2 = \vone[2\mid n]$, $c_3=\vone[3\mid n]$, $c_6=\vone[6\mid n]$, and $\tilde c_3 = \sum_{k} d_k$ (the sum of all digits, an intermediate quantity involved in the intuitive computation of divisibility by $3$); and \emph{negative} controls: $\vone[5\mid n]$, $\vone[7\mid n]$, $\vone[11\mid n]$.
Results are reported in Table \ref{tab:div6_results} below.

\begin{table}[htbp]
\centering
\caption{Aggregated test metrics assessing the reconstruction of positive and negative concepts by learned latent factors for the divisibility-by-6 task (mean $\pm$ std over 5 trained MLPs, of the UCAD model achieving the lowest feasible objective among 60 configs).}
\label{tab:div6_results}
\begin{tabular}{lc}
\toprule
Metric & 96 variables, 3 factors, [32, 32, 32]\\
\midrule \midrule
NLL & $-0.2955 \pm 0.3008$ \\
DAG Violation & $1.84\cdot 10^{-6} \pm 3.98\cdot 10^{-6}$ \\
\midrule
\midrule
\multicolumn{2}{c}{\textbf{Positive concepts}} \\
MCC (Pearson) & $0.4072 \pm 0.0904$ \\
MCC (Spearman) & $0.4212 \pm 0.0659$ \\
MCC (RDC) & $0.7210 \pm 0.0517$ \\
$R^2$ Linear & $0.2619 \pm 0.0703$ \\
$R^2$ MLP & $0.5948 \pm 0.0305$ \\
\midrule
\multicolumn{2}{l}{\textbf{Per-concept best Pearson correlation (positive concepts)}} \\
Divisibility by 2 & $0.8046 \pm 0.1805$ \\
Divisibility by 3 & $0.0581 \pm 0.0563$ \\
Divisibility by 6 & $0.3688 \pm 0.0892$ \\
Sum of all digits & $0.1098 \pm 0.0367$ \\
\midrule
\multicolumn{2}{l}{\textbf{Per-concept best RDC coefficient (positive concepts)}} \\
Divisibility by 2 & $0.8818 \pm 0.0768$ \\
Divisibility by 3 & $0.3161 \pm 0.1088$ \\
Divisibility by 6 & $0.5451 \pm 0.1323$ \\
Sum of all digits & $0.1575 \pm 0.0320$ \\
\midrule
\midrule
\multicolumn{2}{c}{\textbf{Negative controls}} \\
MCC (Pearson) & $0.0261 \pm 0.0093$ \\
MCC (Spearman) & $0.0284 \pm 0.0093$ \\
MCC (RDC) & $0.0654 \pm 0.0130$ \\
$R^2$ Linear & $0.0022 \pm 0.0016$ \\
$R^2$ MLP & $0.0269 \pm 0.0217$ \\
\midrule
\multicolumn{2}{l}{\textbf{Per-concept best Pearson correlation (negative controls)}} \\
Divisibility by 5 & $0.0614 \pm 0.0316$ \\
Divisibility by 7 & $0.0121 \pm 0.0064$ \\
Divisibility by 11 & $0.0131 \pm 0.0069$ \\
\midrule
\multicolumn{2}{l}{\textbf{Per-concept best RDC coefficient (negative controls)}} \\
Divisibility by 5 & $0.1416 \pm 0.0350$ \\
Divisibility by 7 & $0.0679 \pm 0.0330$ \\
Divisibility by 11 & $0.0292 \pm 0.0045$ \\
\bottomrule
\end{tabular}
\end{table}

We measure a stronger reconstruction of hypothesized concepts than with negative controls (average MCC-RDC of $0.72$ vs. $0.07$ over 5 experiments). In particular, one of the factors has an average Pearson correlation of $0.80$ with $c_2$, which is a strong signal that it is consistently implemented by the network. Reasonably high MCC-RDC coefficients can be observed for $c_6$ and $c_3$, especially compared to negative controls which are not reconstructed as well.
Interestingly, the negative control $c_5$ is reconstructed by one of the factors as much as the sum of all digits $\tilde c_3$ (average MCC-RDC of $0.14$ vs. $0.16$).

These results remain difficult to interpret, keeping in mind that our intuitive method to compute $c_3$ is recursive.
Furthermore, a low agreement with some concept may not necessarily indicate a failure of UCAD, but could simply mean that the DNN implements a different strategy to solve the problem, for example a heuristic that works only up to 1 million that might sometimes rely on divisibility by $5$ for some reason. This illustrates the difficulty of validating such a method: we look for tasks where we expect the DNN to implement a given logic, but our method might unveil that the network implements a different algorithm compared to what was expected. Thus, more work is required to interpret the factors that were learned.

\section{Additional acknowledgment}
The disks from Figure \ref{fig:HL_disks_schema} are freely adapted from Figure 1 of \citet{deleu2023joint}.

\end{document}